\newtheorem{theorem}{Theorem}
\newtheorem{fact}{Fact}
\newtheorem{definition}{Definition}
\newtheorem{proposition}{Proposition}
\newtheorem{claim}{Claim}
\newtheorem{lemma}{Lemma}
\DeclareMathOperator{\sign}{sign}
\newcommand{\hypothesis}{\mathcal{H}}
\newcommand{\explanation}{\mathcal{E}_{h^*}}
\newcommand{\margin}{\mathcal{M}_r(\mathcal{X})}
\newcommand{\features}{\mathcal{X}}
\newcommand{\consistent}{\mathcal{H}_C}
\newcommand{\VC}{\text{VC}}
\newcommand{\Bernoulli}{\mathrm{Bernoulli}}
\newcommand{\region}{\mathcal{R}_r}
\newcommand{\Xcal}{\mathcal{X}}
\newcommand{\Hcal}{\mathcal{H}}
\newcommand{\Ucal}{\mathcal{U}}
\newcommand{\Ncal}{\mathcal{N}}
\newcommand{\Rcal}{\mathcal{R}}
\newcommand{\Ecal}{\mathcal{E}}
\newcommand{\Mcal}{\mathcal{M}}
\newcommand{\PP}{\mathbb{P}}
\newcommand{\RR}{\mathbb{R}}
\newcommand{\inner}[2]{\langle #1,#2 \rangle}
\newcommand{\hide}[1]{}
\begin{document}

% If your paper is accepted and the title of your paper is very long,
% the style will print as headings an error message. Use the following
% command to supply a shorter title of your paper so that it can be
% used as headings.
%
%\runningtitle{I use this title instead because the last one was very long}

% If your paper is accepted and the number of authors is large, the
% style will print as headings an error message. Use the following
% command to supply a shorter version of the authors names so that
% they can be used as headings (for example, use only the surnames)
%
%\runningauthor{Surname 1, Surname 2, Surname 3, ...., Surname n}

\twocolumn[

\aistatstitle{Margin-distancing for safe model explanation}

\aistatsauthor{Tom Yan \And Chicheng Zhang}

\aistatsaddress{Carnegie Mellon University \And University of Arizona} ]

\begin{abstract}
The growing use of machine learning models in consequential settings has highlighted an important and seemingly irreconcilable tension between transparency and vulnerability to gaming. While this has sparked sizable debate in legal literature, there has been comparatively less technical study of this contention. In this work, we propose a clean-cut formulation of this tension and a way to make the tradeoff between transparency and gaming. We identify the source of gaming as being points close to the \emph{decision boundary} of the model. And we initiate an investigation on how to provide example-based explanations that are expansive and yet consistent with a version space that is sufficiently uncertain with respect to the boundary points' labels. Finally, we furnish our theoretical results with empirical investigations of this tradeoff on real-world datasets.
\end{abstract}

\section{INTRODUCTION}
With the increasing use of machine learning models in automating decision making, there is growing concern over the opacity of these models. Such concerns have given rise to laws, such as the European GDPR, which aim to provide a ``Right to Explanation''\citep{wachter2017right, edwards2017slave, selbst2018meaningful}. However, one stumbling block to this solution is the tension between transparency and gaming: greater transparency into the model gives rise to gaming -- individuals strategically misreporting their features to induce desired classification outcomes from the ML model. 

As a result, some government agencies are still to this day reluctant about revealing details on the deployed algorithms. This has in turn lead to Freedom of Information requests, such as those submitted by civil interest groups in the Netherlands, calling for greater transparency \citep{wieringa2020account}, as well as organized movements such as the OpenSCHUFA project~\citep{openschufa}, through which citizens take matters in their own hands and try to crowd-source data in an effort to reverse-engineer the algorithms. %\chicheng{Is it fair to use ``model'' in replacement with ``algorithm'' here?} \note{Sounds good}

In this work, we formalize this tension in a natural, formal model, which to the best of our knowledge, is the \emph{first formal model} capturing the tradeoff between transparency and gaming in machine learning.
%%%%%%%%%%%%%%%%%%%%%%
%%%key paragraph%%%
%%%%%%%%%%%%%%%%%%%%%%

The setting we will study is one where an organization uses model $h^*: \features \to \cbr{-1,+1}$ to perform classification over feature space $\features$ and provides transparency through model explanations. We focus on example-based explanations $\explanation$, which have been found to be one of the most intuitive types of explanations in a recent human study \citep{jeyakumar2020can}, and in particular on prototype-based explanations (e.g $k$-medoid or MMD-critic \citep{kim2016examples}). 

In more detail, the explanation mechanism $\explanation: \Xcal \to 2^{\Xcal}$ will select a representative subset of $\features$ to label and explanations $\{(x, h^{*}(x)) \mid x \in \explanation(\features)\}$ will be released. For example, for loan applications, such explanation could be in the form of past, anonymized (un)successful profiles.

Intuitively, the concern with releasing explanations is that applicants may use the knowledge of the hypothesis class $\Hcal \ni h^*$ along with the explanations to construct the version space (VS), $\consistent = \{h \in \Hcal \mid h(x) = h^{*}(x), \forall x \in \explanation(\features)\}$, to infer $h^*$. If the explanation is ``good'' and allows for ``simulatability'' of $h^*$ \citep{murdoch2019definitions}, then the few models in $\consistent$ would be constrained by the explanations to have very similar predictions on $\Xcal$ as $h^*$. And so, even though the VS does not \emph{directly} identify $h^*$, the VS allows one to estimate $h^*$'s prediction with high certainty. This we will be formalize soon.
%$\explanation(\features)$

To address this issue, we propose \emph{margin-distancing} as a simple and general method that can make the \emph{tradeoff} between transparency and gaming. We show that with margin-distancing it need not be one or the other: it is possible to offer individuals \emph{some idea} of how the model works while still preventing gaming. 

Concretely, given classification models $h^*$ and input example $x$, we use $f^*: \Xcal \to \RR$ to denote a function that outputs an underlying margin score, $h^*(x) = \sign(f^*(x))$, where $\sign(a) = +1$ for $a \geq 0$ and $\sign(a) = -1$ otherwise.
Margin-distancing selects a subset of $\Xcal$ whose margin score $\abs{f^*(x)}$ is greater than some threshold $\alpha$. This is done to induce a sufficiently large $\consistent$ and, as a result, sufficiently low certainty on how $h^*$ predicts to dissuade gaming. 

This approach is compatible with any example-based explanations. We note that our approach is also applicable with local surrogate based methods with bounded fidelity region. Indeed, these methods may be viewed as example-based explanation methods that impart labels for all points within the fidelity regions.

\textbf{Our Contributions:} 

(1) We formalize the tradeoff between transparency and gaming, and propose \emph{margin-distancing} as a way of making this tradeoff. 

(2) We prove that margin-distancing does \emph{monotonically} decreases decision boundary certainty under a uniform prior over homogeneous linear models and spherical feature space. We also give a set of complementary negative results showing that monotonicity does not hold in general.

(3) We evaluate boundary points' certainty using sampling for general model classes. Our empirical studies suggest margin-distancing does reduce boundary certainty in a relatively monotonic fashion, and in some cases, completely monotonically, which would enable binary search as a computationally efficient means of finding the optimal amount of explanations to release.

\section{RELATED WORKS}
\textbf{Transparency vs Gaming:} To the best of our knowledge, there has been only one technical paper~\citep{tsirtsis2020decisions} that examines the tension between explanation and gaming. In this work, an organization focuses on releasing an optimal set of counterfactual explanations $S$ to induce agents to change their reports in a way that maximizes the organization's utility; this work does not focus on examining the tradeoff explored in our paper. Moreover, the key assumption that differs from our setting is that all feature alteration is viewed as being causal. Lastly, in our work, we do not assume that agents can only change to points in $S$ (if possible), but rather to any point $\hat{x}$ in the neighborhood of $x$.

\textbf{Strategic ML:} Similar to most of strategic classification literature \citep{hardt2016strategic, dong2018strategic, kleinberg2020classifiers, chen2018strategyproof}, we assume strategic behavior is gaming. However, different from most, past formulations, agents in our setting do not have \emph{full knowledge} of $h^*$ and have to best respond with only partial knowledge (explanations) of $h^*$.

In the interest of space, we have included further related works on topics including Improvement vs Gaming, Explanation Manipulation in Appendix \ref{sec:add-related-works}.

\section{PROBLEM FORMULATION}
\textbf{Gaming:} We assume all individuals desire to be classified the positive label (e.g ``loan granted'') by $h^*$.
An individual with profile $x$ may use the explanations of $h^*$ to compute and misreport $\hat{x} \neq x$ so as to improve the chance of being classified as the positive label. As is standard in strategic classification, this act of misreporting is referred to as \emph{gaming}~\citep{hardt2016strategic}. 

In face of gaming, the organization wishes to have its predictions be unaffected by the release of explanations $\explanation(\features)$: $h^*(\hat{x}) = h^*(x)$, $\forall x \in \mathcal{X}$. 
%\sign(

For our analysis, we first assume that applicants cannot report arbitrary profiles -- otherwise everyone will simply report some $x \in \explanation(\features)$ with a positive label. This assumption may also be motivated as follows: in strategic ML literature, individuals are typically assumed to have a cost function. This naturally induces a region beyond which it is too costly to change to. For modeling purposes, we assume that if an applicant has feature $x$, then $\hat{x} \in \region(x) := \{x' \mid \| x - x' \| < r, x' \in \features \}$, with $r > 0$ being the maximum extent of manipulation. Additionally, we assume that applicants are aware of the model class $\hypothesis \ni h^*$ used by the organization.

Next, since the explanations only allow one to conclude that $h^* \in \consistent$, we need to specify how individuals reason about whether to misreport $x'$ or report $x$ truthfully with only \emph{partial knowledge} about $h^*$. To model this calculus, as is common in Economics, we assume that the individual is Bayesian and calculates the \emph{increased} chance of obtaining positive label under $x'$ instead of $x$ through a prior distribution $\mathcal{U}$ that gets updated to posterior $\Ucal(\Hcal_C)$ (the restriction of $\Ucal$ on the set $\Hcal_C$) with knowledge of $\explanation(\features)$:
\begin{align*}
    \pi(x, x')  & = \Pr_{h \sim \mathcal{U}(\consistent)}(h(x') = 1) \\
    & \quad \quad \quad -  \Pr_{h \sim \mathcal{U}(\consistent)}(h(x) = 1).
\end{align*}
A natural choice for $\mathcal{U}$ is the uniform distribution, though it need not be so. We assume that the organization also knows $\Ucal$.

Naturally, individuals will choose to misreport if there is a sufficiently high certainty of success, since they obtain positive utility for getting the positive label (i.e if $h^*$ is s.t $h^*(\hat{x}) = 1$).  However, in misreporting, they incur negative utility for the cost of manipulation: $x \rightarrow x'$. These two may be weighted linearly in rational agents or nonlinearly in behavioral agents due to risk-aversion \citep{kahneman2013prospect}. Following the formal model of the rationality of crime as introduced by Becker \citep{becker1968crime}, we abstract this away by assuming that there is some threshold $\kappa$ such that if $\pi(x, x') \leq \kappa$, the individual is too risk-averse to misreport $\hat{x} = x'$: the cost of manipulation offsets the increased likelihood of obtaining positive utility through positive classification. 

This brings us to our main insight: we only need $\consistent$ to be sufficiently ambiguous near the decision boundary because \emph{only} individuals with points near the boundary can misreport in a way that flips $h^*$'s prediction.

%And with slight overloading of notation,
%denote that $(x, x')$ is a 
Formally, define the set of 
\emph{boundary points} to be all $x$'s where such a label flip is possible: $\Ncal_r(\Xcal) := \{x \in \features \mid \exists x' \in \region(x) \land h^*(x') \neq h^*(x) \}$. 
Similarly, we define \emph{boundary pairs} to be pairs $(x, x')$ that are within a distance of $r$, but predicted differently by $h^*$; formally, $\Mcal_r(\Xcal) := \cbr{ (x,x') \in \Xcal^2 \mid x' \in \region(x) \land h^*(x') \neq h^*(x) }$. Observe that $\Mcal_r(\Xcal) \subset \Ncal_r(\Xcal)^2$.

%to measure the distance from the decision boundary and choose to omit
%explanations whose distance under $\Lambda$ is

\textbf{Margin-distancing:} To make it difficult to infer the decision boundary through $\consistent$, it is natural to remove explanations that are close to the decision boundary. This gives rise to our approach of \emph{margin-distancing}. We will designate some indicator function $\Lambda_{\alpha}$ for choosing explanations, which evaluates to $1$ iff the examples' classification margin score is greater than cutoff $\alpha$; formally, $\Ecal_{h^*}(\Xcal, \alpha) = \cbr{x \in \Xcal: \Lambda_{\alpha}(x) = 1}$.
Note that $\consistent$ is a function of $\alpha$, since $\consistent$ is a function of the explanations, which are in turn a function of $\alpha$. Intuitively, a big $\alpha$ that only retains explanations with large margins would decrease \emph{boundary certainty}, which we define as  $\max_{(x,x') \in \Mcal_r(\Xcal)}\pi(x,x')$.

\textbf{Policy Goals:} Herein lies the tradeoff for the organization:

1) Provide explanation $\Ecal_{h^*}(\Xcal, \alpha)$ such that the boundary certainty is made sufficiently low: $\max_{(x,x') \in \Mcal_r(\Xcal)}\pi(x,x') \leq \kappa$. This makes all individuals $x \in \Xcal$ too risk-averse to misreport $\hat{x} \in \Rcal_r(x)$ with $h^*(\hat{x}) \neq h^*(x)$, thus preventing gaming.

%ensuring manipulation-proofness from the organization's perspective.
%$x$. That is, $\alpha$ is large enough such that

%: $\sign(h^*(\hat{x})) = \sign(h^*(x))$  $\forall x \in \mathcal{X}$

2) The explanation provided $\Ecal_{h^*}(\Xcal, \alpha)$ is as transparent as possible. That is, $\alpha$ is as small as possible to retain as many explanations from the full set of explanations as possible. Naturally, in our setting, we define transparency to be the amount of explanations that remain after margin-distancing.%\chicheng{If we omit more explanations, they become less transparent?}

The technical problem we study is:

%%%%%%%%%%%%%%%%%%%%%%
%%%key paragraph%%%
%%%%%%%%%%%%%%%%%%%%%%

\begin{quote}
    How can we search for the smallest threshold $\alpha$ possible such that $\max_{(x,x') \in \Mcal_r(\Xcal)}\pi(x,x') \leq \kappa$, which is needed to prevent gaming?
\end{quote}

\begin{figure*}
\begin{subfigure}[b]{0.332\textwidth}
    \includegraphics[width=\linewidth]{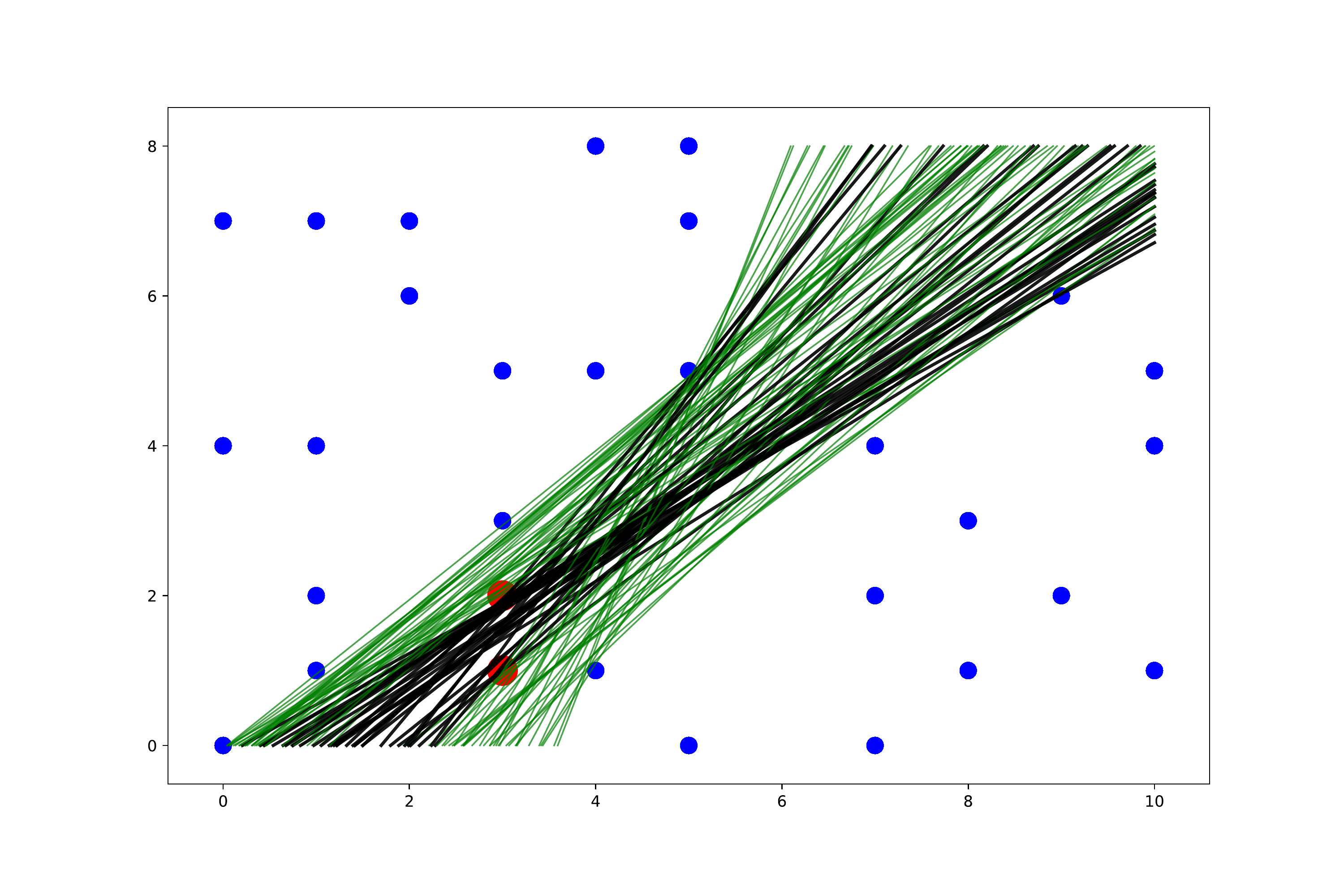}
\end{subfigure}%
\begin{subfigure}[b]{0.332\textwidth}
    \centering
    \includegraphics[width=\linewidth]{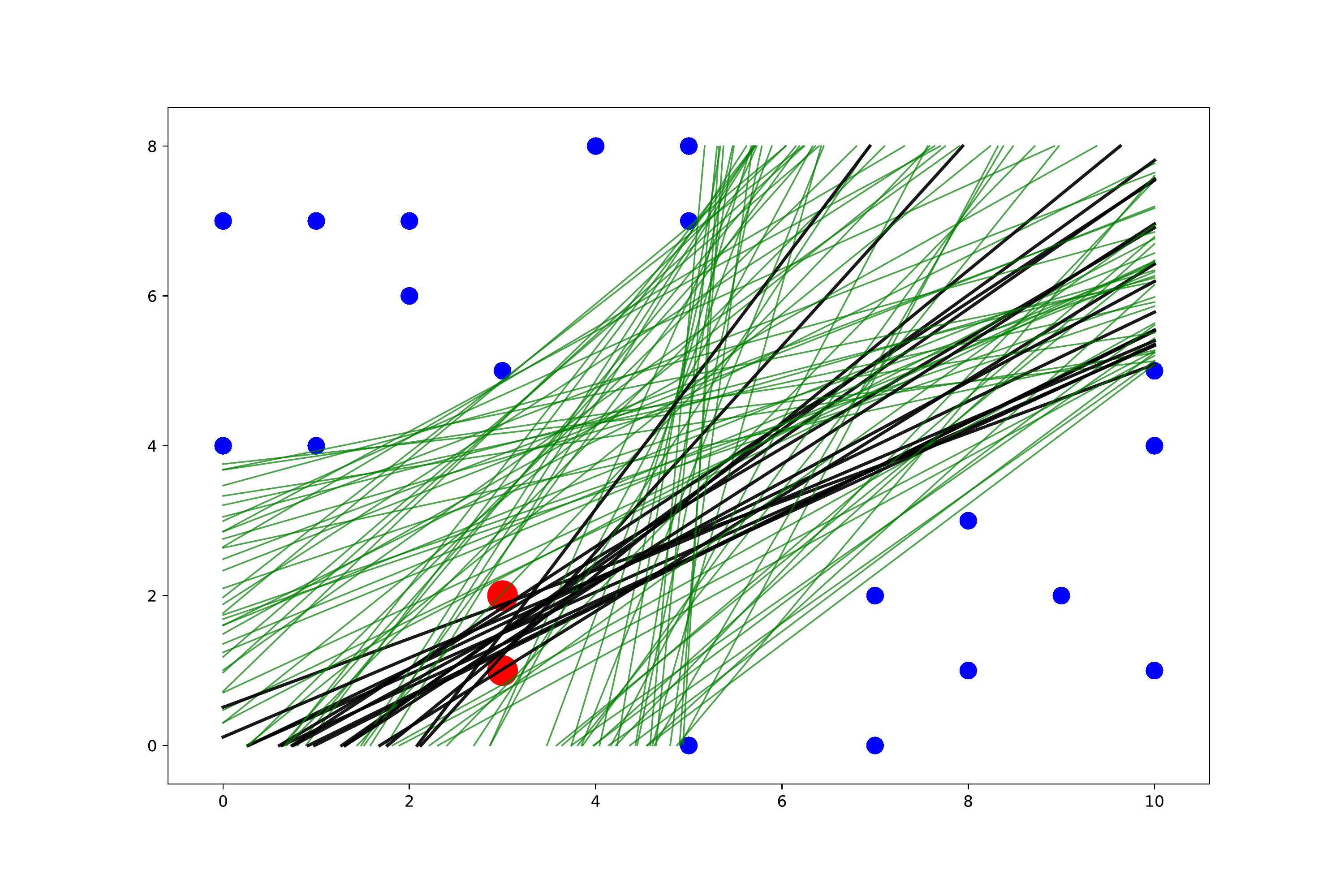}
\end{subfigure}%
\begin{subfigure}[b]{0.332\textwidth}
    \includegraphics[width=\linewidth]{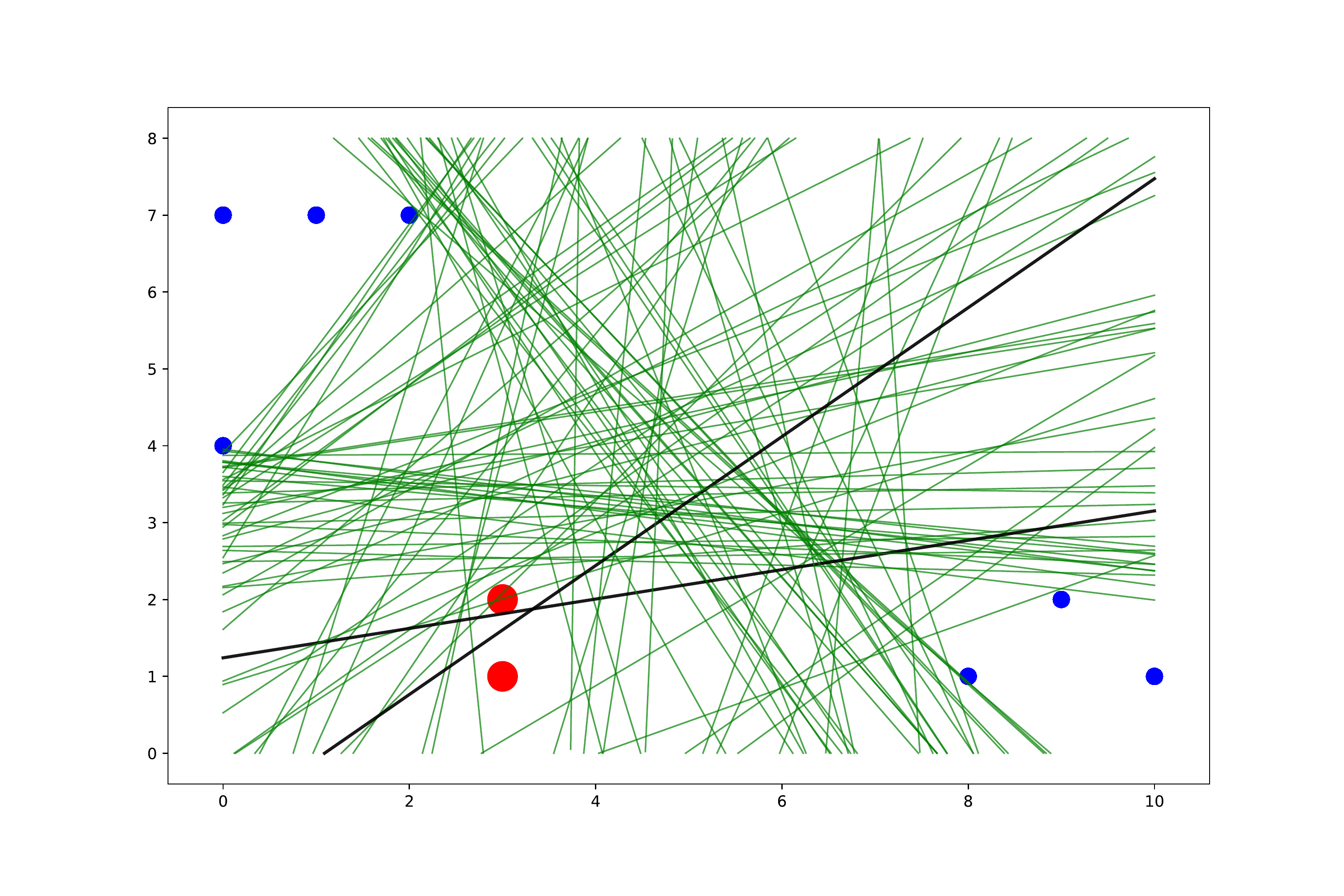}
\end{subfigure}%
\caption{Visualization of $\consistent$ in a toy example where the amount of explanations (blue points) is varied ($80, 50, 20$ percent of all explanations is kept). In red is one randomly chosen boundary pair. $100$ lines (green and black) are randomly sampled from $\consistent$; in black are lines that predict the pair like $h^*$ (opposite labels), and green the same.}
\label{fig: 2d_linear}
\end{figure*}

Before we proceed, we obtain some intuition first through a qualitative visualization of $\consistent$ in a toy example, Figure~\ref{fig: 2d_linear}. This figure helps to confirm that allowing explanations with small margins ``boxes in'' the version space too much, and makes models in $\consistent$ too similar to $h^*$. And so, removing explanations with small margin help enlarge $\consistent$ and decrease boundary-certainty.

\textbf{Simple Example:} Next, for a quantitative toy example, consider when $\Xcal = [0, 1]$ and $\Hcal = \{h_w(x) := \sign(x - w) \mid w \in [0, 1]\}$ is the class of 1D thresholds. Let $\Ucal$ be the uniform distribution over $\Hcal$. 
We know then that $w^* \in [x^-, x^+]$, where $x^-$ is the largest negative point in $\explanation(\features)$ and $x^+$ the smallest positive point. Therefore, for $x \in (x^-, x^+)$ and some $x' \in \Rcal_r(x) > x$, we have that $\pi(x, x') = \frac{\min\{x', x^+\} - x}{x^+ - x^-}$. In this case, it is evident that margin-distancing (i.e increasing $x^+$ and decreasing $x^-$) decreases boundary certainty $\pi(x, x')$.

In the section that follow, we study a more general hypothesis class and verify that the intuitive trend of removing information around the decision boundary does make it more difficult to infer the decision boundary, thus reducing boundary certainty.

\section{HOMOGENEOUS LINEAR MODELS}
\label{sec:sphere}

We focus our theoretical study on the property of \emph{monotonicity}, which if true, allows for binary search as an efficient way to compute the optimal $\alpha$. In this section, we identify homogeneous linear models in $\RR^d$,  i.e. $\hypothesis = \cbr{h_w \mid \| w \|_2 = 1}$ (where $h_w := x \mapsto \sign(\inner{w}{x})$), as one setting where margin-distancing monotonically leads to decreased boundary certainty.

For the results that follow, we also assume that individuals have uniform prior $\Ucal$ over $\Hcal$. We will also focus on when the feature space $\Xcal$ is the origin-centered unit sphere, i.e., $\Xcal = \cbr{x \in \RR^d \mid \| x \|_2 = 1}$, which means that $r \leq 2$. Intuitively, this corresponds to a normalized dataset with profiles of ``all kinds'', which is not unreasonable for profiles of a general population. We handle more general settings in the following section. 

%In the case of
For linear models, it is natural to take $\Lambda$ to be a function of the margin of a point with respect to $w^*$ (the parameter of $h^*$):  $\Lambda_{\alpha}(x) = \mathds{1}\{ \abs{\langle w^*, x \rangle} > \alpha\}$, for $\alpha \in [0,1)$. Therefore, for every $\alpha$, its associated set of explanations is $\Ecal_{h^*}(\Xcal, \alpha) = \cbr{x \in \Xcal: \abs{\langle w^*, x \rangle} > \alpha}$.
%= \cbr{x: \Lambda_\alpha(x) = 1} 

Under this ``nice'' setting, we first show that we can give a simple characterization of the version space in terms of $\alpha$:

%to the size of the set of explanations
%\chicheng{Shall we define $\sign(z)$ formally so that it only takes two values $-1$ and $+1$? Also, I forgot if $\consistent = \{w \in \hypothesis \mid  w \cdot w^* > \sqrt{ 1 - \alpha^2}\}$ instead..}

\begin{lemma}
\label{lemma: circle_vs}
Fix $\alpha \in [0,1)$. Recall that $\consistent = \{h \in \Hcal \mid h(x') = h^*(x'), \;  \forall x' \in \Ecal_{h^*}(\Xcal, \alpha) \} $ is the version space induced by explanation $\Ecal_{h^*}(\Xcal, \alpha)$. $\consistent$ can be equivalently written as:  
\[
\consistent = \cbr{ h_w \mid \| w \|_2 = 1,  w \cdot w^* \geq \sqrt{ 1 - \alpha^2} }.
\]
\end{lemma}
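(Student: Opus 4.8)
The plan is to prove the two inclusions defining the set equality separately, phrasing everything through the correlation $c := w \cdot w^*$. The useful bookkeeping is to decompose any unit vector $v \in \RR^d$ as $v = (v \cdot w^*)\,w^* + v^{\perp}$ with $v^{\perp} \perp w^*$, so that $\|v^{\perp}\|^2 = 1 - (v\cdot w^*)^2$. I expect the ``consistency $\Rightarrow$ correlation bound'' inclusion to be the more delicate of the two, since it requires producing an explicit witness point, while the reverse is a careful but elementary estimate that genuinely uses that $\Xcal$ is the unit sphere.

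For ``$\supseteq$'': assume $c \ge \sqrt{1-\alpha^2}$ and let $x$ be any point of $\Ecal_{h^*}(\Xcal,\alpha)$, i.e. $x \in \Xcal$ with $|\langle w^*, x\rangle| > \alpha$; by the symmetry $x \mapsto -x$ I may assume $t := \langle w^*, x\rangle > \alpha$, so $h^*(x) = +1$. Since $\|x\| = 1$ we get $\|x^{\perp}\| = \sqrt{1-t^2} < \sqrt{1-\alpha^2}$, and the hypothesis on $c$ gives $\|w^{\perp}\| = \sqrt{1-c^2} \le \alpha$. Then Cauchy--Schwarz on the orthogonal components gives
\[
\langle w, x\rangle \;=\; c\,t + \langle w^{\perp}, x^{\perp}\rangle \;\ge\; c\,t - \|w^{\perp}\|\,\|x^{\perp}\| \;>\; \sqrt{1-\alpha^2}\cdot\alpha - \alpha\cdot\sqrt{1-\alpha^2} \;=\; 0 ,
\]
whence $h_w(x) = +1 = h^*(x)$, so $h_w \in \consistent$. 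The main obstacle is precisely this estimate: the naive bound $\langle w,x\rangle \ge c\,t - \|w^{\perp}\|\,\|x\|$ is too lossy and can be negative, so one must use that $x$ lies on the sphere to force $\|x^{\perp}\| < \sqrt{1-\alpha^2}$. Combined with the \emph{strict} threshold $t > \alpha$ in the definition of $\Ecal_{h^*}(\Xcal,\alpha)$, this is also exactly what lets the equality hold with ``$\ge$'' at the boundary value $c = \sqrt{1-\alpha^2}$.

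For ``$\subseteq$'': I would argue the contrapositive — assume $c < \sqrt{1-\alpha^2}$ and exhibit a point of $\Ecal_{h^*}(\Xcal,\alpha)$ on which $h_w$ and $h^*$ disagree. The collinear cases are immediate: $c = 1$ contradicts the assumption, and $w = -w^*$ is refuted by taking $x = w^*$ itself, which is an explanation point since $\alpha < 1$. Otherwise I work in the plane $P = \mathrm{span}(w, w^*)$: choose a unit $u \in P$ with $u \perp w^*$ so that $w = \cos\theta\, w^* + \sin\theta\, u$ with $\theta = \arccos c \in (0,\pi)$, and consider $x_\phi := \cos\phi\, w^* - \sin\phi\, u \in \Xcal$. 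Since $\langle w^*, x_\phi\rangle = \cos\phi$ and $\langle w, x_\phi\rangle = \cos(\theta+\phi)$, the point $x_\phi$ lies in $\Ecal_{h^*}(\Xcal,\alpha)$ with $h^*(x_\phi) = +1$ whenever $\phi < \arccos\alpha$, and has $h_w(x_\phi) = -1$ whenever $\phi > \tfrac{\pi}{2} - \theta$. It then remains to see this angular window is nonempty, which follows from the assumption $\theta > \arccos\sqrt{1-\alpha^2}$ together with the elementary identity $\tfrac{\pi}{2} - \arccos\sqrt{1-\alpha^2} = \arccos\alpha$ (and when $\theta > \tfrac{\pi}{2}$ one can simply take $\phi = 0$). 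Any $\phi$ in the window produces the desired disagreement, so $h_w \notin \consistent$.

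Combining the two inclusions proves the lemma. Throughout, $\alpha \in [0,1)$ is used to guarantee $\sqrt{1-\alpha^2} > 0$ and that $\Ecal_{h^*}(\Xcal,\alpha)$ is nonempty, and the degenerate case $d = 1$ is covered by the collinear discussion above.
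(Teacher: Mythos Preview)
Your proof is correct and follows essentially the same approach as the paper's. Both directions match: for ``$\supseteq$'' you and the paper decompose $w$ and $x$ along $w^*$ and apply Cauchy--Schwarz on the orthogonal components; for ``$\subseteq$'' you both construct a witness in the two-dimensional plane $\mathrm{span}(w,w^*)$, with your angular parametrization $x_\phi = \cos\phi\, w^* - \sin\phi\, u$ being exactly the paper's $x_0 = \gamma w^* - \sqrt{1-\gamma^2}\, w_\perp$ under the change of variables $\gamma = \cos\phi$ (and your window $\phi \in (\tfrac{\pi}{2}-\theta,\arccos\alpha)$ corresponds to the paper's $\gamma \in (\alpha,\beta)$). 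The only cosmetic differences are that the paper first uses $w^* \in \Ecal_{h^*}(\Xcal,\alpha)$ to reduce to $c \ge 0$, whereas you treat the collinear and $\theta > \tfrac{\pi}{2}$ cases explicitly; your handling of the degenerate $d=1$ case is also a nice addition.
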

%Fix $\alpha \in [0,1)$. Recall that $\consistent = \{w \in \hypothesis \mid \sign(w \cdot x') = \sign(w^* \cdot x'), \;  \forall x' \in \Ecal_{h^*}(\Xcal, \alpha) \} $ is the version space induced by explanation $\Ecal_{h^*}(\Xcal, \alpha)$. $\consistent$ can be equivalently written as:  $\consistent = \{w \in \hypothesis \mid  w \cdot w^* \geq \sqrt{ 1 - \alpha^2}\}$.
% \note{I changed this to $>$ from $\geq$ with the changed definition of $\Lambda_{\alpha}$.}

For ease of the exposition of the next theorem, we reason in the spherical counterpart to $\alpha$ and $r$:
\begin{itemize}
    \item Define $\phi$ to be the maximum angle between any $w \in \consistent$ and $w^*$. From Lemma~\ref{lemma: circle_vs}, under explanation $\Ecal_{h^*}(\Xcal, \alpha)$, $\phi = \arccos (\sqrt{1 - \alpha^2}) = \arcsin\alpha$. Intuitively, $\phi$ measures how large $\consistent$ is and shrinks with a bigger set of explanations.
    
    \item Define $\psi = 2 \arcsin(\frac{r}{2}) = \arccos (1 - \frac{r^2}{2})$. The boundary region $\Ncal_r(\Xcal)$ may then be described as the set of points $\{x \in \features \mid \langle w^*, x\rangle \in [-\sin\psi, \sin\psi) \}$.  Intuitively, $\psi$ measures how ``thick'' the boundary region is. Geometrically, this means that $\theta(x, w^*) \in [\pi/2 - \psi, \pi / 2 + \psi]$ for $x$ in the boundary region, where $\theta(x, w^*)$ denotes the angle between $x$ and $w^*$ the decision boundary: $\theta(u,v) = \arccos(\frac{\inner{u}{v}}{\|u\|_2 \| v\|_2}) \in [0, \pi]$.
    
    %Then, since $\|x - x'\|^2 \leq r^2 \Leftrightarrow 2 - 2 x \cdot x' \leq r^2$, the margin region is exactly $\margin$. 

    %polar
    % \lor [3\pi / 2 - \psi, 3\pi / 2 + \psi]
    %
    %\chicheng{If we constrain polar angle to be in $[0,\pi]$, then the range of angle should be only $[\pi/2 - \psi, \pi / 2 + \psi]$ - in any case, we should be consistent.}

    %that is, all $x$ s.t . \chicheng{I would prefer not using the notation $\Lambda$ when talking about prediction / uncertainty on test examples.}
    %\chicheng{Perhaps just use $\cbr{x \mid |\langle w^*, x\rangle | \leq \sin\psi}$?}<<<<<<<

\end{itemize}

Please refer to Figure~\ref{fig:version_space} for an illustration of notation $\phi$ and $\psi$, which we note are both acute by definition, and refer to Table~\ref{tab:sphere} for a summary of definitions. 

\begin{table*}
  \centering
\begin{tabular}{ |l|l| }
  %\hline
  %\multicolumn{2}{|c|}{Table of Notations} \\
  \hline
  $r$ & max extent of manipulation\\
  $\alpha$ & min distance from the margin \\
  $\Pi(\alpha)$ & boundary certainty, $\Pi(\alpha) = \max_{(x, x') \in \margin} \pi_{\alpha}(x, x')$ \\
  $\phi$ & max angle between $w \in \consistent$ and $w^*$; related to $\alpha$ by $\alpha = \sin \phi$ \\
  $\psi$ & max angle: related to $r$ by $\cos \psi =  1 - r^2 / 2$ \\
  \hline
\end{tabular}
\caption{A table of notations that appears in Section~\ref{sec:sphere}.}
\label{tab:sphere}
\end{table*}
%radial 

\begin{figure}
    \centering
    \includegraphics[scale=0.25]{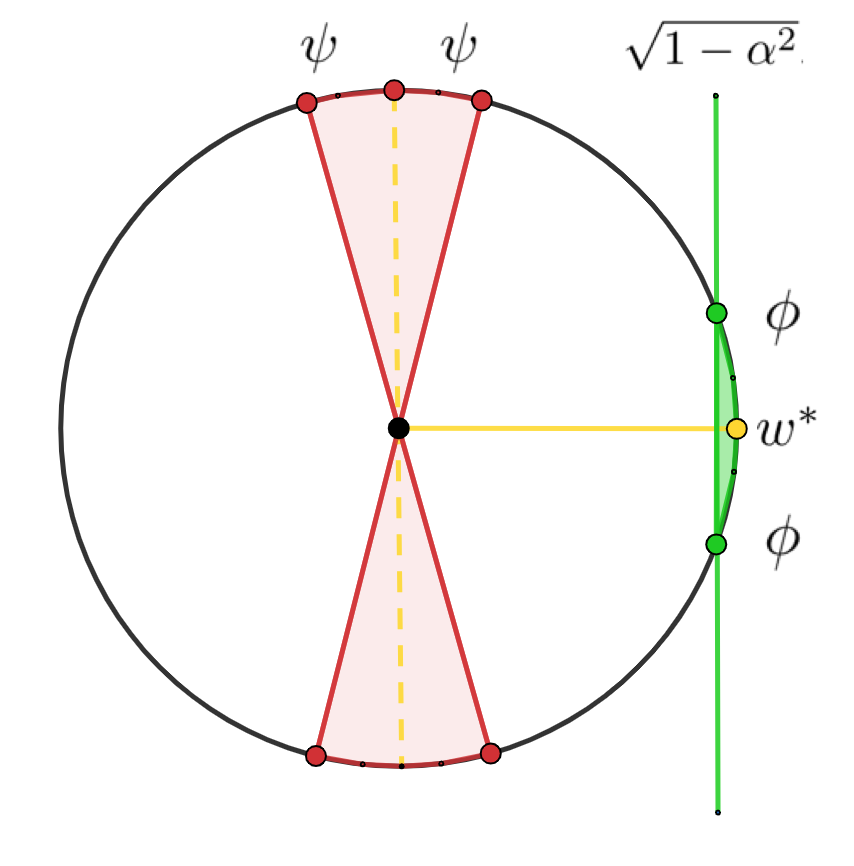}
    \caption{Visualization of the notation: $\consistent$ in green, boundary region in red and true model $w^*$ in yellow.}
    \label{fig:version_space}
\end{figure}

%since $\|x - x'\|^2 \leq r^2 \Leftrightarrow 2 - 2 x \cdot x' \leq r^2$,
%any margin point in
%\note{Proof sketch}

%Next, we will look to upper bound $| \Pr_{h \sim \consistent}(\sign(h(x)) = \sign(h^*(x))) - 1 / 2 |$ to establish the following sufficient condition for non-manipulation: for all points in the margin region (and thus in $\margin$), $| \Pr_{h \sim \consistent}(\sign(h(x)) = \sign(h^*(x))) - 1 / 2 | < \kappa / 2$.

%%%%%%%%%%%%%%%%%%%%%%%%%%%%%%%%%%%
%%%%%%%%%%%%%% REVISIT %%%%%%%%%%%%%%
%%%%%%%%%%%%%%%%%%%%%%%%%%%%%%%%%%%

%We now investigate the behavior of $\Pi(\alpha) = \max_{(x, x') \in \margin} \pi_{\alpha}(x, x')$ in terms of $\alpha$ and $\psi$.
%strictly bigger boundary region
Firstly, it is clear that increasing boundary thickness $\psi$ leads to a larger $\Mcal_r(\Xcal)$, therefore a higher $\max_{(x, x') \in \margin} \pi_{\alpha}(x, x')$. 
We derive an analytical form of $\max_{(x, x') \in \margin} \pi_{\alpha}(x, x')$ below that formalizes this.

%the rate of such increase as function of $\psi$.

%\note{$\Pi(\psi, \phi)$ is related to $\Pi(\alpha)$ in what way.}

%\chicheng{On a second thought, perhaps we can just present ``$\max_{(x, x') \in \margin} \pi_{\alpha}(x, x') = \frac{\int_0^\psi (1 - \frac{\cos^2 \phi}{\cos^2 \theta})^{d / 2 - 1} d\theta}{\int_{0}^{\phi} (1 - \frac{\cos^2 \phi}{\cos^2 \theta})^{d / 2 - 1} d\theta}$'', which is increasing in $\psi$.} -- please see the note on needing to characterize the rate as increasing is clear from definition

% \begin{theorem}
% $\max_{(x, x') \in \margin} \pi_{\alpha}(x, x')$ is increasing in $\psi$ in $[0,\phi]$, with rate of growth proportional to $\frac{(1 - \frac{\cos^2 \phi}{\cos^2 \psi})^{d / 2 - 1}}{2\int_{0}^{\phi} (1 - \frac{\cos^2 \phi}{\cos^2 \theta})^{d / 2 - 1} d\theta}$.
% \end{theorem}

% \chicheng{It might be better to just present the analytical formula of $\max_{(x, x') \in \margin} \pi_{\alpha}(x, x')$ - perhaps present the following theorem instead:}

\begin{theorem}
We have:

\[
\max_{(x, x') \in \margin} \pi_{\alpha}(x, x') = \begin{cases}
\frac{\int_0^{\psi/2} F(\theta) d\theta}{\int_0^\phi F(\theta) d\theta }  & \psi \leq 2\phi \\
1 & \psi > 2\phi,
\end{cases}
\]
where $F(\theta) = (1 - \frac{\cos^2 \phi}{\cos^2 \theta})^{d / 2 - 1}$; therefore, it is strictly increasing for $\psi$ in $[0, 2\phi]$.
\label{thm:max-pi-sphere}
\end{theorem}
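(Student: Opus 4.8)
The plan is to collapse the maximization over boundary pairs into a one-dimensional problem governed by the function $q(\theta) := \Pr_{w \sim \mathcal{U}(\consistent)}(\inner{w}{n} \ge 0)$, where $n$ is any unit vector making angle $\theta$ with $w^*$; this is well defined since, by Lemma~\ref{lemma: circle_vs}, $\consistent$ consists of the $h_w$ with $w$ ranging over the spherical cap of angular radius $\phi$ about $w^*$, which is rotationally symmetric about $w^*$. For a boundary pair we may assume $h^*(x') = +1$ and $h^*(x) = -1$ (the reversed orientation makes $\pi_\alpha(x,x')$ non-positive), and set $n_1 := x'$, $n_2 := -x$, $\theta_i := \theta(n_i, w^*)$. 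Using $\Pr_w(h(x) = -1) = \Pr_w(h(-x) = 1)$ gives the key identity
\[
  \pi_\alpha(x,x') = \Pr_w(h(x')=1) + \Pr_w(h(x)=-1) - 1 = q(\theta_1) + q(\theta_2) - 1 .
\]

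Next I would pin down the feasible region for $(\theta_1,\theta_2)$. The labels force $\theta_1,\theta_2 \le \pi/2$; and since $\| x - x'\|^2 = 2 + 2\cos\theta(n_1,n_2)$, the manipulation bound $\|x - x'\| < r = 2\sin(\psi/2)$ is equivalent to $\theta(n_1,n_2) > \pi - \psi$, so by the spherical triangle inequality $\theta(n_1,n_2) \le \theta_1 + \theta_2$ we get $\theta_1 + \theta_2 > \pi - \psi$. Conversely, every $(\theta_1,\theta_2) \in [0,\pi/2]^2$ with $\theta_1 + \theta_2 > \pi - \psi$ is realized by a genuine boundary pair (place $n_1, n_2$ on a great circle through $w^*$). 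Hence $\max_{(x,x') \in \margin} \pi_\alpha(x,x') = \sup\{ q(\theta_1) + q(\theta_2) - 1 : \theta_1,\theta_2 \in [0,\pi/2],\ \theta_1 + \theta_2 \ge \pi - \psi \}$.

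The third step is to compute $q$ in closed form. Taking $w^* = e_d$ and $n = \cos\theta\, e_d + \sin\theta\, e_1$, I would pass to cylindrical coordinates adapted to the $(e_1,e_d)$-plane, $w = (\rho\sin\omega,\ \sqrt{1-\rho^2}\,u,\ \rho\cos\omega)$ with $\rho \in [0,1]$, $\omega \in (-\pi/2,\pi/2)$ and $u \in S^{d-3}$ (with the obvious modification when $d = 2$, where $F \equiv 1$): the surface measure factors as $\rho(1-\rho^2)^{(d-4)/2}\, d\rho\, d\omega\, d\mu_{S^{d-3}}(u)$, the cap becomes $\{\rho\cos\omega \ge \cos\phi\}$, i.e., $\abs{\omega} \le \phi$ and $\rho \ge \cos\phi/\cos\omega$, and $\{\inner{w}{n} \ge 0\}$ becomes $\{\abs{\omega - \theta} \le \pi/2\}$. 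Integrating out $u$ and then $\rho$ --- the inner integral $\int_{\cos\phi/\cos\omega}^{1} \rho(1-\rho^2)^{(d-4)/2}\, d\rho$ equals exactly $\tfrac{1}{d-2} F(\omega)$ --- yields $q(\theta) = \frac{\int_{\abs{\omega}\le\phi,\ \abs{\omega-\theta}\le\pi/2} F(\omega)\, d\omega}{\int_{-\phi}^{\phi} F(\omega)\, d\omega}$. Specializing $\theta = \pi/2 - s$, where the binding constraint is $\omega \ge -s$, gives $q(\pi/2 - s) = \tfrac12 + \tfrac{1}{2}\frac{G(\min(s,\phi))}{G(\phi)}$ with $G(t) := \int_0^t F$; in particular $q \equiv 1$ on $[0, \pi/2 - \phi]$.

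Finally, $q$ is weakly decreasing (shrinking $\theta$ enlarges the integration interval), so the supremum above is attained on the line $\theta_1 + \theta_2 = \pi - \psi$; and $q$ is concave on $[0,\pi/2]$, since on $[\pi/2 - \phi, \pi/2]$ one has $q''(\theta) = F'(\pi/2 - \theta)/(2G(\phi)) < 0$ --- the base $1 - \cos^2\phi\, \sec^2\omega$ of $F(\omega)$ is strictly decreasing on $(0,\phi)$ and the exponent $d/2 - 1$ is nonnegative --- while on $[0, \pi/2 - \phi]$ it is constant, the pieces gluing in $C^1$ because $F(\phi) = 0$. Writing $\theta_{1,2} = \tfrac{\pi - \psi}{2} \pm t$, concavity gives $q(\tfrac{\pi-\psi}{2} + t) + q(\tfrac{\pi-\psi}{2} - t) \le 2 q(\tfrac{\pi-\psi}{2})$, so the maximum is $2 q(\tfrac{\pi}{2} - \tfrac{\psi}{2}) - 1 = G(\min(\psi/2, \phi))/G(\phi)$, which equals $\frac{\int_0^{\psi/2} F}{\int_0^\phi F}$ when $\psi \le 2\phi$ and $1$ when $\psi > 2\phi$; strict monotonicity on $[0, 2\phi]$ then follows from $\tfrac{d}{d\psi} \int_0^{\psi/2} F(\theta)\, d\theta = \tfrac12 F(\psi/2) > 0$ for $\psi \in (0,2\phi)$, while $\int_0^\phi F$ is a fixed positive constant. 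The main obstacle is Step 3 --- obtaining the measure factorization and the exact form of $q$ and $F$ --- together with the sign computation $F' < 0$ in Step 4: it is precisely the monotonicity of $F$ (equivalently concavity of $q$) that forces the symmetric pair $\theta_1 = \theta_2$ to be extremal. Steps 1--2 are conceptually central but technically light.
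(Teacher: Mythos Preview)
Your proof is correct. It reaches the same destination as the paper's argument but via a differently organized route, and the difference is worth noting.

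Both proofs rest on the same integral computation: projecting $\Ucal(\Hcal_C)$ onto the two-dimensional plane spanned by $w^*$ and the test direction, integrating out the radial variable, and obtaining that the relevant probabilities are governed by $\int F(\omega)\,d\omega$ over appropriate arcs. Your Step~3 is exactly the paper's Claim~1 (together with Fact~2), just phrased in cylindrical rather than polar language.

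Where the arguments diverge is in the optimization over boundary pairs. The paper treats the lower and upper bounds separately: for the lower bound it exhibits the symmetric pair $x = (-\sin(\psi/2),\cos(\psi/2),0,\ldots)$, $x' = (\sin(\psi/2),\cos(\psi/2),0,\ldots)$ and computes $\pi_\alpha$ directly; for the upper bound it writes $\pi_\alpha(x,x')$ as a ``sliding-window'' integral $\int_a^b F_+(\theta)\,d\theta$ with $b-a \le \psi$, and shows this is maximized by the centered window $[-\psi/2,\psi/2]$ by checking the sign of $\frac{d}{dc}\int_{c-\psi/2}^{c+\psi/2} F_+ = F_+(c+\psi/2) - F_+(c-\psi/2)$. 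You instead (i) characterize the feasible set of $(\theta_1,\theta_2)$ exactly, so no separate lower bound is needed, and (ii) handle the upper bound by showing $q$ is concave on $[0,\pi/2]$, whence $q(\theta_1)+q(\theta_2)$ on the line $\theta_1+\theta_2 = \pi-\psi$ is maximized at the midpoint. Both arguments ultimately use that $F$ is decreasing on $(0,\phi)$; the paper leverages this through evenness and unimodality of $F_+$, you through concavity of its antiderivative. Your packaging is a bit more unified and makes the extremality of the symmetric pair structurally transparent; the paper's sliding-window argument is slightly more elementary in that it avoids differentiating $q$ twice and the $C^1$-gluing at $\theta = \pi/2 - \phi$.

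Two minor remarks. First, for $d=2$ you have $F'\equiv 0$ rather than $F'<0$, so $q$ is only weakly concave there; this is harmless for the maximization, and you flag the $d=2$ case earlier anyway. Second, both you and the paper are slightly loose about attainment: since $\Rcal_r(x)$ is defined with a strict inequality, the symmetric pair lies on the boundary of $\Mcal_r(\Xcal)$, so the ``max'' is strictly a supremum; continuity of $q$ makes the formula correct regardless.
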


Our next two theorems consider the margin-distancing effect in terms of $\alpha$. For simplicity and to relate $\alpha$'s effect on $\consistent$ through explanations $\Ecal_{h^*}(\Xcal, \alpha)$, we subsequently abbreviate boundary certainty $\max_{(x, x') \in \margin} \pi_{\alpha}(x, x')$ as $\Pi(\alpha)$. 

To recap, a higher threshold $\alpha$, corresponding to more margin-distancing, leads to a smaller set of explanations $\Ecal_{h^*}(\Xcal, \alpha)$ (lowered transparency since more explanations are removed) and thus a bigger $\consistent$. This leads to lower boundary certainty $\Pi(\alpha)$, preventing gaming.

In the next result, we show that $\Pi(\alpha)$ is provably monotonically decreasing in $\alpha$. Thus, this enables the use of binary search to efficiently find the optimal $\alpha$. Indeed, it is not clear that decreasing the amount of explanations and enlarging the version space will always decrease $\Pi(\alpha)$. The reason is that enlarging $\consistent$ increases both  models that agree with $h^*$ on $x, x'$ (black lines in Figure~\ref{fig: 2d_linear}) and models that do not (green lines). If \emph{proportionally} more of them do predict like $h^*$, then the new $\pi_{\alpha}(x, x')$ will actually increase. We prove Theorem~\ref{thm:monotonicity} that shows this is not so in this ``nice'' setting; the proof may be found in Appendix~\ref{sec:proofs-sphere}.

%the number of
% \note{Make the relationship wrt $\alpha$ crystal clear! monotonically, more explanations excluded (worse transparency) => lower margin label certainty as formalized by $\Pi(\alpha)$ => less gaming}
% with $\Pi(1) = 0$
\begin{theorem}
$\Pi(\alpha)$ is decreasing in $\alpha$, for $\alpha \in [0, 1)$, and is strictly decreasing in $[\sin(\psi/2), 1)$. 
\label{thm:monotonicity}
\end{theorem}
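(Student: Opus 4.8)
The plan is to derive everything from the closed form in Theorem~\ref{thm:max-pi-sphere}. First I would pass to the angular variable $\phi=\arcsin\alpha$; this is a strictly increasing bijection from $[0,1)$ onto $[0,\pi/2)$, and $\alpha\ge\sin(\psi/2)$ iff $\phi\ge\psi/2$, so it suffices to show that $\Pi$, viewed as a function of $\phi$, is nonincreasing on $[0,\pi/2)$ and strictly decreasing on $[\psi/2,\pi/2)$. On $[0,\psi/2)$ we are in the regime $\psi>2\phi$, where $\Pi=1$; at $\phi=\psi/2$ the first branch of the formula also equals $1$ (numerator and denominator coincide), so $\Pi$ is continuous and constant on $[0,\psi/2]$, and only the first branch needs to be analyzed for $\phi\in[\psi/2,\pi/2)$. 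Writing $F_\phi(\theta):=\bigl(1-\cos^2\phi/\cos^2\theta\bigr)^{d/2-1}$ to stress the dependence on $\phi$, and splitting $\int_0^\phi=\int_0^{\psi/2}+\int_{\psi/2}^{\phi}$, one gets
\[
\Pi=\bigl(1+R(\phi)\bigr)^{-1},\qquad R(\phi):=\frac{\int_{\psi/2}^{\phi}F_\phi(\theta)\,d\theta}{\int_{0}^{\psi/2}F_\phi(\theta)\,d\theta},
\]
so it is enough to prove that $R$ is nondecreasing on $[\psi/2,\pi/2)$ and strictly increasing there.

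The heart of the argument is a monotone-likelihood-ratio comparison for the family $\{F_\phi\}$. Fix $\psi/2\le\phi_1<\phi_2<\pi/2$; I would show $\theta\mapsto F_{\phi_2}(\theta)/F_{\phi_1}(\theta)$ is nondecreasing on $(0,\phi_1)$. Since
\[
\frac{F_{\phi_2}(\theta)}{F_{\phi_1}(\theta)}=\left(\frac{\cos^2\theta-\cos^2\phi_2}{\cos^2\theta-\cos^2\phi_1}\right)^{d/2-1}
\]
and $d/2-1\ge0$, this reduces to checking that $v\mapsto(v-\cos^2\phi_2)/(v-\cos^2\phi_1)$ is monotone on $v=\cos^2\theta>\cos^2\phi_1$; its derivative equals $(\cos^2\phi_1-\cos^2\phi_2)/(v-\cos^2\phi_1)^2>0$, so this map increases in $v$, hence — as $v=\cos^2\theta$ decreases in $\theta$ — the ratio $F_{\phi_2}/F_{\phi_1}$ is nondecreasing in $\theta$, as claimed. (When $d=2$ the ratio is identically $1$ and $R(\phi)=(\phi-\psi/2)/(\psi/2)$ is transparently strictly increasing, a useful sanity check.)

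With the MLR property in hand I would finish using the standard fact that if $g/f$ is nondecreasing and a set $A$ lies to the left of a set $B$ (every point of $A$ is $\le$ every point of $B$), then $\int_A g/\int_A f\le\int_B g/\int_B f$ whenever the $f$-integrals are positive. Applying this with $f=F_{\phi_1}$, $g=F_{\phi_2}$, $A=[0,\psi/2]$, $B=[\psi/2,\phi_1]$ gives
\[
R(\phi_1)=\frac{\int_{\psi/2}^{\phi_1}F_{\phi_1}}{\int_{0}^{\psi/2}F_{\phi_1}}\;\le\;\frac{\int_{\psi/2}^{\phi_1}F_{\phi_2}}{\int_{0}^{\psi/2}F_{\phi_2}}\;<\;\frac{\int_{\psi/2}^{\phi_2}F_{\phi_2}}{\int_{0}^{\psi/2}F_{\phi_2}}=R(\phi_2),
\]
where the last (strict) inequality holds because $F_{\phi_2}>0$ on the positive-measure interval $(\phi_1,\phi_2)$, so enlarging the numerator's domain from $[\psi/2,\phi_1]$ to $[\psi/2,\phi_2]$ strictly increases it. Hence $R$ is strictly increasing on $[\psi/2,\pi/2)$, i.e. $\Pi$ is strictly decreasing there; combined with $\Pi\equiv1$ on $[0,\psi/2]$ and $\Pi\le1$ everywhere, $\Pi$ is nonincreasing on all of $[0,\pi/2)$. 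Undoing $\phi=\arcsin\alpha$ then yields the theorem.

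I expect the monotone-likelihood-ratio step to be the only real obstacle. The subtlety flagged in the paper's discussion — that enlarging $\consistent$ inflates both the models that agree and the models that disagree with $h^*$ on a boundary pair — shows up here as the fact that increasing $\phi$ perturbs the \emph{integrand} $F_\phi$, not merely the upper limit of the denominator integral; the limit change alone would already decrease $\Pi$, but a priori the reshaping of $F_\phi$ could push the other way, and one must verify it instead shifts mass toward larger $\theta$, which is exactly the MLR property. Everything else — the reduction to $R(\phi)$, the interval-expansion step, continuity at $\phi=\psi/2$, and the degenerate cases $d=2$ and $r=2$ — is routine.
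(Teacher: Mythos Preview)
Your argument is correct and takes a genuinely different route from the paper's. Both proofs rest on a monotone-likelihood-ratio inequality, but in different coordinates. The paper reparametrizes by $w_1=\inner{w}{w^*}$ and writes $\Pi(\alpha)=\bigl(\int_t^1 f(w_1)\,dw_1\bigr)\big/\bigl(\int_t^1 g(w_1)\,dw_1\bigr)$ with $t=\sqrt{1-\alpha^2}$, where $f,g$ are \emph{fixed} functions and only the lower limit moves with $\alpha$; showing $f/g$ is increasing in $w_1$ then gives monotonicity of the ratio in $t$ directly. You instead stay in the angular variable $\theta$ from Theorem~\ref{thm:max-pi-sphere}, where increasing $\phi$ perturbs both the domain \emph{and} the integrand $F_\phi$, and you control the integrand change via the MLR property of the family $\{F_\phi\}_\phi$. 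The paper's change of variables is slicker in that it sidesteps precisely the obstacle you flagged (the integrand reshaping); your route stays closer to the explicit formula and yields the strict decrease on $[\sin(\psi/2),1)$ more transparently than the paper's $F'(t)\ge 0$ computation.

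One small slip to fix: the derivative of $v\mapsto (v-\cos^2\phi_2)/(v-\cos^2\phi_1)$ is $(\cos^2\phi_2-\cos^2\phi_1)/(v-\cos^2\phi_1)^2<0$, not $>0$, so the map \emph{decreases} in $v$; composing with the decreasing $v=\cos^2\theta$ then gives (correctly) that $F_{\phi_2}/F_{\phi_1}$ is nondecreasing in $\theta$. Your two sign errors cancel, so the conclusion and the rest of the argument stand.
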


Finally, in some cases, we may skip the search if we can analytically derive conditions on $\phi, \psi$ in which $\Pi(\alpha)$ is upper bounded. Next, we show that there exists some constant $c$ such that $\lim_{\alpha \to 1} \Pi(\alpha) \leq c \psi$. Thus, when $\psi$ is small and $\alpha$ increases to $1$, $\Pi(\alpha)$ decreases to a small value. 

%\chicheng{ make this a remark, not in the theorem statement.
%}

%We obtain asymptotic bounds below.

%For all $x \in \Mcal_r(\Xcal)$, i
\begin{theorem}
\label{thm:pi-bounds}
\begin{enumerate} 
\item If $\alpha \geq 1 - \frac{1}{8d}$, then $\Pi(\alpha) \leq 9 \psi$.
\item For any $C_1 \in (0,1)$, there exists $C_2 > 0$ such that the following holds: if $\alpha \leq 1 - \frac{1}{\sqrt{d}}$ and $\psi \geq \frac{C_2}{d^{1/4}}$, then $\Pi(\alpha) \geq 1 - C_1$.
%C_1 > \frac{1}{d^{1/4}},
%$\alpha = 1 - \omega( 1 / d)$ and $\alpha = 1 - o(1)$, with $\psi = \Theta(1)$, then  $\Pi(\alpha) = 1 - o(1)$.
\end{enumerate}
\end{theorem}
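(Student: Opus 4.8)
The plan is to reduce both statements to the closed form of Theorem~\ref{thm:max-pi-sphere} and estimate ratios of one-dimensional integrals. I would first substitute $t=\tan\theta$ and then $s=\tan\theta/\tan\phi$, which rewrites $\int_0^{a}F(\theta)\,d\theta$ as $\sin^{d-2}\!\phi\,\tan\phi\int_0^{\tan a/\tan\phi}\frac{(1-s^2)^{d/2-1}}{1+s^2T}\,ds$, where $T:=\tan^2\phi=\alpha^2/(1-\alpha^2)$ and the cutoff $\theta=\psi/2$ maps to $s_*:=\tan(\psi/2)/\tan\phi$. Since the prefactor $\sin^{d-2}\!\phi\,\tan\phi$ is common to every integral involved, in the regime $\psi\le2\phi$ (the first branch of the formula) we get
\begin{align*}
\Pi(\alpha)=\frac{\int_0^{s_*}G(s)\,ds}{\int_0^{1}G(s)\,ds},\qquad 1-\Pi(\alpha)=\frac{\int_{s_*}^{1}G(s)\,ds}{\int_0^{1}G(s)\,ds},\quad G(s):=\frac{(1-s^2)^{d/2-1}}{1+s^2T}.
\end{align*}
I would also record two elementary consequences of the hypotheses: $\alpha\ge1-\frac1{8d}$ forces $\cos^2\!\phi=1-\alpha^2\le2(1-\alpha)\le\frac1{4d}$ (hence $\phi>\pi/4$, so $\psi<2\phi$ and the first branch applies), while $\alpha\le1-\frac1{\sqrt d}$ forces $1-\alpha^2\ge1-\alpha\ge\frac1{\sqrt d}$, hence $T\le\sqrt d$ and $\tan\phi\le d^{1/4}$.

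For part 1, $0\le F\le1$ on $[0,\phi)$ gives the numerator bound $\int_0^{\psi/2}F\le\psi/2$, so I only need a constant lower bound on $\int_0^\phi F$. Since $F$ is non-increasing on $[0,\phi]$ and $\theta_0:=\pi/3$ satisfies $\theta_0<\phi$ (as $\cos^2\!\phi\le\frac1{4d}\le\frac18<\frac14=\cos^2\!\theta_0$), I would use $\int_0^\phi F\ge\theta_0 F(\theta_0)=\frac\pi3(1-4\cos^2\!\phi)^{d/2-1}\ge\frac\pi3\bigl(1-\tfrac1d\bigr)^{d/2-1}\ge\frac\pi3\bigl(1-\tfrac1d\bigr)^{d-1}\ge\frac\pi{3e}$, the last step being the elementary inequality $(1-x)\ln(1-x)+x\ge0$ at $x=1/d$. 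Hence $\Pi(\alpha)\le\frac{\psi/2}{\pi/(3e)}=\frac{3e}{2\pi}\psi<9\psi$.

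For part 2, if $\psi>2\phi$ then $\Pi(\alpha)=1\ge1-C_1$ and we are done; otherwise $\psi\le2\phi$, i.e.\ $s_*\le1$. From $1-\Pi(\alpha)=\frac{\int_{s_*}^1 G}{\int_0^1 G}\le\frac{\int_{s_*}^1 G}{\int_0^{s_*}G}$ and the fact that $s\mapsto\frac1{1+s^2T}$ is decreasing — so the common value $\frac1{1+s_*^2T}$ can be extracted from $\int_{s_*}^1 G$ as an upper bound and from $\int_0^{s_*}G$ as a lower bound, and then cancels — I reduce to the unweighted ratio $R:=\frac{\int_{s_*}^1(1-s^2)^{d/2-1}ds}{\int_0^{s_*}(1-s^2)^{d/2-1}ds}$. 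For the denominator of $R$, $s_*=\tan(\psi/2)/\tan\phi\ge(\psi/2)/d^{1/4}\ge\frac{C_2}{2\sqrt d}\ge\frac1{\sqrt d}$ (once $C_2\ge2$), so $\int_0^{s_*}(1-s^2)^{d/2-1}ds\ge\frac1{\sqrt d}\bigl(1-\tfrac1d\bigr)^{d/2-1}\ge\frac1{e\sqrt d}$ (as in part 1). For the numerator of $R$, $(1-s^2)^{d/2-1}\le e^{-(d/2-1)s^2}\le e^{-ds^2/6}$ for $d\ge3$, so by $\int_a^\infty e^{-u^2}du\le\frac{e^{-a^2}}{2a}$ and $s_*\ge\frac{C_2}{2\sqrt d}$, $\int_{s_*}^1(1-s^2)^{d/2-1}ds\le\int_{s_*}^\infty e^{-ds^2/6}ds\le\frac{3e^{-ds_*^2/6}}{d\,s_*}\le\frac{6\,e^{-C_2^2/24}}{C_2\sqrt d}$. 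Dividing, the $\sqrt d$ cancels and $R\le\frac{6e\,e^{-C_2^2/24}}{C_2}$, which I can make $\le C_1$ by taking, e.g., $C_2=\max\{2,\sqrt{24\ln(6e/C_1)}\}$; since this $C_2>2\sqrt2$, the case $\psi\le2\phi$ is vacuous when $d=2$ (there $\tan\phi\le2^{1/4}$ and $\psi/2\ge\frac{C_2}{2\cdot2^{1/4}}$ force $s_*>1$), so that dimension also falls under $\psi>2\phi$.

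The step I expect to be the main obstacle is the weight cancellation in part 2. The naive estimate — replacing $\frac1{1+s^2T}$ by $1$ in the numerator and by $\frac1{1+T}$ in the denominator of $1-\Pi(\alpha)$ — introduces a factor $1+T$, which is $\Theta(\sqrt d)$ under our hypotheses and swamps the exponentially small numerator, breaking the bound. Comparing the numerator's tail against $\int_0^{s_*}G$ (rather than the full normalizer $\int_0^1 G$) confines the weight $\frac1{1+s^2T}$ to a range on which it differs from its value on $[s_*,1]$ by at most a bounded factor, so it cancels exactly; what remains is a routine Gaussian tail computation, the only care being $s_*=\Omega(1/\sqrt d)$ (so the denominator keeps $\Omega(1/\sqrt d)$ of its mass) and $s_*\ge C_2/(2\sqrt d)$ with $C_2$ large (so the numerator's tail is $e^{-\Omega(C_2^2)}$).
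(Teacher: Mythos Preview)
Your proof is essentially correct and takes a cleaner route than the paper, but there is one small slip in Part~1 worth flagging.

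\medskip
\textbf{The slip.} You write ``$\cos^2\phi\le\frac1{4d}$ (hence $\phi>\pi/4$, so $\psi<2\phi$ and the first branch applies)''. The implication $\phi>\pi/4\Rightarrow\psi<2\phi$ is false: $\psi$ can range up to $\pi$, so e.g.\ $\phi$ slightly above $\pi/4$ and $\psi=\pi$ is a counterexample. The fix is immediate and exactly what the paper does: if $\psi>2\phi$ then $\Pi(\alpha)=1$, but since $\cos^2\phi\le\frac1{4d}\le\frac18$ forces $\phi\ge\arccos(1/\sqrt8)>1.2$, we get $\psi>2\phi>2.4>1/9$, so $9\psi>1=\Pi(\alpha)$ holds trivially. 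With that one-line case distinction added, your Part~1 goes through.

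\medskip
\textbf{Comparison with the paper.} The paper does not prove Theorem~\ref{thm:pi-bounds} directly; it first establishes a refined intermediate result (its Theorem~\ref{thm:pi-bounds-refined}), which holds under the weaker hypothesis $\cos\phi\le\frac{1}{2d^{1/4}}$ and gives the sharper bound $\Pi(\alpha)\le 6\psi(1+\sqrt d\cos\phi)$, then specializes. To get that extra generality it uses a dimension-dependent cutoff $\theta'=\min\bigl(\phi/2,\arctan(1/(\sqrt d\cos\phi)),\arccos(d^{1/4}\cos\phi)\bigr)$ and a two-term Taylor bound $1-x\ge e^{-x-x^2}$. Your fixed cutoff $\theta_0=\pi/3$ is much simpler and already yields $\Pi(\alpha)\le\frac{3e}{2\pi}\psi\approx1.3\,\psi$, well under $9\psi$; the price is that it only works under the stronger hypothesis $\cos\phi\le\frac{1}{2\sqrt d}$, which is exactly what $\alpha\ge1-\frac1{8d}$ gives you.

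For Part~2 the difference is more substantive. The paper works directly with $F(\theta)$, lower-bounding $\int_0^{\psi/2}F/\int_{\psi/2}^\phi F$ by comparing $F(c_4\psi)$ and $F(\psi/2)$ via a careful $\ln(1-x)$ expansion (Lemma~\ref{lem:1-x-approx}) and some trigonometric estimates; this requires tracking several auxiliary constants $c_3,c_4,c_5,c_6$. Your substitution $s=\tan\theta/\tan\phi$ is the key simplification: it turns the problem into a ratio of integrals of $G(s)=(1-s^2)^{d/2-1}/(1+s^2T)$ on $[0,1]$, and your ``weight cancellation'' observation --- that the decreasing factor $1/(1+s^2T)$ can be bounded on both numerator and denominator by its value at the splitting point $s_*$, hence cancels --- reduces everything to a standard Gaussian tail. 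This is both shorter and more transparent than the paper's argument, at the cost of not yielding the paper's more general refined statement.
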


A more refined version of this theorem and proofs of other theorems may be found in Appendix~\ref{sec:proofs}.

\section{GENERAL MODELS}
%$\max \pi(x, x')$ crucially depends on the geometry of instance space $\features$ and the model class.
%we go about 
For arbitrary feature spaces, it is unclear if it is possible to explicitly characterize $\consistent$ even for non-homogeneous linear models. Still, let us suppose we have devised some function $\Lambda$ parameterized by threshold parameter $\alpha$. Algorithmically, how do we search for the smallest $\alpha$ such that $\Pi(\alpha) < \kappa$ for a given $\kappa$?

First, we will need an approach to approximate  $\Pi(\alpha)$ under a given threshold $\alpha$. Indeed, there is generally no closed-form expression for $\Pi(\alpha)$, so we will assume access to an algorithm that can sample from the posterior  distribution $\mathcal{U}(\consistent)$. Our approach is simply to draw samples $h_1, ..., h_n$ using the algorithm and evaluate: $\hat{\rho}(x') - \hat{\rho}(x) = \frac{1}{n} \sum_{i=1}^n \mathds{1}\{ h_i(x') = 1\} - \frac{1}{n} \sum_{i=1}^n \mathds{1}\{ h_i(x) = 1\}$.

To understand the sample complexity needed, we see that, $\hat{\rho}(x) = \frac{1}{n} \sum_{i=1}^n \mathds{1}\{ h_i(x) = 1\} = \frac{1}{n} \sum_{i=1}^n \mathds{1}\{ H^*_{x}(h_i) = 1\}$, where for a fixed $x$, $H^*_{x}: h \mapsto h(x)$ is its associated \emph{dual} function.
%: h \mapsto h(x)

\begin{definition}[Dual Class]
For any domain $\features$ and set of functions $\Hcal$ whose image is $\cbr{-1,+1}$, the dual class of $\Hcal$ is defined as $\Hcal^* := \{ H_x^{*} \; | \; x \in \features \}$. %Each function $H_x^{*}$ under a fixed $x$ maps each function $h \in H$ to $h(x)$.
\end{definition}

As introduced in \citep{assouad1983densite}, $\VC(\Hcal^*)$ is finite as long as $\VC(\Hcal)$ is finite. And so, with $O\left( \frac{\VC(H^{*}) + \log 1 / \delta}{\epsilon^2} \right)$ random draws, we may obtain an $2\epsilon-$accurate estimation of $\hat{\rho}(x) - \hat{\rho}(x')$ for \emph{all} boundary pairs $x, x'$, due to uniform convergence. This gives us a $4\epsilon$-accurate estimation of $\Pi(\alpha)$. In the case of linear models, due to point-line duality, we know that $\VC(\Hcal^{*}) = \VC(\Hcal) = O(d)$, which informs us how many samples are needed to calculate a high fidelity approximation of $\pi_{\alpha}(x, x')$.

%However, below, we give a series of complementary negative results to Theorem~\ref{thm:monotonicity} showing that \red{strict} monotonicity does not hold in general.
% of monotonic descent of 
%We show that while boundary certainty is monotonically decreasing with margin-distancing for
% a) uniform b) spherical c) homogeneous linear models, 
\textbf{Search:} Once we know how to approximate $\max \pi_{\alpha}(x, x')$ for a given $\alpha$, if monotonicity does hold, then search for the optimal threshold may be efficiently done through binary search. Recall from Theorem~\ref{thm:monotonicity} that, if a) the feature space is spherical, and b) the prior distribution over the hypothesis class is uniform, and c) the hypothesis class is  homogeneous halfspaces, then $\Pi(\alpha)$ decreases monotonically to $O(\psi)$. To complement this result, we next show that removing one of a, b or c (and keeping the rest) breaks this pattern.

Our next two proposition show that, removing the spherical feature space condition, or removing the assumption of $\Ucal$ being uniform, can cause boundary certainty to \emph{increase} with increasing margin distancing parameter $\alpha$ in worst-case settings.

%increasing distance from decision boundary can increase 

%Our first proposition removes .

%and $\Lambda_\alpha(x) = \mathds{1}(\inner{w^*}{x} \geq \alpha)$

\begin{proposition}
Suppose $d=2$. We have uniform prior over homogeneous linear models $\hypothesis = \{w \in \mathbb{R}^d \mid \|w\| = 1\}$, there exists a feature space $\features$ and thresholds $0 < \alpha_2 < \alpha_1$ such that $\Pi(\alpha_2) < \Pi(\alpha_1)$.
\end{proposition}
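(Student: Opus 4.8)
The plan is to pin down $d=2$ explicitly and produce a three‑point counterexample. Parametrize homogeneous halfspaces by angle, writing $w=(\cos\gamma,\sin\gamma)$ and fixing $w^*=(1,0)$; a unit‑norm point at angle $\theta$ then has label $\sign(\cos\theta)$ and margin $|\cos\theta|$, and for \emph{any} set of explanations the version space $\consistent$ is an arc of directions $\gamma$ containing $0$, so $\Ucal(\consistent)$ is uniform on that arc and every $\pi_\alpha(x,x')$ is a difference of normalized arc‑lengths — hence computable by inspection.

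First I would choose $\features=\{x_1,x_2,x_3\}$ on the unit circle at angles $\theta_1\in(0,\pi/2)$ near $\pi/2$, $\theta_2=\pi$, and $\theta_3\in(\pi/2,\pi)$ with $\theta_1+\theta_3>\pi$ — concretely $\theta_1=85^\circ,\ \theta_3=120^\circ$ — so that $x_1$ is positive with tiny margin $\cos\theta_1$, $x_2=-w^*$ is negative with margin $1$, and $x_3$ is negative with intermediate margin $|\cos\theta_3|\in(\cos\theta_1,1)$. I would take the manipulation radius $r$ just below the diameter, say $r=1.8$; then among the distinct‑label pairs $(x_1,x_2),(x_1,x_3),(x_2,x_3)$ only $x_2,x_3$ share a label, so $\Mcal_r(\features)=\{(x_1,x_2),(x_2,x_1),(x_1,x_3),(x_3,x_1)\}$, a list of four ordered pairs. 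Finally pick thresholds with $\cos\theta_1\le\alpha_2<|\cos\theta_3|\le\alpha_1<1$, e.g. $\alpha_2=0.3<\alpha_1=0.6$, so that $x_1$ is never an explanation, $x_2$ always is, and $x_3$ is an explanation at $\alpha_2$ but not at $\alpha_1$.

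The key computation is then as follows. At $\alpha_2$ the explanations are $\{x_2,x_3\}$ (both negative); $x_2$ forces $\consistent\subseteq(-\pi/2,\pi/2)$ and $x_3$, sitting at $\theta_3$ in the upper half‑plane, shears it down to the arc $(-\pi/2,\theta_3-\pi/2)$ of length $\theta_3$. At $\alpha_1$ only $x_2$ survives and $\consistent$ relaxes \emph{asymmetrically} to the semicircular arc $(-\pi/2,\pi/2)$: the left endpoint is still pinned by $x_2$, but the right endpoint swings out. Since $h_\gamma(x_1)=1$ exactly for $\gamma\in(\theta_1-\pi/2,\theta_1+\pi/2)$ and $x_2$ (negative explanation) gives $\pi_\alpha(x_2,x_1)=\Pr_\gamma(h_\gamma(x_1)=1)$, one gets $\pi_{\alpha_2}(x_2,x_1)=1-\theta_1/\theta_3$ while $\pi_{\alpha_1}(x_2,x_1)=1-\theta_1/\pi$, which is \emph{larger} because $\theta_3<\pi$. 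Checking the other pairs is immediate ($\Pr_\gamma(h_\gamma(x_2)=1)=0$ always, and $\Pr_\gamma(h_\gamma(x_3)=1)$ is $0$ at $\alpha_2$ and $1-\theta_3/\pi$ at $\alpha_1$), so the maximum over $\Mcal_r(\features)$ is attained by $(x_2,x_1)$ at both thresholds and $\Pi(\alpha_2)=1-\theta_1/\theta_3<1-\theta_1/\pi=\Pi(\alpha_1)$; for the numbers above this reads $\frac{7}{24}<\frac{19}{36}$.

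The one genuinely delicate point — and the reason the construction is shaped the way it is — is avoiding a \emph{trivial} boundary pair: as soon as two explanations of opposite labels lie within $r$, that pair has $\pi\equiv1$ and $\Pi\equiv1$, wiping out any violation of monotonicity. The construction dodges this by making every explanation that is ever active ($x_2$ always, $x_3$ only at $\alpha_2$) carry the \emph{same} label $-1$, while the sole positive point $x_1$ is deliberately kept out of the explanation set at both thresholds by its small margin $\cos\theta_1\le\alpha_2$. Verifying that this choice is simultaneously consistent with the margin ordering, with $(x_1,x_2)$ being close enough to lie in $\Mcal_r(\features)$, and with $w^*$ remaining in $\consistent$ at both thresholds is exactly where the parameters must be chosen with a little care; everything else is a one‑line arc‑length count. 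This realizes the claim: dropping the spherical feature‑space assumption of Theorem~\ref{thm:monotonicity} breaks monotonicity of $\Pi(\alpha)$.
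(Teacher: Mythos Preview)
Your argument is correct. The underlying mechanism is the same as the paper's: choose a finite $\Xcal$ so that the version space is an arc of directions, and arrange for the single explanation that toggles between $\alpha_2$ and $\alpha_1$ to clip that arc from one side only, entirely inside the region where $h$ separates the target boundary pair just as $h^*$ does; the normalized $\pi$ is then smaller on the clipped arc. The constructions differ in packaging. The paper uses five points off the unit circle --- three \emph{positive} explanation points $x^1,x^2,x^3$ at large radii (with $x^1,x^2$ fixing the arc endpoints and $x^3$ the toggle) plus a dedicated boundary pair $z^1,z^2$ near the origin that are never explanations --- obtaining $\Pi(\alpha_1)=\tfrac{\mu+\theta}{\mu+\nu}>\tfrac{\gamma+\theta}{\gamma+\nu}=\Pi(\alpha_2)$ via $\gamma<\mu<\theta<\nu$. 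Your version is more economical: three points on the unit circle, the lone positive point $x_1$ doing double duty as the boundary point (kept out of the explanation set by its tiny margin) while the two \emph{negative} points $x_2,x_3$ serve as explanations, with $x_3$ the toggle. This buys a cleaner three-point picture at the cost of a slightly more delicate check that no opposite-label explanation pair sneaks into $\Mcal_r(\Xcal)$, which you handle correctly by making every active explanation negative. One minor remark: your hypothesis $\theta_1+\theta_3>\pi$ is stronger than needed --- the identities $\pi_{\alpha_2}(x_2,x_1)=1-\theta_1/\theta_3$ and $\pi_{\alpha_1}(x_2,x_1)=1-\theta_1/\pi$ and their comparison require only $0<\theta_1<\theta_3<\pi$ --- though your concrete choice $\theta_1=85^\circ,\theta_3=120^\circ$ satisfies it anyway and all the arc-length counts check out.
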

%$\max_{x,x' \in \margin} \pi(x, x')$ is smaller under $\alpha_2$ than $\alpha_1$

\begin{proposition}
Suppose $\features$ is the $d$-dimensional unit sphere with $d \geq 3$. There exists a non-uniform distribution $\Ucal$ over homogeneous linear models $\hypothesis$,  such that there exists thresholds $0 < \alpha_2 < \alpha_1$ with $\Pi(\alpha_2) < \Pi(\alpha_1)$.
\end{proposition}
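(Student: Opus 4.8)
The starting point is that Lemma~\ref{lemma: circle_vs} describes $\consistent$ purely through $\alpha$ and $w^*$, independently of the prior: at threshold $\alpha$ the version space is the spherical cap $\consistent(\alpha) = \{\, w : \|w\|_2 = 1,\ \inner{w}{w^*} \ge \cos\phi \,\}$ with $\phi = \arcsin\alpha$, and only the \emph{weighting} of this cap changes with $\Ucal$. So the plan is to take $\Ucal$ to be a mixture $(1-\epsilon)\,\mathrm{Unif}(\features) + \epsilon\,\mathrm{Unif}(B)$ of the uniform distribution with a tiny ``blob'' of mass $\epsilon$ supported on a small cap $B$ of \emph{directions} that sits just outside $\consistent(\alpha_2)$ but just inside $\consistent(\alpha_1)$, and to place $B$ so that every direction in it classifies one carefully chosen boundary pair $(x_0,x_0')$ the same ``unsafe'' way $w^*$ does. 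Then raising the threshold from $\alpha_2$ to $\alpha_1$ abruptly injects this $\epsilon$ mass into the version space and drives $\pi_{\alpha_1}(x_0,x_0')$, hence $\Pi(\alpha_1)$, close to $1$, while $\Pi(\alpha_2)$ is still the (uniform-case, hence strictly-below-$1$) value of Theorem~\ref{thm:max-pi-sphere}.

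Concretely, assume $r < 2$, set $w^* = e_d$, and fix thresholds $\alpha_2 \in (r/2,\,1)$ and $\alpha_1 \in (\alpha_2,\,1)$, so that with $\phi_i = \arcsin\alpha_i$ and $\psi = 2\arcsin(r/2)$ we have $\psi/2 < \phi_2 < \phi_1 < \pi/2$. Pick $u = e_1$ and $v = e_2$ (this is where $d \ge 3$ enters: we need $v \perp \mathrm{span}(w^*,u)$), an angle $\beta \in (\phi_2,\phi_1)$, and an angle $\psi' \in (0,\psi)$, and define
\[
x_0 = -\sin\tfrac{\psi'}{2}\,w^* + \cos\tfrac{\psi'}{2}\,u, \qquad x_0' = \sin\tfrac{\psi'}{2}\,w^* + \cos\tfrac{\psi'}{2}\,u, \qquad w_0 = \cos\beta\,w^* + \sin\beta\,v .
\]
Then $x_0, x_0'$ are unit vectors with $\|x_0 - x_0'\| = 2\sin(\psi'/2) < r$ and $h^*(x_0) = -1 \ne +1 = h^*(x_0')$, so $(x_0,x_0') \in \margin$; moreover $\inner{w_0}{w^*} = \cos\beta$ lies strictly between $\cos\phi_1$ and $\cos\phi_2$, and $\inner{w_0}{x_0'} = \cos\beta\sin(\psi'/2) = -\inner{w_0}{x_0} > 0$. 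Hence, for a small enough spherical cap $B$ centered at $w_0$, every $w \in B$ simultaneously satisfies (i) $w \in \consistent(\alpha_1)$, (ii) $w \notin \consistent(\alpha_2)$, and (iii) $\inner{w}{x_0'} > 0 > \inner{w}{x_0}$, since each is a strict inequality at $w_0$. Let $\Ucal = (1-\epsilon)\,\mathrm{Unif}(\features) + \epsilon\,\mathrm{Unif}(B)$ with $\epsilon \in (0,1)$ chosen below; this is non-uniform.

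To conclude: at threshold $\alpha_2$, property (ii) makes $B$ disjoint from the conditioning event, so $\Ucal$ conditioned on $\consistent(\alpha_2)$ is exactly the uniform posterior on the $\phi_2$-cap, and Theorem~\ref{thm:max-pi-sphere} gives $\Pi(\alpha_2) = c_0 < 1$, where $c_0 = \frac{\int_0^{\psi/2} F(\theta)\,d\theta}{\int_0^{\phi_2} F(\theta)\,d\theta}$ with $F$ as in that theorem (here $c_0 < 1$ because $\psi/2 < \phi_2$). At threshold $\alpha_1$, let $p_1 = \mathrm{Unif}(\features)(\consistent(\alpha_1))$ and $q_1 = \mathrm{Unif}(\features)\big(\consistent(\alpha_1) \cap \{\inner{w}{x_0'} \ge 0 > \inner{w}{x_0}\}\big) \in [0,p_1]$. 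Using (i) and (iii) for the blob, and the fact that the opposite wedge $\{\inner{w}{x_0} \ge 0 > \inner{w}{x_0'}\}$ forces $\inner{w}{w^*} < 0$ (subtract the two defining inequalities after substituting the formulas for $x_0, x_0'$) and is therefore disjoint from $\consistent(\alpha_1) \subseteq \{\inner{w}{w^*} \ge \cos\phi_1 > 0\}$, one gets
\[
\pi_{\alpha_1}(x_0,x_0') = \frac{(1-\epsilon)\,q_1 + \epsilon}{(1-\epsilon)\,p_1 + \epsilon} \;\ge\; \frac{\epsilon}{(1-\epsilon)\,p_1 + \epsilon},
\]
and the right-hand side tends to $1$ as $\epsilon \to 1$. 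Choosing $\epsilon$ close enough to $1$ that it exceeds $c_0$ yields $\Pi(\alpha_1) \ge \pi_{\alpha_1}(x_0,x_0') > c_0 = \Pi(\alpha_2)$, proving the claim.

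The hard part is the simultaneous geometric placement of $B$: it must lie outside the $\phi_2$-cap, inside the $\phi_1$-cap, and inside the wedge $\{\inner{w}{x_0'} \ge 0 > \inner{w}{x_0}\}$, while the opposite wedge must miss the $\phi_1$-cap entirely, for otherwise the injected mass could be cancelled. Verifying that these constraints are compatible is exactly where $d \ge 3$ is used — it supplies a direction $v$ orthogonal to both $w^*$ and $u$; in $d = 2$ no such placement exists (which is why the preceding proposition instead perturbs the feature space rather than the prior). A secondary technical point is that one should mix with the uniform distribution rather than use a literal point mass, both to keep the posterior well-defined at every threshold and to pin $\Pi(\alpha_2)$ to the exact uniform-case value of Theorem~\ref{thm:max-pi-sphere}.
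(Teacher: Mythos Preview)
Your proof is correct and follows essentially the same strategy as the paper's: make the prior coincide with the uniform distribution on the inner cap $\consistent(\alpha_2)$ (so $\Pi(\alpha_2)$ is the uniform-case value $<1$ from Theorem~\ref{thm:max-pi-sphere}), and load extra mass onto directions in the annulus $\consistent(\alpha_1)\setminus\consistent(\alpha_2)$ that classify a fixed boundary pair the same way $w^*$ does, forcing $\Pi(\alpha_1)>\Pi(\alpha_2)$; both arguments use $d\ge 3$ precisely to make room for such directions. The only cosmetic differences are that the paper spreads its extra mass over the entire ``agreeing wedge'' of the annulus and concludes via a one-line convex-combination inequality, whereas you localize the mass to a small cap $B$ around a single point $w_0\in\mathrm{span}(w^*,v)$ and push $\epsilon\to 1$; your use of $\psi'<\psi$ to keep $(x_0,x_0')$ strictly inside the open region $\Rcal_r$ is in fact slightly more careful than the paper's choice.
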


Finally, we show that by removing the assumption that the hypothesis class is the set of homogeneous linear models, $\Pi(\alpha)$ can stay at a high value for all $\alpha \in (0,1]$ and all $\psi \in (0,\pi]$. This is in sharp contrast with the homogeneous linear model class setting, in which $\lim_{\alpha \to 1} \Pi(\alpha) \leq O(\psi)$ and could thus be made arbitrarily small with $\psi \rightarrow 0$.
%We note that the strictly increasing condition serves to rule out trivial constructions. 

%and $\Lambda_\alpha(x) = \mathds{1}(\inner{w^*}{x} \geq \alpha)$

\begin{proposition}
There exists a class of non-homogeneous linear models, with spherical $\features$  such that $\Pi(\alpha)$ decreases monotonically (and strictly so at some point) with increasing $\alpha$, and yet $\Pi(\alpha) \geq 1 / 3$ for all $\alpha \in [0,1)$ and $\psi \in (0,\pi]$.
\end{proposition}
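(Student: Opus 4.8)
The plan is to exhibit a small \emph{finite} class of non-homogeneous linear classifiers on the unit circle $\features = \{x \in \RR^2 : \|x\|_2 = 1\}$ for which $\Pi(\alpha)$ is a non-increasing step function that never drops below $1/3$ (in fact never below $1/2$), and for which this holds for every manipulation radius $r$, hence for every $\psi \in (0,\pi]$. The reason a non-homogeneous class can behave so differently from the homogeneous case in Theorem~\ref{thm:pi-bounds} is that there the version space enlarges continuously and symmetrically about $w^*$ as $\alpha \to 1$, so on a close boundary pair both posterior probabilities tend to $1/2$ and $\pi_\alpha \to 0$; a finite class instead forces the version space to grow in discrete jumps, and can be arranged so that it never ``symmetrizes'' about $h^*$ near a fixed boundary pair.

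Concretely, fix $A \in \features$ and small angles $\beta, \beta' > 0$. Let $h^*$ be the non-homogeneous linear classifier whose negative region is the short ``cap'' arc of angular radius $\beta$ centered at $A$ (positive elsewhere), and let $h'$ be the one whose negative region is a second short arc, of angular length $\beta'$, lying immediately counter-clockwise of $h^*$'s cap and sharing exactly one endpoint with it, so that the two negative arcs are \emph{disjoint}. Because they are disjoint, $h'$ and $h^*$ disagree precisely on the union of the two arcs, a small arc near $A$; and because that arc is short and sits adjacent to $h^*$'s decision boundary, all of its points have $h^*$-margin at most some $\alpha_0 \in (0,1)$ (explicitly $\alpha_0 = \max(1-\cos\beta,\ \cos\beta-\cos(\beta+\beta'))$, which is $<1$ once $\beta,\beta'$ are small). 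With the uniform prior this yields $\consistent(\alpha) = \{h^*\}$ for $\alpha \in [0,\alpha_0)$ and $\consistent(\alpha) = \{h^*, h'\}$ (mass $1/2$ each) for $\alpha \in [\alpha_0,1)$.

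The verification then runs in three steps, for an arbitrary fixed $r \in (0,2]$. (i) For $\alpha < \alpha_0$, since $\consistent = \{h^*\}$, a pair $(x,x') \in \margin$ with $x$ just inside the cap (so $h^*(x)=-1$) and $x'$ just outside gives $\pi_\alpha(x,x')=1$, so $\Pi(\alpha)=1$. (ii) For $\alpha \in [\alpha_0,1)$, for any $(x,x') \in \margin$, $\pi_\alpha(x,x') = \frac12\big[(\ind\{h^*(x')=1\}-\ind\{h^*(x)=1\})+(\ind\{h'(x')=1\}-\ind\{h'(x)=1\})\big]$; if $h^*(x)=-1$ then $x$ lies in $h^*$'s cap, which is disjoint from $h'$'s cap, so $h'(x)=+1$, forcing the second bracket to be $\le 0$ and hence $\pi_\alpha \le 1/2$; conversely a pair straddling $h^*$'s boundary on the clockwise side of the cap (away from $h'$'s cap) has $h^*(x)=-1, h^*(x')=+1$ and $h'(x)=h'(x')=+1$, achieving $\pi_\alpha = 1/2$; so $\Pi(\alpha) = 1/2$. (iii) Combining, $\Pi$ equals $1$ on $[0,\alpha_0)$ and $1/2$ on $[\alpha_0,1)$: it decreases monotonically, strictly so at $\alpha_0 \in (0,1)$, stays $\ge 1/2 \ge 1/3$, and since (i)--(ii) used nothing about $r$, this holds for all $\psi \in (0,\pi]$.

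The step I expect to be the main obstacle is establishing $\Pi(\alpha) \le 1/2$ \emph{uniformly in $r$}, especially at the extreme $r = 2$ ($\psi=\pi$), where essentially every ordered pair of distinct non-antipodal points of $\features$ belongs to $\margin$: one must check that this flood of additional boundary pairs cannot lift $\Pi$ above $1/2$ in the second regime, which is exactly what disjointness of the two negative caps buys, since the bound in step (ii) constrained only $x$ (through $h^*(x)=-1 \Rightarrow h'(x)=+1$) and imposed nothing on $x'$. The remaining bookkeeping — that $\alpha_0 < 1$ so the version space genuinely enlarges within $[0,1)$, and that both negative regions are caps cut off by chords not passing through the origin, so the models are indeed non-homogeneous — is routine from the explicit angular parametrization.
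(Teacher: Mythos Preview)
Your proof is correct but follows a genuinely different route from the paper's. The paper constructs a \emph{continuum} class $\Hcal = \Hcal_0 \cup \Hcal_1$ on the unit circle, where $\Hcal_0$ is all homogeneous halfspaces and $\Hcal_1$ is all halfspaces whose boundary line passes through the point $(0,1)$; crucially, $h^*$ is taken to be the \emph{homogeneous} classifier $\sign(x_1)$, so the margin score is simply $|x_1|\in[0,1]$. They then show that under margin-distancing the version space splits as $\Hcal_C\cap\Hcal_0$ (carrying $2/3$ of the posterior mass) and $\Hcal_C\cap\Hcal_1$ (carrying $1/3$), and that every member of the $\Hcal_1$ part separates the optimal boundary pair just as $h^*$ does. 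Reusing Theorem~\ref{thm:max-pi-sphere} for the homogeneous part, this yields the closed form $\Pi(\alpha)=\tfrac{2}{3}\min\bigl(1,\ \tfrac{\psi}{\pi-2\arccos\alpha}\bigr)+\tfrac{1}{3}$, a continuous curve that is strictly decreasing on $[\sin(\psi/2),1)$ and tends to exactly $1/3$ as $\alpha\to 1$.

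Your two-classifier construction is considerably more elementary: the disjoint-cap observation $h^*(x)=-1\Rightarrow h'(x)=+1$ immediately caps $\pi_\alpha$ at $1/2$ once $h'$ enters the version space, giving a step function $\Pi(\alpha)\in\{1,\tfrac12\}$. You get a shorter argument and a stronger lower bound, and the bound is automatically uniform in $r$. What the paper's construction buys in return is that $\Hcal$ contains the full homogeneous family as a subclass, so the contrast with Theorem~\ref{thm:pi-bounds} is sharper (it is adding the non-homogeneous part that keeps $\Pi$ bounded away from $0$), and $\Pi$ decreases continuously rather than only at a single jump. Two cosmetic points on your write-up: first, ``sharing exactly one endpoint'' and ``disjoint'' are in mild tension unless you note you mean the open negative arcs; second, since your $h^*$ is non-homogeneous, the raw margin $|f^*|$ ranges up to $1+\cos\beta>1$, so it is worth stating explicitly (as you do) that the jump occurs at $\alpha_0<1$ and hence strictly inside $[0,1)$.
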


Thus, we have that in general monotonicity does not hold. However, our negative results are worst-case in nature. Next, we turn to experiments to examine the relationship between margin-distancing and boundary-certainty on real-world, non-worst case datasets.

\section{EXPERIMENTS}
In this section, we empirically chart the relationship between margin distancing (the amount of explanation omission) and boundary certainty. We experiment with linear and multi-layer Perceptron (MLP) models. 

\textbf{Explanation Methods:} As mentioned in the formulation, we focus on example-based explanation methods that can return a subset of prototypical instances that serve as explanations. This leads us to use $k$-medoid and MMD-critic~\citep{kim2016examples}, and rules out other example-based explanation methods such as~\citep{koh2017understanding} that return a single (and not subset), most ``influential'' data point out of the training set. Note also, that counterfactual and contrastive-based explanations are ruled out by the need to margin-distance. Indeed, by construction, counterfactual/contrastive-based explanations are boundary points, whose release greatly increase the users' boundary certainty -- in fact, $\max_{(x, x') \in \margin} \pi(x, x') = 1$. Thus, if manipulation (gaming) is to be prevented, the use and release of this type of explanations is a non-starter.

Our experimental procedure goes as follows:

1) The explanation method (e.g $k$-medoid) is used to compute the full set of explanations.
%$\explanation$ $\explanation(\features)$
    
2) Then, we vary the degree of margin-distancing and remove explanations that are too close to the decision boundary. To measure the closeness of an explanation point with respect to the decision boundary, we look at its percentile in the distribution of all explanations’ margin scores. This allows us to identify which points are in the top $l$ percent of all explanations closest to the margin. We do this separately for positive and negative explanations as they have different distributions of margin scores.
    
3) To compute boundary certainty, we remove this top $l$ percent closest explanations, compute models $\consistent$ consistent with the remaining explanations $\Ecal_{h^*}(\Xcal, \alpha)$ and compute $\pi(x, x')$ using $\consistent$.
    
4) To generate our plots, we vary $l$ for $l$ ranging from $0$ to $75$ (on the x-axis) and plot this against three metrics that capture boundary certainty (on the y-axis). The three metrics that summarize $\pi(x, x')$ for all boundary pairs $(x, x')$ are: $\max_{(x, x') \in \margin} \pi(x, x')$ (worst boundary pair), average of top $5$ percent of $\pi(x, x')$'s (somewhat worse case) and average of all $\pi(x, x')$.

\subsection{Linear Models}

\textbf{Procedure:} We train a linear model on the \verb|Credit Card Default| dataset \citep{yeh2009comparisons} using Logistic Regression to obtain $w^*$.  We focus on mutable features only that preclude features age and marital status. We take $\Lambda$ to be margin distance $\langle w^*, x \rangle $. For these experiments, at a given $r$, we focus on and use $w^*$ to find the set of all pairs of boundary points $(x, x')$ that lead to a positive flip: $\cbr{ (x,x'): w^\star \cdot x < 0, w^\star \cdot x' \geq 0 }$. This is relatively cheap since by Cauchy-Schwarz, we only need to try all pairs of points whose margin score is $\leq r$, a much smaller set.

% \in \margin

For a given set of explanations, we construct and sample from $\consistent$, which is a polytope. Sampling from polytopes is a well-studied problem and we use the state-of-the-art John's Walk \citep{chen2018fast} with mixing time $O(d^2)$. We assume uniform $\Ucal$ over $\Hcal$. Thus, with these samples, we compute the empirical $\max_{(x, x') \in \margin} \hat{\pi}(x, x')$ with $w$'s sampled uniformly from $\consistent$. We repeat this sampling $16$ times for each set of explanations corresponding to a margin-distance percentile.

\textbf{Monotonicity:} We present our results in Figure~\ref{fig: credit_linear}. Qualitatively, we observe a generally smooth decreasing trend with increased distance of explanations from the margin and we observe some non-monotonicity under all three metrics, most prominently under the $\max$ metric. For all three metrics, we see that the trend levels out quickly. This suggests that trying smaller values of $\alpha$ (small amounts of explanation omission) can quickly decrease various measures of boundary certainty and this strategy is effective in this setting.

Quantitatively, we check if the trend is generally monotonic in an experiment that goes as follows. We pick $10$ target boundary certainty values evenly spaced out from the attainable boundary certainties as found on the y-axis. Then, for each target value, we find the minimum percent of explanation points that need to be removed to bring the boundary certainty below the target; this optimal percentage is found simply by sweeping through all (percentage, certainty) pairs we have from left to right. Finally, we obtain the percentage that need to be removed as found by binary search and compute the difference between the percentage found by binary search against the optimal.

Under $k$-medoid explanations for linear model, we summarize the results by looking at the average of the difference and the max difference, which we report as follows. For plots of the $\max_{(x, x') \in \margin} \pi(x, x')$: $r=0.1$, $7$, $35$; $r=0.2$, $11$, $55$; $r=0.3$, $0$, $0$. For plots of average of top $5$ percent of all $\pi(x, x')$: $r=0.1$, $7$, $35$; $r=0.2$, $10$, $50$; $r=0.3$, $0$, $0$. For plots of average of all $\pi(x, x')$: $r=0.1$, $6$, $30$; $r=0.2$, $11$, $55$; $r=0.3$, $0$, $0$. We record the full set of differences in tables in  Appendix~\ref{sec:monot_tables}.

%%%%%%%%%\note{Note that we shouldn't necessarily expect the numbers to be monotonic in $r$ since the curves may be different.}

As a synopsis, we observe that the difference is generally small for higher $r$'s and larger for lower $r$'s. The relatively jagged line means that binary search is likely to be quite far off. Here we wish to note that this problem may be alleviated by electing to try the smaller amounts of explanation omission instead of binary search, in the case that we find that the boundary certainties are close at the extremes. Indeed, the closeness would suggest that not much decrease in boundary certainty could be obtained by significantly increasing the percentage of explanation omission.

We also observe the result from varying the allowed extent of manipulation $r$. As expected, the larger the manipulation extent $r$, the higher the $\pi(x, x')$ that may be attainable.

\begin{figure*}
\begin{subfigure}[b]{0.33\textwidth}
    \includegraphics[width=\linewidth]{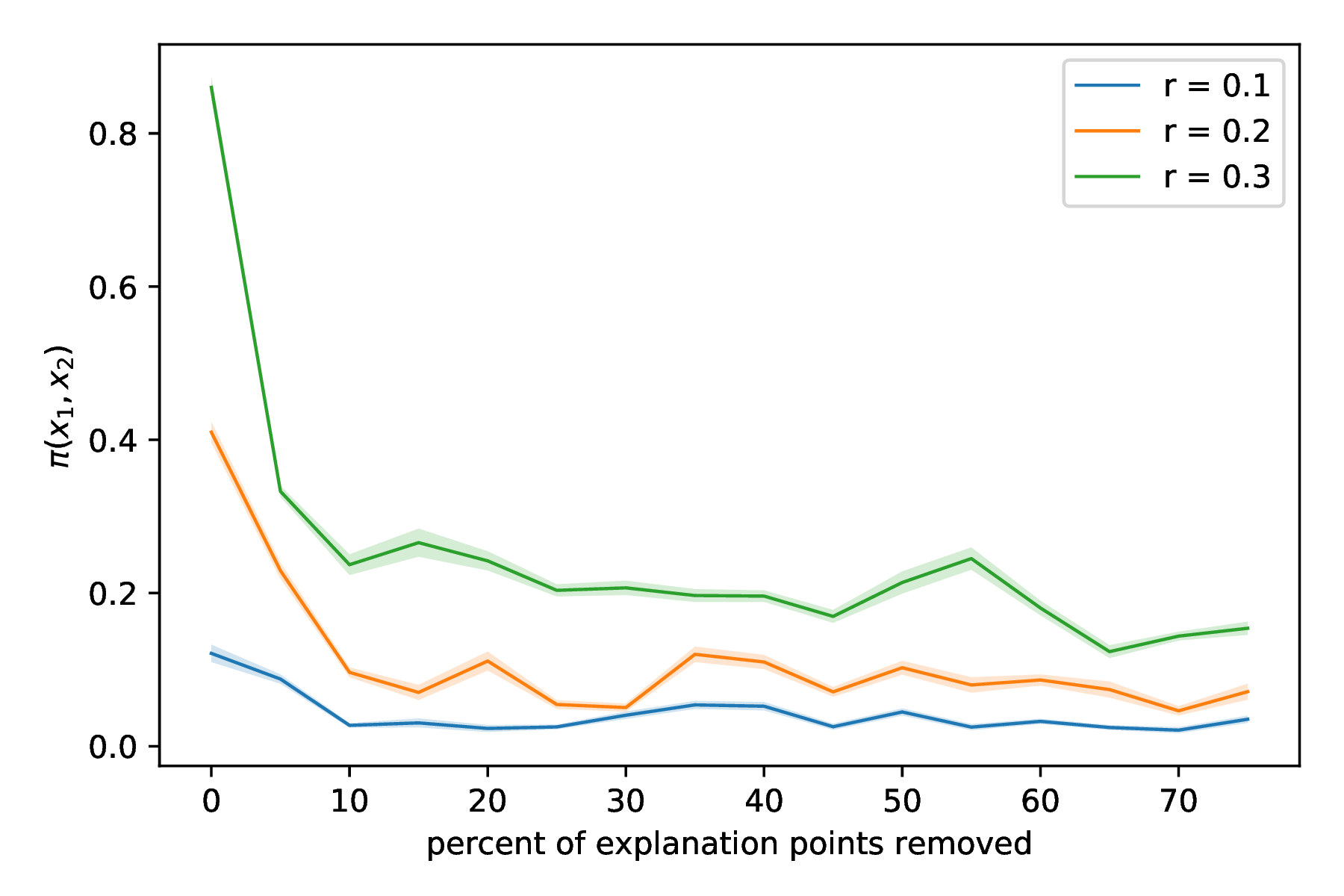}
\end{subfigure}%
\begin{subfigure}[b]{0.33\textwidth}
    \centering
    \includegraphics[width=\linewidth]{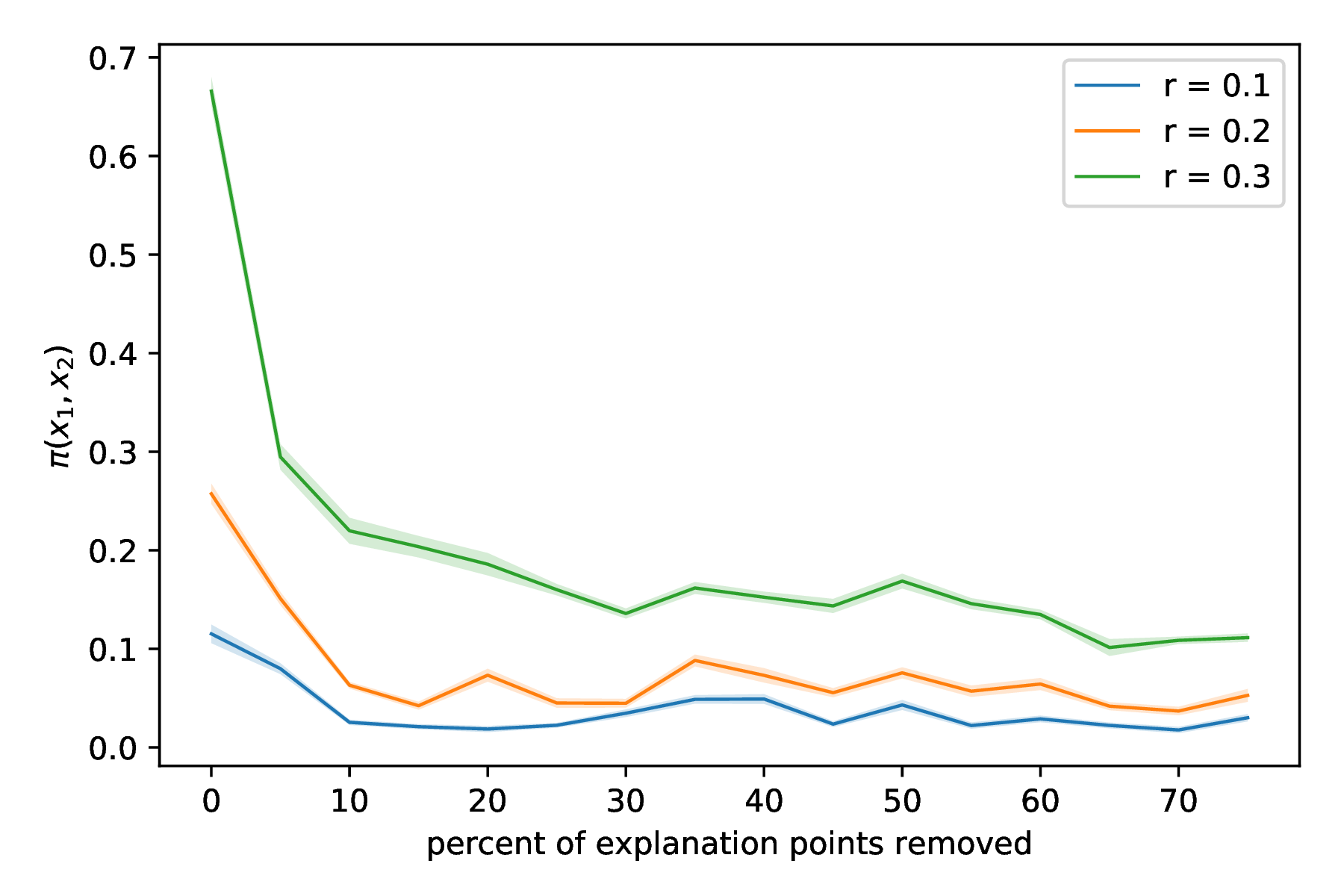}
\end{subfigure}%
\begin{subfigure}[b]{0.33\textwidth}
    \centering
    \includegraphics[width=\linewidth]{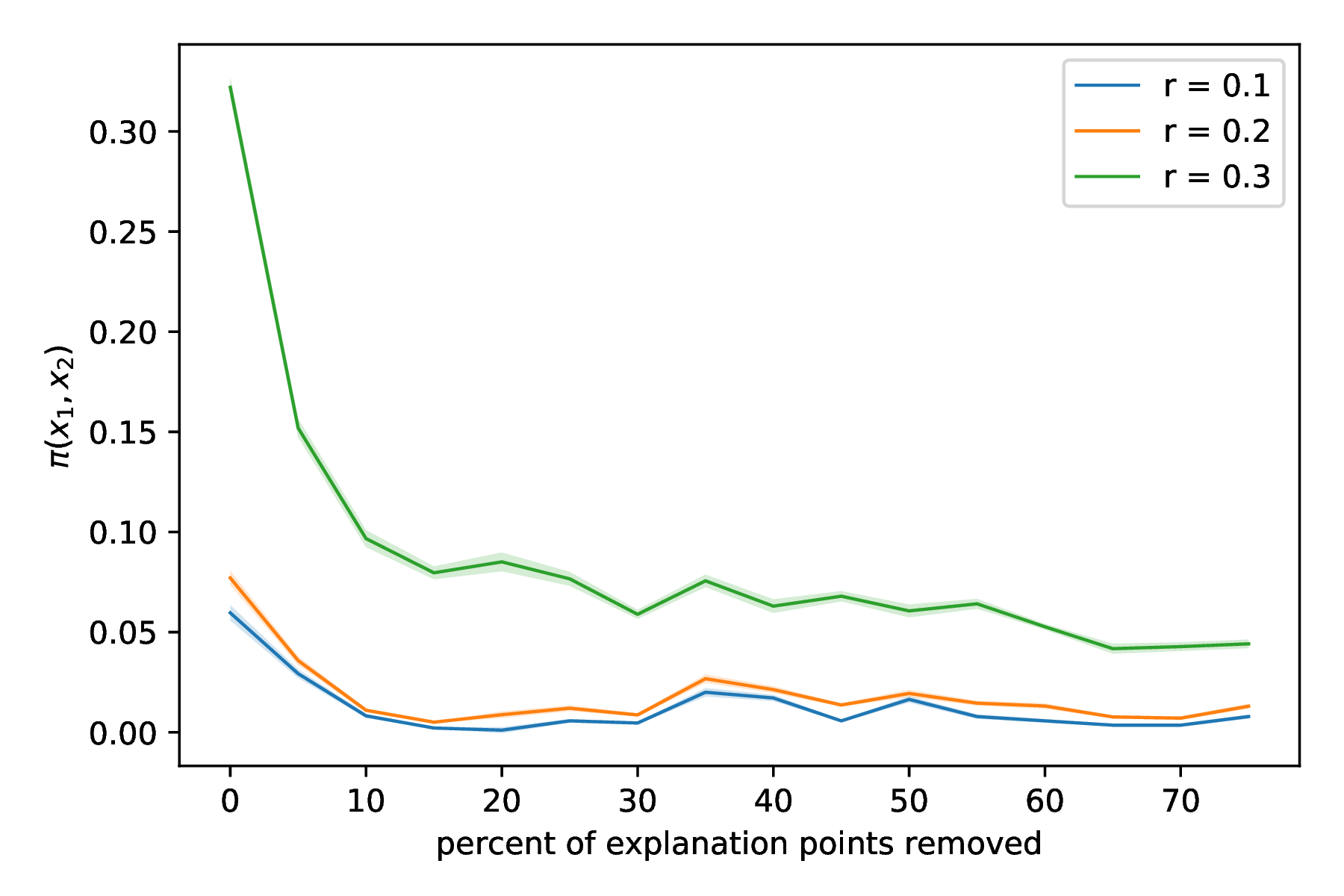}
\end{subfigure}%
\caption{Plots of the $\max_{(x, x') \in \margin} \pi(x, x')$ (left), average of top $5$ percent of all $\pi(x, x')$ (middle) and average of all $\pi(x, x')$ (right) under $k$-medoid explanations for linear models.}
\label{fig: credit_linear}
\end{figure*}

\subsection{Neural Network Models}

\textbf{Procedure:} We train MLPs with one or two hidden layers on the \verb|givemecredit|\footnote{http://www.kaggle.com/c/GiveMeSomeCredit/} dataset. We present the one layer MLP experiment results in the main body and the two layer in the appendix. We experiment with $k$-medoid and MMD-critic~\citep{kim2016examples}, whose results we present in the appendix. To measure of distance from margin, we take $\Lambda_\alpha(x)$ to be the model's confidence of a point: $\Lambda_\alpha(x) = \mathds{1}\{| f^*(x)| \geq \alpha \}$, where $f^*: \Xcal \to [-\frac12,\frac12]$ represents the MLP's predictive probability of class $1$, offset by $-\frac12$. 

To the best of our knowledge, there is no known algorithm that provably sample uniformly from neural network version spaces. Indeed, this is an important problem described by recent works on the ``Rashomon effect'' \citep{d2020underspecification, semenova2019study, marx2020predictive}. We use the procedure in \citep{d2020underspecification} used to probe the version space: randomly initialize the network with different seeds to obtain different models consistent with the explanations. For computational tractability, we sample $100$ MLPs this way with $4$ repetitions per margin-distance percentile.

\textbf{Observations:} Our first observation is that varying just the initialization is not an effective sampling procedure under the \verb|givemecredit| dataset. We find small variation in the MLPs produced. To showcase this, we randomly sample $100$ pairs of MLPs from the $\consistent$ we collected and calculate their label agreement on the boundary points, $\Pr_{h, h' \sim \Ucal(\consistent), x \sim \text{Unif}(\margin)}(h(x) = h'(x))$. The high average consistency of $\consistent$ is charted in green in Figure~\ref{fig: bar_nonrandom_mlp}.

We also compute the three metrics in this setting (Figure \ref{fig: mlp_nonrandom}), which interestingly are very high despite the overall low agreement with respect to $h^*$ -- defined as $\Pr_{h \sim \Ucal(\consistent), x \sim \text{Unif}(\margin)}(h(x) = h^*(x))$ (please see right figure in Figure~\ref{fig: bar_nonrandom_mlp}). This seems to be due to a small fraction of points which most MLPs in $\consistent$ consistently agree with $h^*$ on. The large values of $\max_{(x, x') \in \margin} \pi(x, x')$ in this case suggests the difficulty of preventing worst-case manipulation when the full set of hyperparameters used to train the network is known. %This observation may be of independent interest.

\begin{figure*}
  \begin{subfigure}[b]{.33\textwidth}
    \centering
    \includegraphics[width=\linewidth]{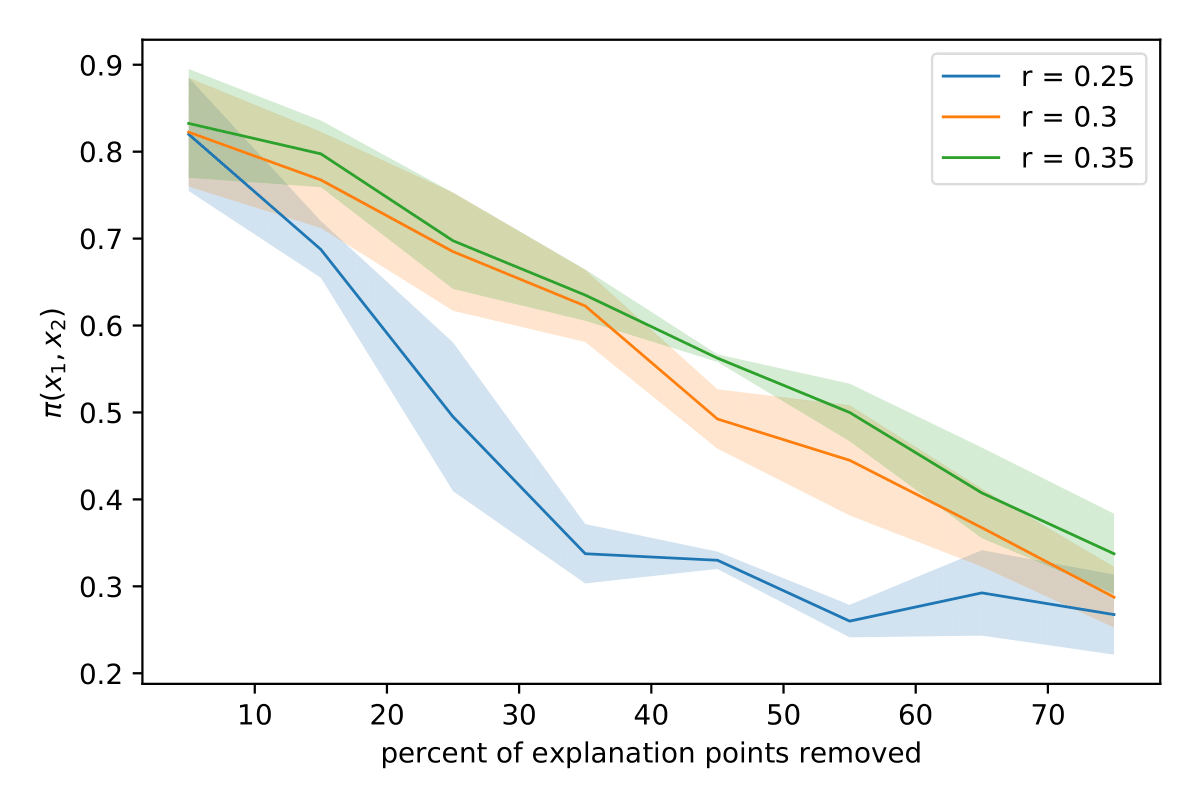}
  \end{subfigure}
  \begin{subfigure}[b]{.33\textwidth}
    \centering
    \includegraphics[width=\linewidth]{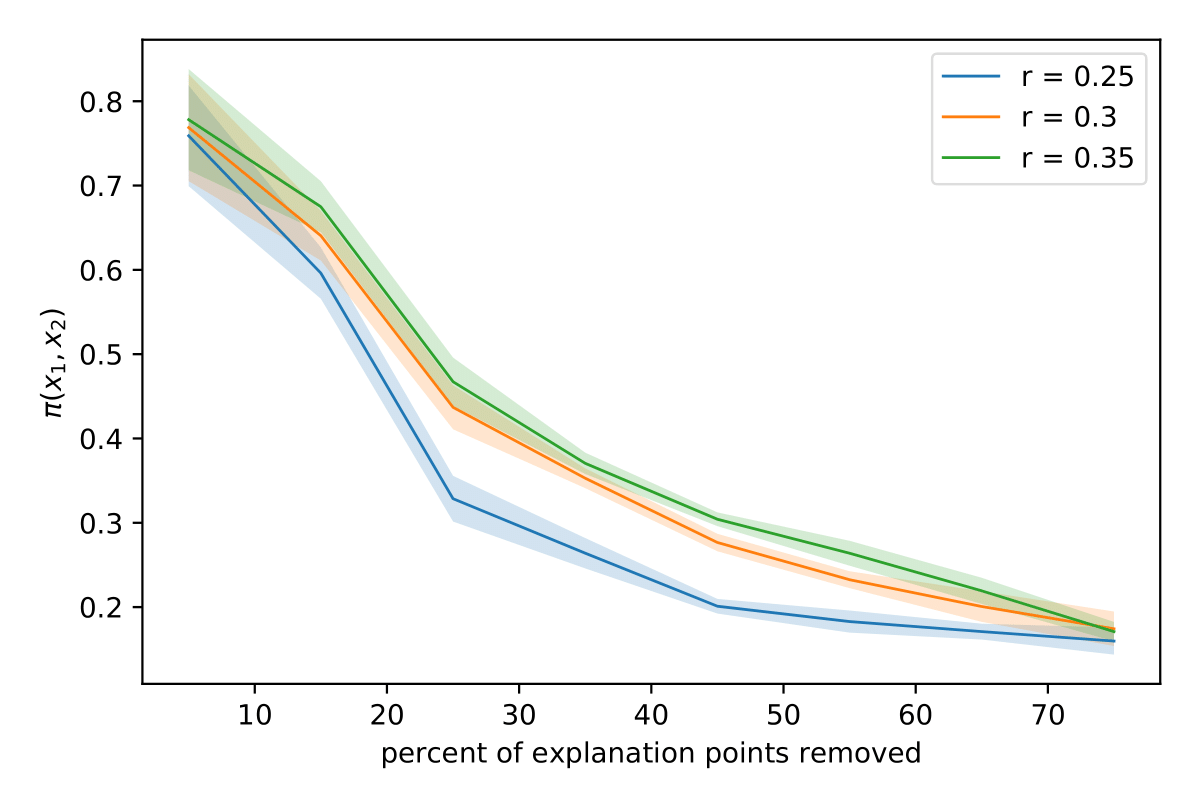}
  \end{subfigure}
  \begin{subfigure}[b]{.33\textwidth}
    \centering
    \includegraphics[width=\linewidth]{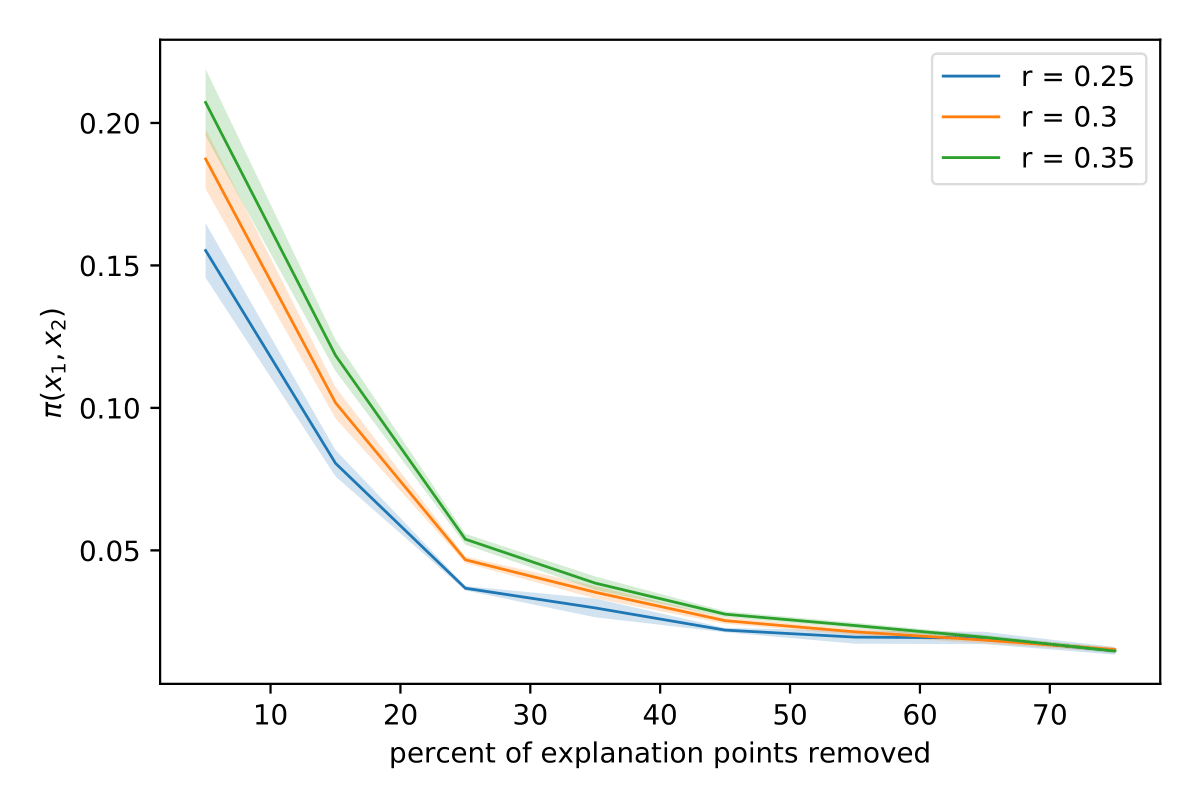}
  \end{subfigure}
  \smallskip
  \begin{subfigure}[b]{.33\textwidth}
    \centering
    \includegraphics[width=\linewidth]{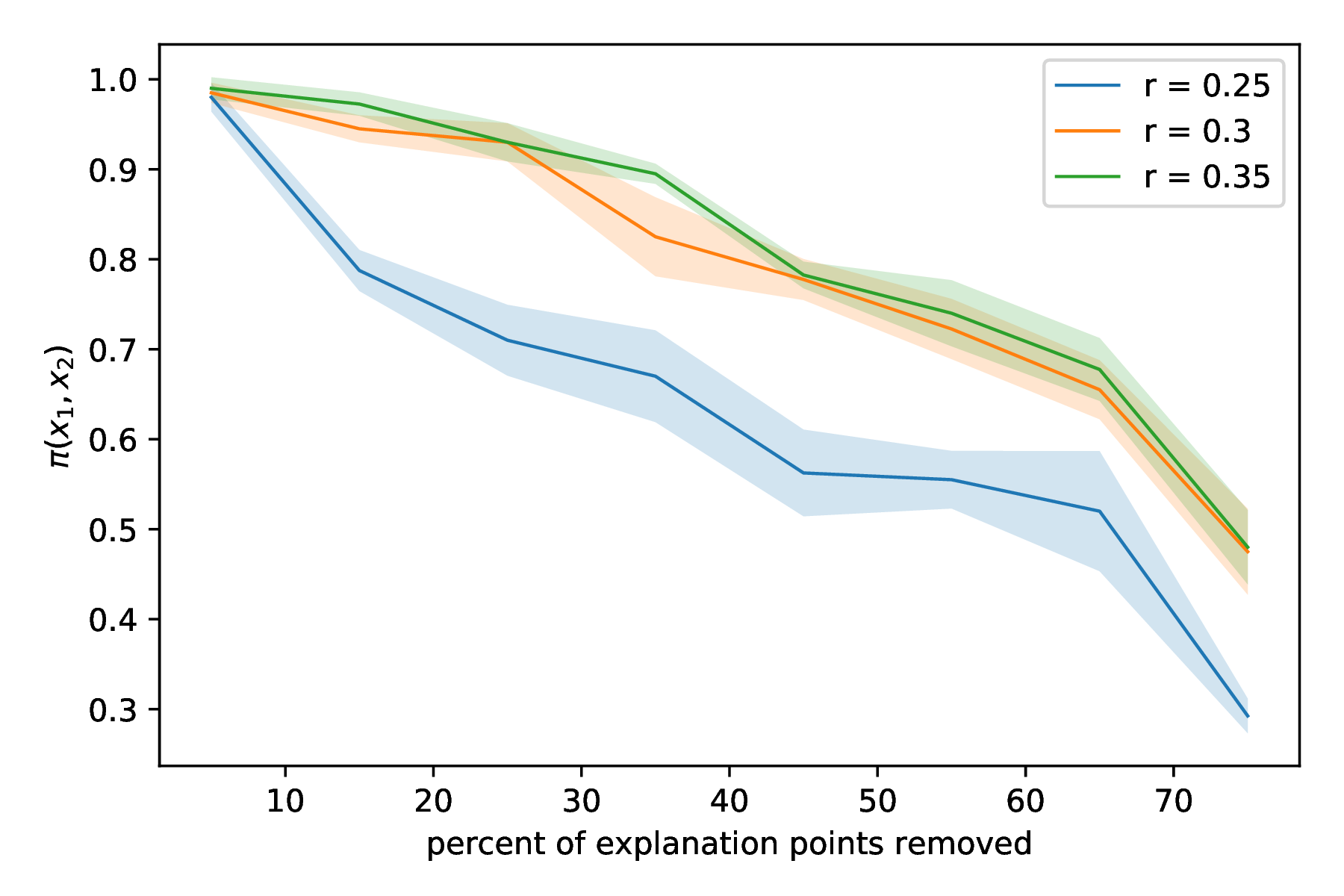}
  \end{subfigure}
  \begin{subfigure}[b]{.33\textwidth}
    \centering
    \includegraphics[width=\linewidth]{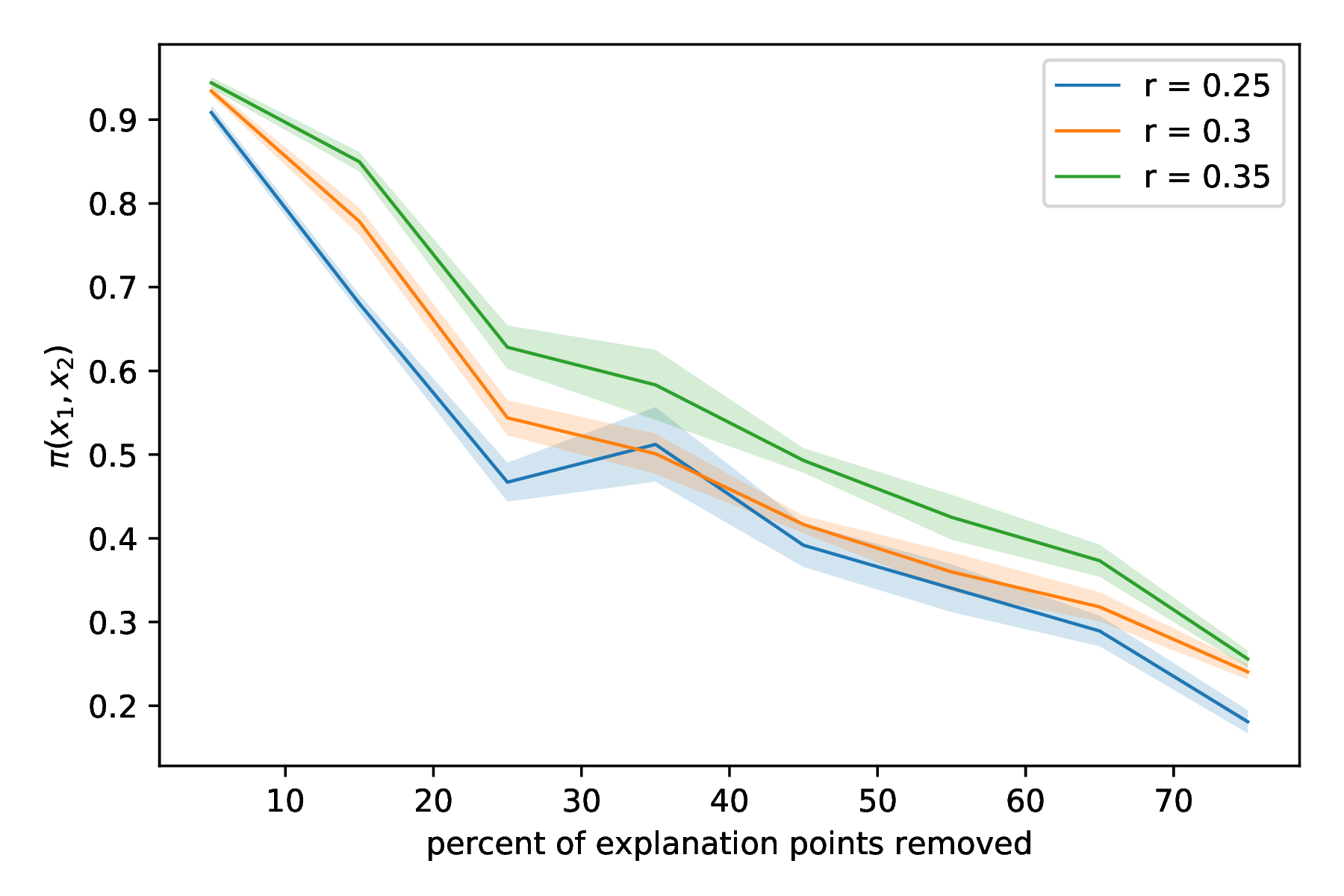}
  \end{subfigure}
  \begin{subfigure}[b]{.33\textwidth}
    \centering
    \includegraphics[width=\linewidth]{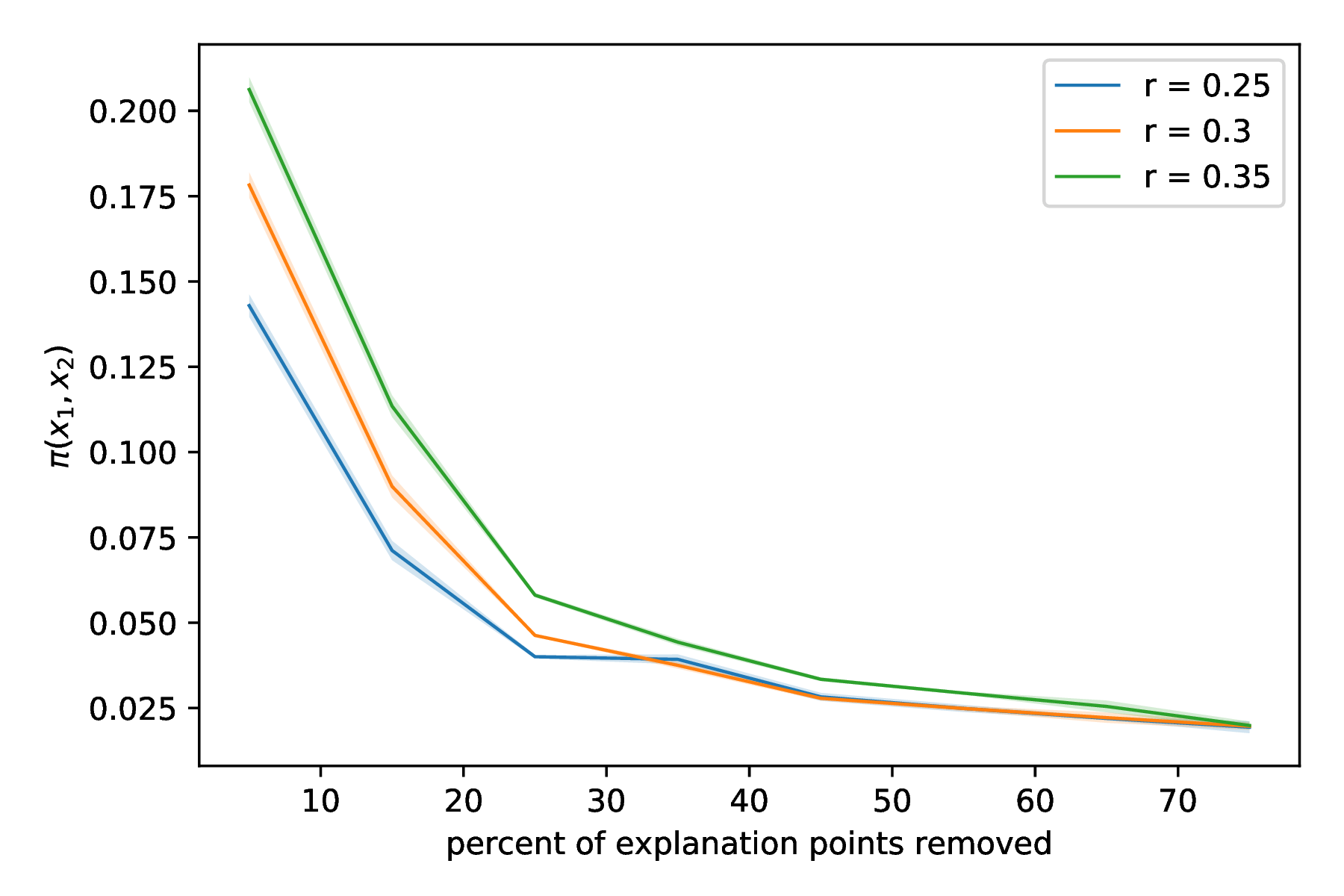}
  \end{subfigure}
  \smallskip
  \begin{subfigure}[b]{.33\textwidth}
    \centering
    \includegraphics[width=\linewidth]{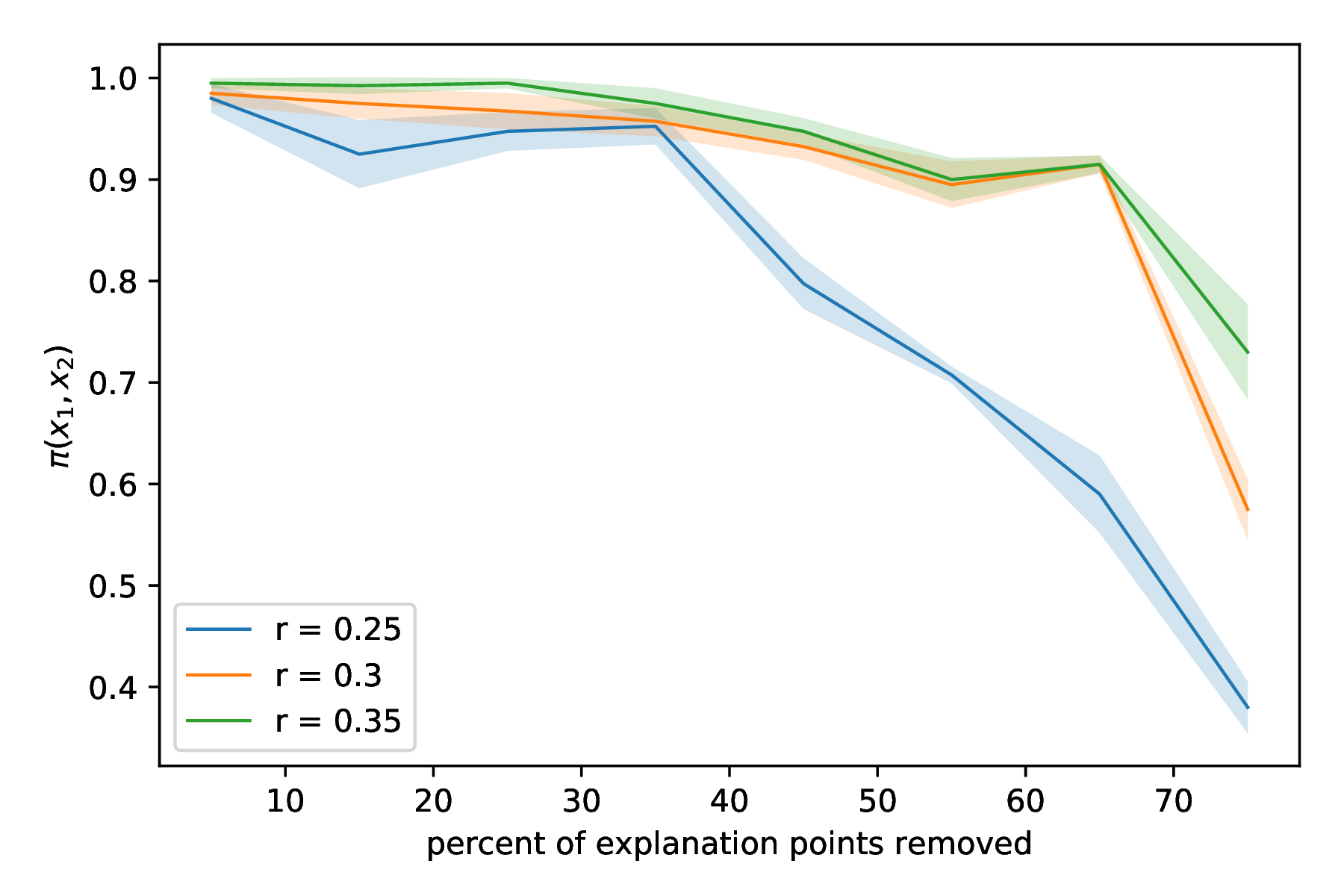}
  \end{subfigure}
  \begin{subfigure}[b]{.33\textwidth}
    \centering
    \includegraphics[width=\linewidth]{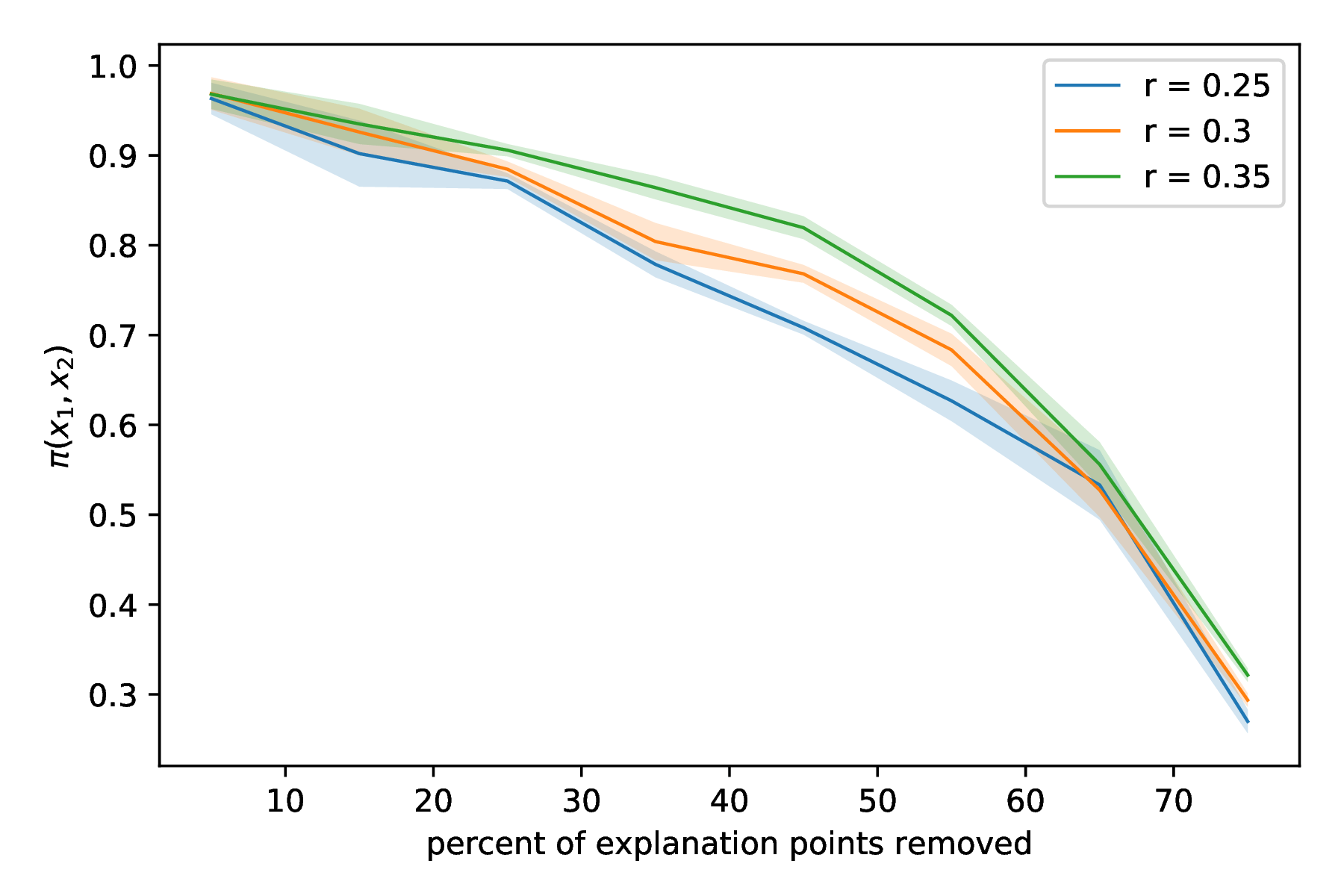}
  \end{subfigure}
  \begin{subfigure}[b]{.33\textwidth}
    \centering
    \includegraphics[width=\linewidth]{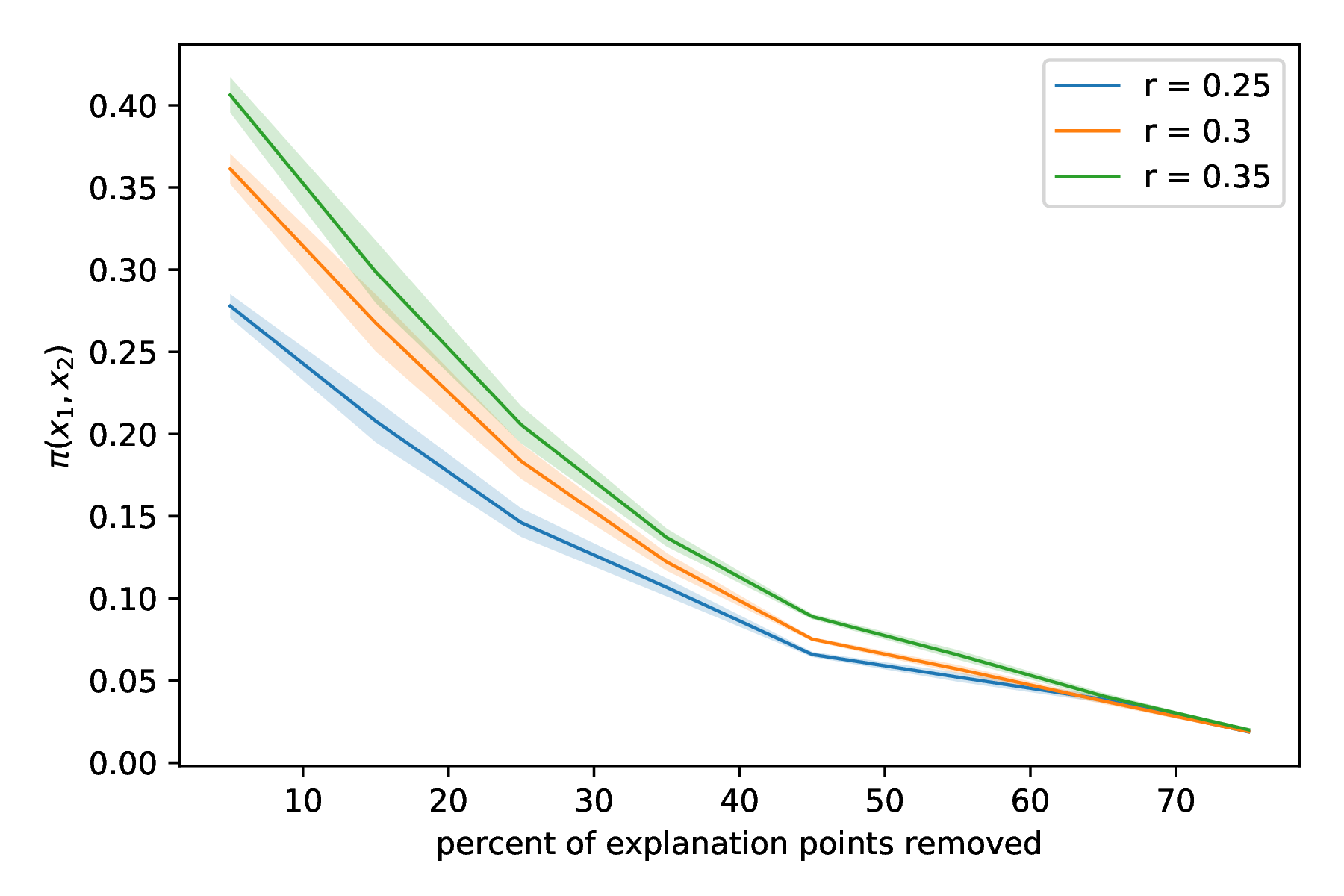}
  \end{subfigure}
  \caption{MLP results: $k$-medoid explanations (top), $k$-medoid explanations + random draws from small balls around the explanations (middle), full $\{x \mid x \in \features, \Lambda_{\alpha}(x) = 1\}$ (bottom). 
  The three metrics are in column: max $\pi(x,x')$ (left), top $5$ percent of all $\pi(x,x')$'s (middle), average $\pi(x,x')$ (right).
  }
  \label{fig: mlp_random}
\end{figure*}

\begin{figure*}
\begin{center}
\begin{subfigure}[b]{0.33\textwidth}
    \centering
    \includegraphics[width=\linewidth]{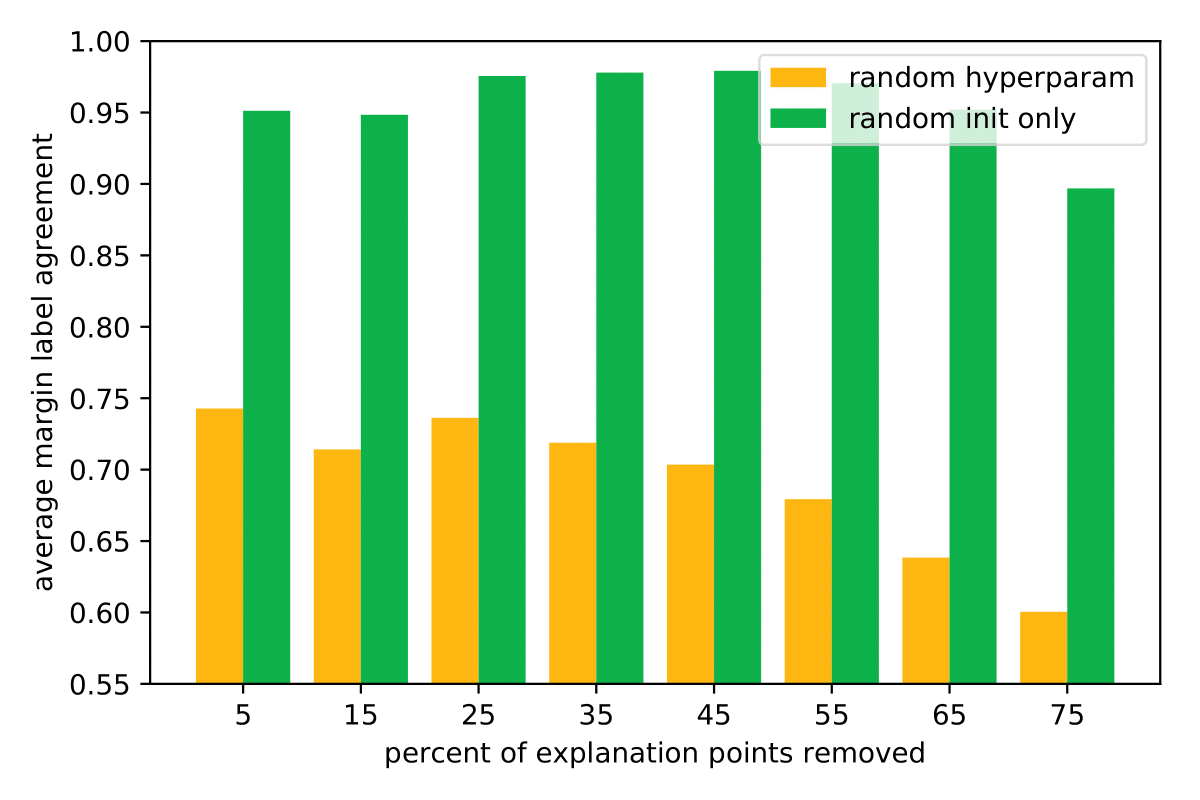}
\end{subfigure}%
\begin{subfigure}[b]{0.33\textwidth}
    \centering
    \includegraphics[width=\linewidth]{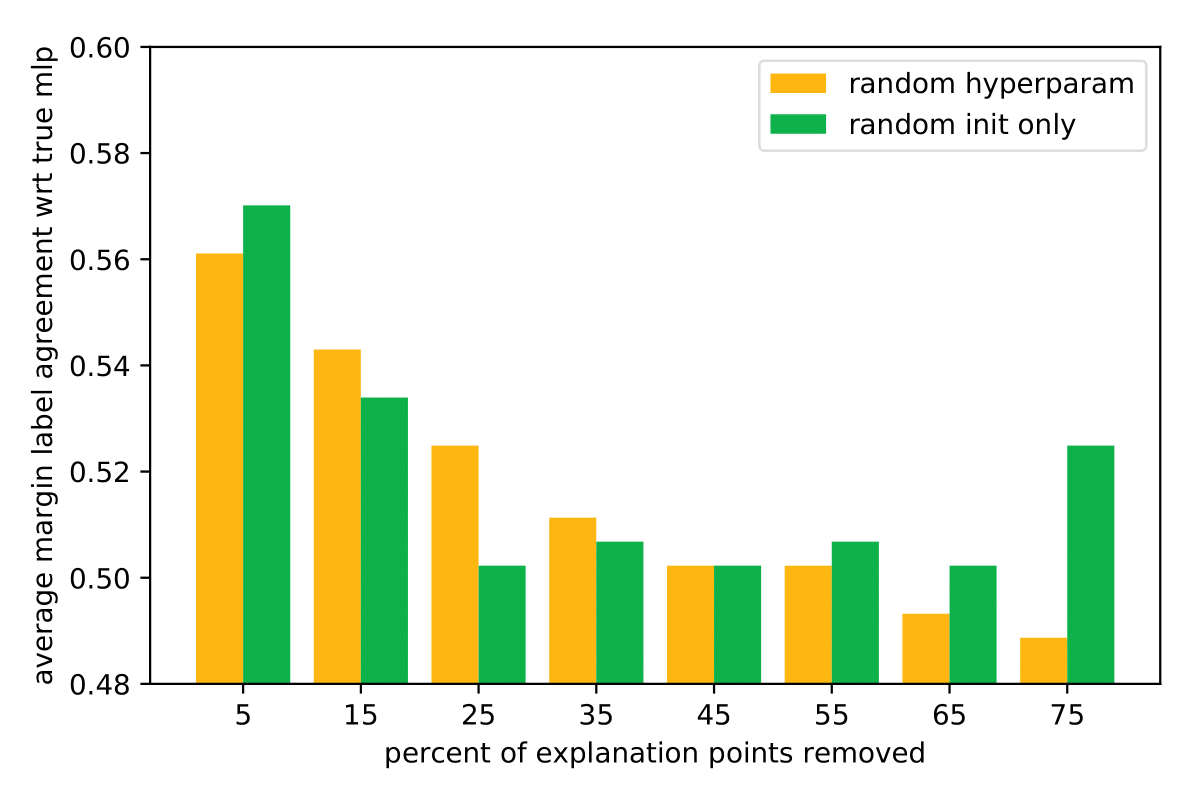}
\end{subfigure}%
\caption{Boundary point label agreement within $\consistent$ (left), and boundary point label agreement of $\consistent$ with respect to $h^*$ (right). This is estimated by sampling $h$ from version space using random initializations of parameters (green) and hyperparameters (yellow), respectively.}
\label{fig: bar_nonrandom_mlp}
\end{center}
\end{figure*}

\begin{figure*}
  \begin{subfigure}[b]{.33\textwidth}
    \centering
    \includegraphics[width=\linewidth]{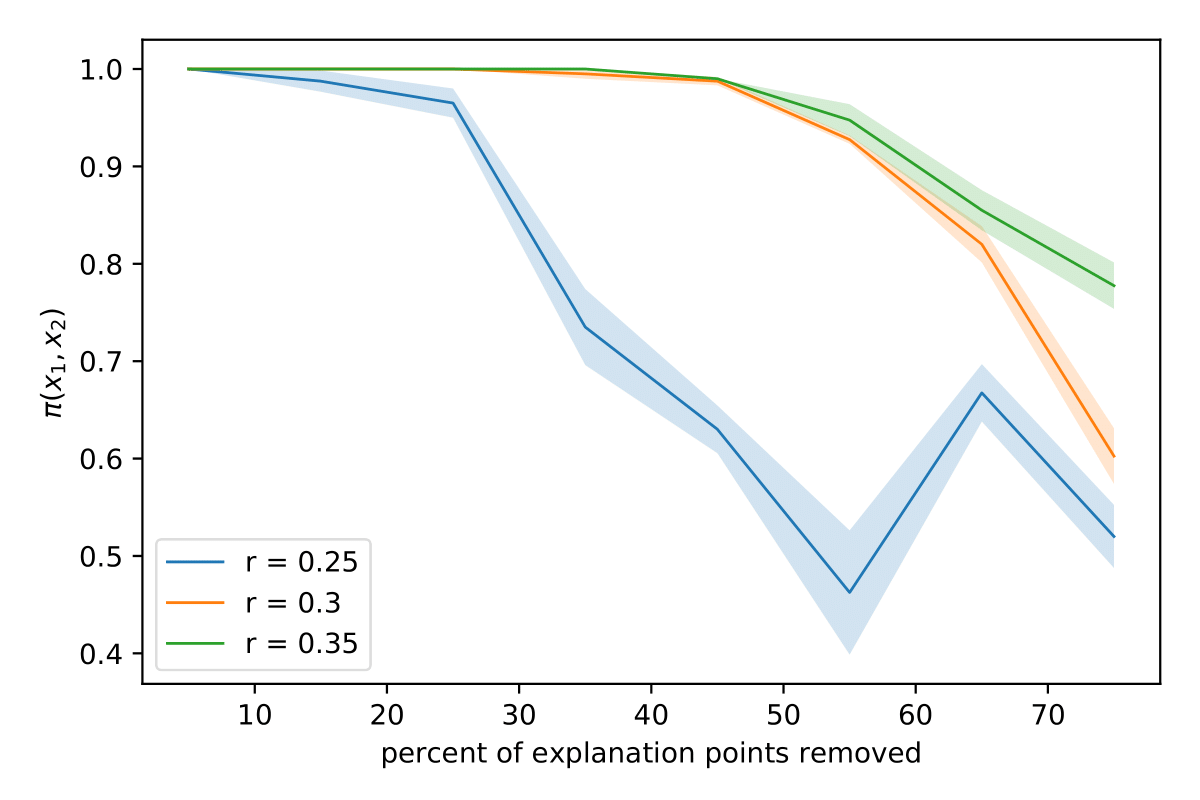}
  \end{subfigure}
  \begin{subfigure}[b]{.33\textwidth}
    \centering
    \includegraphics[width=\linewidth]{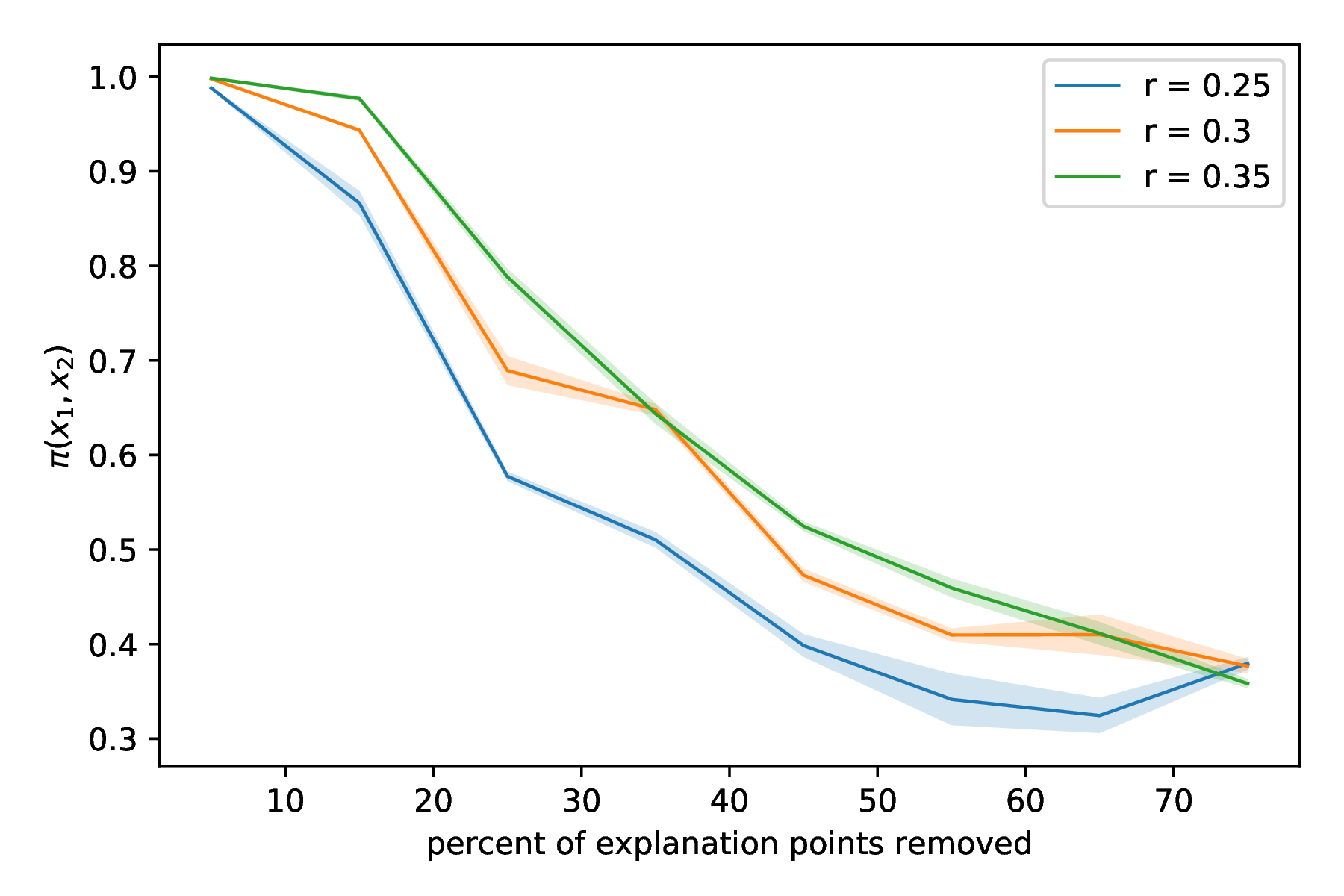}
  \end{subfigure}
  \begin{subfigure}[b]{.33\textwidth}
    \centering
    \includegraphics[width=\linewidth]{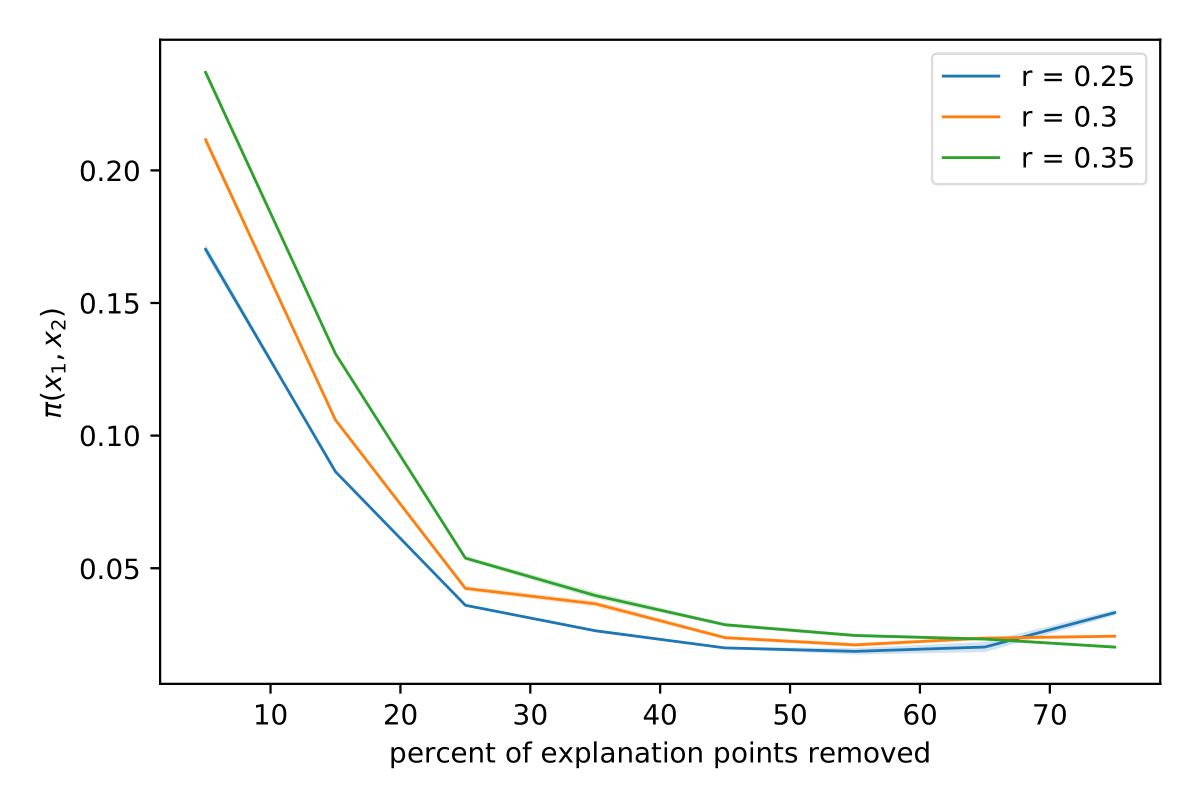}
  \end{subfigure}
  \caption{Plots of the $\max_{(x, x') \in \margin} \pi(x, x')$ (left), average of top $5$ percent of all $\pi(x, x')$'s (middle) and average of all $\pi(x, x')$ (right) for the MLP case with random initialization only under $k$-medoid explanations.}
  \label{fig: mlp_nonrandom}
\end{figure*}

Indeed, as is noted in \citep{jagielski2020high}, it seems generally implausible for attackers to know the \emph{exact} hyperparameters used to train the networks, which has been the assumption in the past model extraction works. And so, from hereon, we experiment with the natural, sampling procedure in the absence of such knowledge, which is just to randomly initialize the network and also the set of hyperparameters ($\ell_2$ regularization constant, learning rate, momentum, batch size). These are randomly sampled from uniform distributions that contain the hyperparameters' true values. Verily, this leads to greater variation (please see the yellow barplots in Figure~\ref{fig: bar_nonrandom_mlp}).

Since neural networks may require higher sample complexity, we also examine data augmentation techniques that one might consider to enhance the explanation set. In addition to 1) just the explanations, we consider 2) explanations plus random draws from Gaussian balls of radius $0.1$ around the explanations 3) the full $\{x \mid x \in \features, \Lambda_{\alpha}(x) = 1\}$, which would correspond to ``perfect'' extrapolation of the feature space based off of $\explanation(\features)$. The plots are given in Figure~\ref{fig: mlp_random}. 

Comparing the effectiveness of the data augmentation, We observe small change in the $\pi$ with mildly augmented data as in 1). However, the full knowledge of the $\{x \mid x \in \features, \Lambda_{\alpha}(x) = 1\}$ results in higher measures of boundary certainty. Indeed, this is to be expected since more labeled data naturally induces higher boundary certainty.

\textbf{Monotonicity:} In terms of the general trend for monotonicity, we again observe that margin-distancing does help to reduce all three metrics. Qualitatively, $\max_{(x, x') \in \margin} \pi(x, x')$ trend is non-monotonic and jagged at places, but smooths out with even a bit of averaging (the latter two metrics). In fact, we see that the average of top $5$ percent of $\pi(x, x')$'s and average of all $\pi(x, x')$ metrics are monotonic. This is instructive in that it suggests that binary search could be used to efficiently search for the appropriate threshold.

Quantitatively, we verify if this trend is generally monotonic as before. We pick $10$ target boundary certainty values evenly spaced out from the attainable boundary certainties as found on the y-axis. For each target value, we find the minimum percent of explanation points that need to be removed to bring the boundary certainty below the target and compare against the percentage found by binary search.

Under $k$-medoid explanations for MLP models, we again summarize the results by looking at the average of the difference and the max difference. Here, due to the much smoother curves (relative to those of the linear models) and the large discrepancy in boundary certainties at the two extremes, we find that under all three $r$'s, binary search is able to match the optimal percentage needed to bring the boundary certainty below the target value.

\subsection{Fair accessibility to explanations}

A notable concern that may arise with margin distancing is that though omission of prototypical explanations is necessary, it may disproportionately affect individuals in regions close to the boundary. We plot the composition of the boundary region in the appendix under linear models logistic and SVM models. We observe that margin-distancing does disparately affect the release of explanations to different groups. Verily, this is another important factor that needs to be taken into account in the explanation release process.

\section{CONCLUSION}
In this paper, we propose margin-distancing as a way of making the tradeoff between transparency and gaming. We identify the source of the tension as boundary points. Our technical contribution is an ``average-case'' analysis of strategic manipulation with partial knowledge of the true model through model explanations. Altogether, this work puts the intersection between strategic ML and explainability on firmer theoretical foundation.

Our paper opens up several novel directions: 1) For what other settings can we prove monotonicity or upper bounds? Especially useful would be upper bounds on whether a certain threshold $\kappa$ is achievable with at least $l$ percent of all explanations. With this, one can avoid futile searches for non-realizable $\kappa$'s. 2) How could we induce small boundary certainty for other types of explanations such as global explanations? 3) How else can we adapt explainability methods to account for gaming?
For this, we believe our proposal of measuring $\explanation(\features)$ quality in terms of the boundary certainty of $\consistent$ may still be helpful as a measure of how much a strategic agent can infer about $h^*$ from $\explanation(\features)$.

\paragraph{Acknowledgments.} We thank the anonymous reviewers for helpful comments that improve the presentation of this paper. TY wishes to thank Ariel Procaccia, Yiling Chen and Chara Podimata for discussions.

\bibliography{refs}

\begin{thebibliography}{30}
\providecommand{\natexlab}[1]{#1}
\providecommand{\url}[1]{\texttt{#1}}
\expandafter\ifx\csname urlstyle\endcsname\relax
  \providecommand{\doi}[1]{doi: #1}\else
  \providecommand{\doi}{doi: \begingroup \urlstyle{rm}\Url}\fi

\bibitem[A{\"\i}vodji et~al.(2019)A{\"\i}vodji, Arai, Fortineau, Gambs, Hara,
  and Tapp]{aivodji2019fairwashing}
U.~A{\"\i}vodji, H.~Arai, O.~Fortineau, S.~Gambs, S.~Hara, and A.~Tapp.
\newblock Fairwashing: the risk of rationalization.
\newblock In \emph{International Conference on Machine Learning}, pages
  161--170. PMLR, 2019.

\bibitem[Anders et~al.(2020)Anders, Pasliev, Dombrowski, M{\"u}ller, and
  Kessel]{anders2020fairwashing}
C.~Anders, P.~Pasliev, A.-K. Dombrowski, K.-R. M{\"u}ller, and P.~Kessel.
\newblock Fairwashing explanations with off-manifold detergent.
\newblock In \emph{International Conference on Machine Learning}, pages
  314--323. PMLR, 2020.

\bibitem[Assouad(1983)]{assouad1983densite}
P.~Assouad.
\newblock Densit{\'e} et dimension.
\newblock In \emph{Annales de l'Institut Fourier}, volume~33, pages 233--282,
  1983.

\bibitem[Becker(1968)]{becker1968crime}
G.~S. Becker.
\newblock Crime and punishment: An economic approach.
\newblock In \emph{The economic dimensions of crime}, pages 13--68. Springer,
  1968.

\bibitem[Chen et~al.(2018{\natexlab{a}})Chen, Dwivedi, Wainwright, and
  Yu]{chen2018fast}
Y.~Chen, R.~Dwivedi, M.~J. Wainwright, and B.~Yu.
\newblock Fast mcmc sampling algorithms on polytopes.
\newblock \emph{The Journal of Machine Learning Research}, 19\penalty0
  (1):\penalty0 2146--2231, 2018{\natexlab{a}}.

\bibitem[Chen et~al.(2018{\natexlab{b}})Chen, Podimata, Procaccia, and
  Shah]{chen2018strategyproof}
Y.~Chen, C.~Podimata, A.~D. Procaccia, and N.~Shah.
\newblock Strategyproof linear regression in high dimensions.
\newblock In \emph{Proceedings of the 2018 ACM Conference on Economics and
  Computation}, pages 9--26, 2018{\natexlab{b}}.

\bibitem[D'Amour et~al.(2020)D'Amour, Heller, Moldovan, Adlam, Alipanahi,
  Beutel, Chen, Deaton, Eisenstein, Hoffman, et~al.]{d2020underspecification}
A.~D'Amour, K.~Heller, D.~Moldovan, B.~Adlam, B.~Alipanahi, A.~Beutel, C.~Chen,
  J.~Deaton, J.~Eisenstein, M.~D. Hoffman, et~al.
\newblock Underspecification presents challenges for credibility in modern
  machine learning.
\newblock \emph{arXiv preprint arXiv:2011.03395}, 2020.

\bibitem[Dong et~al.(2018)Dong, Roth, Schutzman, Waggoner, and
  Wu]{dong2018strategic}
J.~Dong, A.~Roth, Z.~Schutzman, B.~Waggoner, and Z.~S. Wu.
\newblock Strategic classification from revealed preferences.
\newblock In \emph{Proceedings of the 2018 ACM Conference on Economics and
  Computation}, pages 55--70, 2018.

\bibitem[Edwards and Veale(2017)]{edwards2017slave}
L.~Edwards and M.~Veale.
\newblock Slave to the algorithm: Why a right to an explanation is probably not
  the remedy you are looking for.
\newblock \emph{Duke L. \& Tech. Rev.}, 16:\penalty0 18, 2017.

\bibitem[Hardt et~al.(2016)Hardt, Megiddo, Papadimitriou, and
  Wootters]{hardt2016strategic}
M.~Hardt, N.~Megiddo, C.~Papadimitriou, and M.~Wootters.
\newblock Strategic classification.
\newblock In \emph{Proceedings of the 2016 ACM conference on innovations in
  theoretical computer science}, pages 111--122, 2016.

\bibitem[Jagielski et~al.(2020)Jagielski, Carlini, Berthelot, Kurakin, and
  Papernot]{jagielski2020high}
M.~Jagielski, N.~Carlini, D.~Berthelot, A.~Kurakin, and N.~Papernot.
\newblock High accuracy and high fidelity extraction of neural networks.
\newblock In \emph{29th $\{$USENIX$\}$ Security Symposium ($\{$USENIX$\}$
  Security 20)}, pages 1345--1362, 2020.

\bibitem[Jeyakumar et~al.(2020)Jeyakumar, Noor, Cheng, Garcia, and
  Srivastava]{jeyakumar2020can}
J.~V. Jeyakumar, J.~Noor, Y.-H. Cheng, L.~Garcia, and M.~Srivastava.
\newblock How can i explain this to you? an empirical study of deep neural
  network explanation methods.
\newblock \emph{Advances in Neural Information Processing Systems}, 33, 2020.

\bibitem[Kahneman and Tversky(2013)]{kahneman2013prospect}
D.~Kahneman and A.~Tversky.
\newblock Prospect theory: An analysis of decision under risk.
\newblock In \emph{Handbook of the fundamentals of financial decision making:
  Part I}, pages 99--127. World Scientific, 2013.

\bibitem[Kim et~al.(2016)Kim, Koyejo, Khanna, et~al.]{kim2016examples}
B.~Kim, O.~Koyejo, R.~Khanna, et~al.
\newblock Examples are not enough, learn to criticize! criticism for
  interpretability.
\newblock In \emph{NIPS}, pages 2280--2288, 2016.

\bibitem[Kleinberg and Raghavan(2020)]{kleinberg2020classifiers}
J.~Kleinberg and M.~Raghavan.
\newblock How do classifiers induce agents to invest effort strategically?
\newblock \emph{ACM Transactions on Economics and Computation (TEAC)},
  8\penalty0 (4):\penalty0 1--23, 2020.

\bibitem[Koh and Liang(2017)]{koh2017understanding}
P.~W. Koh and P.~Liang.
\newblock Understanding black-box predictions via influence functions.
\newblock In \emph{International conference on machine learning}, pages
  1885--1894. PMLR, 2017.

\bibitem[Marx et~al.(2020)Marx, Calmon, and Ustun]{marx2020predictive}
C.~Marx, F.~Calmon, and B.~Ustun.
\newblock Predictive multiplicity in classification.
\newblock In \emph{International Conference on Machine Learning}, pages
  6765--6774. PMLR, 2020.

\bibitem[Miller et~al.(2020)Miller, Milli, and Hardt]{miller2020strategic}
J.~Miller, S.~Milli, and M.~Hardt.
\newblock Strategic classification is causal modeling in disguise.
\newblock In \emph{International Conference on Machine Learning}, pages
  6917--6926. PMLR, 2020.

\bibitem[Milli et~al.(2019)Milli, Schmidt, Dragan, and Hardt]{milli2019model}
S.~Milli, L.~Schmidt, A.~D. Dragan, and M.~Hardt.
\newblock Model reconstruction from model explanations.
\newblock In \emph{Proceedings of the Conference on Fairness, Accountability,
  and Transparency}, pages 1--9, 2019.

\bibitem[Mitchell(1977)]{mitchell1977version}
T.~M. Mitchell.
\newblock Version spaces: A candidate elimination approach to rule learning.
\newblock In \emph{Proceedings of the 5th international joint conference on
  Artificial intelligence-Volume 1}, pages 305--310, 1977.

\bibitem[Murdoch et~al.(2019)Murdoch, Singh, Kumbier, Abbasi-Asl, and
  Yu]{murdoch2019definitions}
W.~J. Murdoch, C.~Singh, K.~Kumbier, R.~Abbasi-Asl, and B.~Yu.
\newblock Definitions, methods, and applications in interpretable machine
  learning.
\newblock \emph{Proceedings of the National Academy of Sciences}, 116\penalty0
  (44):\penalty0 22071--22080, 2019.

\bibitem[OpenSCHUFA(2019)]{openschufa}
OpenSCHUFA.
\newblock Openschufa project.
\newblock 2019.
\newblock URL \url{https://openschufa.de/}.

\bibitem[Selbst and Powles(2018)]{selbst2018meaningful}
A.~Selbst and J.~Powles.
\newblock “meaningful information” and the right to explanation.
\newblock In \emph{Conference on Fairness, Accountability and Transparency},
  pages 48--48. PMLR, 2018.

\bibitem[Semenova et~al.(2019)Semenova, Rudin, and Parr]{semenova2019study}
L.~Semenova, C.~Rudin, and R.~Parr.
\newblock A study in rashomon curves and volumes: A new perspective on
  generalization and model simplicity in machine learning.
\newblock \emph{arXiv preprint arXiv:1908.01755}, 2019.

\bibitem[Slack et~al.(2020)Slack, Hilgard, Jia, Singh, and
  Lakkaraju]{slack2020fooling}
D.~Slack, S.~Hilgard, E.~Jia, S.~Singh, and H.~Lakkaraju.
\newblock Fooling lime and shap: Adversarial attacks on post hoc explanation
  methods.
\newblock In \emph{Proceedings of the AAAI/ACM Conference on AI, Ethics, and
  Society}, pages 180--186, 2020.

\bibitem[Tram{\`e}r et~al.(2016)Tram{\`e}r, Zhang, Juels, Reiter, and
  Ristenpart]{tramer2016stealing}
F.~Tram{\`e}r, F.~Zhang, A.~Juels, M.~K. Reiter, and T.~Ristenpart.
\newblock Stealing machine learning models via prediction apis.
\newblock In \emph{25th $\{$USENIX$\}$ Security Symposium ($\{$USENIX$\}$
  Security 16)}, pages 601--618, 2016.

\bibitem[Tsirtsis and Gomez-Rodriguez(2020)]{tsirtsis2020decisions}
S.~Tsirtsis and M.~Gomez-Rodriguez.
\newblock Decisions, counterfactual explanations and strategic behavior.
\newblock \emph{arXiv preprint arXiv:2002.04333}, 2020.

\bibitem[Wachter et~al.(2017)Wachter, Mittelstadt, and
  Floridi]{wachter2017right}
S.~Wachter, B.~Mittelstadt, and L.~Floridi.
\newblock Why a right to explanation of automated decision-making does not
  exist in the general data protection regulation.
\newblock \emph{International Data Privacy Law}, 7\penalty0 (2):\penalty0
  76--99, 2017.

\bibitem[Wieringa(2020)]{wieringa2020account}
M.~Wieringa.
\newblock What to account for when accounting for algorithms: A systematic
  literature review on algorithmic accountability.
\newblock In \emph{Proceedings of the 2020 Conference on Fairness,
  Accountability, and Transparency}, pages 1--18, 2020.

\bibitem[Yeh and Lien(2009)]{yeh2009comparisons}
I.-C. Yeh and C.-h. Lien.
\newblock The comparisons of data mining techniques for the predictive accuracy
  of probability of default of credit card clients.
\newblock \emph{Expert Systems with Applications}, 36\penalty0 (2):\penalty0
  2473--2480, 2009.

\end{thebibliography}

\onecolumn

\appendix
% Todos:
% \begin{enumerate}
%     \item For every boundary certainty vs. percent of explanation points removed curve $\pi(p)$, we focus on y values between $[\pi(p_{\min}), \pi_{p_{\max}} ]$, ($p_{\min} = 0, p_{\max} = 75\%$); we simulate binary search for all points in this interval, and check the difference between the percentage we get vs the minimum percentage of explanation removed. 
    
%     Interpolate between points with line.
%     Binary search should work still because of intermediate value theorem.
% \end{enumerate}

\section{Proofs}
\label{sec:proofs}

\subsection{Section 4 Proofs}
\label{sec:proofs-sphere}

Recall that in Section~\ref{sec:sphere}, $\Xcal$ is the origin-centered unit sphere in $\RR^d$, and $\Hcal$ is the set of homogeneous linear classisfiers in $\RR^d$, and $\Ucal$ denotes the uniform distribution over $\Hcal$. 

In the proofs that follow, we will mainly work in terms of polar angles $\phi$ and $\psi$. Recall $\phi = \arcsin \alpha$ is defined to be the maximum angle between any $w \in \consistent$ and $w^*$, and $\psi = 2 \arcsin(\frac{r}{2})$ measures the thickness of the boundary region $\Ncal_r(\Xcal)$. 
%Also, the sign function is such that: 
%expressed

Now, we prove a characterization of the boundary region in terms of $\psi$. 

\begin{fact}
$\Ncal_r(\Xcal) = \cbr{x \in \Xcal \mid  \inner{w^*}{x} \in [-\sin\psi, \sin\psi) }$.
%, where $\psi = \arccos(1-\frac{r^2}{2})$.
\end{fact}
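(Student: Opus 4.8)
The plan is to reduce the claim to a one-dimensional statement about the angle $\theta(x,w^*)$, exploiting that on the unit sphere Euclidean distance and angular distance are in monotone correspondence. For unit vectors $x,x'$ we have $\|x-x'\| = 2\sin(\theta(x,x')/2)$, so $x'\in\region(x)$ is equivalent to $\theta(x,x') < 2\arcsin(r/2) = \psi$. Since $h^*(x) = \sign\inner{w^*}{x} = \sign\cos\theta(x,w^*)$ depends only on whether $\theta(x,w^*)$ lies below or above $\pi/2$, the condition $x\in\Ncal_r(\Xcal)$ becomes: there is a sphere point $x'$, at angular distance $<\psi$ from $x$, whose angle to $w^*$ lands on the opposite side of $\pi/2$ from $\theta(x,w^*)$.

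The key step is a geometric claim about the reachable set: for fixed $x$ with $\theta := \theta(x,w^*)$, as $x'$ ranges over the cap $\{x' : \theta(x,x') \le d\}$, the value $\theta(x',w^*)$ ranges over exactly $[\max(0,\theta-d),\min(\pi,\theta+d)]$. The two inclusions are the triangle inequality for the geodesic metric on the sphere; surjectivity onto the interval follows by moving $x'$ along the great circle through $x$ and $w^*$ (toward $w^*$ to shrink the angle, toward $-w^*$ to grow it) and the intermediate value theorem. Taking the union over $d\in[0,\psi)$, the reachable values of $\theta(x',w^*)$ from $\region(x)$ form $(\theta-\psi,\theta+\psi)\cap[0,\pi]$. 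Equivalently, one can compute directly that the nearest label-flipping candidate is the metric projection of $x$ onto the equator $\{z\in\Xcal:\inner{w^*}{z}=0\}$, namely $(x-\inner{w^*}{x}w^*)/\sqrt{1-\inner{w^*}{x}^2}$, at Euclidean distance $\sqrt{2-2\sqrt{1-\inner{w^*}{x}^2}}$; using $1-r^2/2 = \cos\psi$, the ``flip within $r$'' condition then simplifies to $\sqrt{1-\inner{w^*}{x}^2}>\cos\psi$, i.e.\ $|\inner{w^*}{x}| < \sin\psi$ when $\psi\le\pi/2$.

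It remains to do the endpoint bookkeeping via a short case split on $\sign$. If $h^*(x) = -1$, a move that reaches the equator exactly already flips the label (since $\sign 0 = +1$), so $x$ is a boundary point iff $\theta(x,w^*) < \pi/2 + \psi$, i.e.\ $\inner{w^*}{x}\in(-\sin\psi,0)$; if $h^*(x) = +1$ one must reach the open half $\{\inner{w^*}{z}<0\}$, giving $\theta(x,w^*) > \pi/2 - \psi$, i.e.\ $\inner{w^*}{x}\in[0,\sin\psi)$. Their union is the set in the statement, up to the single left endpoint, whose inclusion hinges on the $\sign(0)=+1$ convention and the strictness of $\|x-x'\|<r$, and which is in any case a measure-zero discrepancy (irrelevant to every integral that uses this Fact later). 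The only genuinely nontrivial point is the reachable-range claim above, together with the attendant care over strict versus non-strict inequalities ($\region(x)$ is open, and $\sign$ breaks the symmetry between the two sides); I would also flag the implicit restriction $\psi\le\pi/2$ (i.e.\ $r\le\sqrt2$), since for $\psi>\pi/2$ a flip is always achievable and $\Ncal_r(\Xcal)$ degenerates to essentially all of $\Xcal$.
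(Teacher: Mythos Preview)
Your proof is correct and proceeds by a more geometric route than the paper's. The paper argues both directions by explicit Cartesian computation: for the forward direction it writes $x = \beta w^* + \sqrt{1-\beta^2}\,x_\perp$ and checks directly that the equator point $x_\perp$ (or a small perturbation of it into the negative half-space) lies within distance $r$; for the converse it fixes coordinates $w^* = e_1$, places $x$ in the $(e_1,e_2)$-plane, and from $\sum z_i^2 = 1$ together with $\|z-x\|^2 \le r^2$ derives $\sin\theta\, z_1 + \cos\theta\, z_2 \ge \cos\psi$, then case-splits on the sign of $\theta$. Your argument replaces these coordinate computations by the angular-metric identity $\|x-x'\| = 2\sin(\theta(x,x')/2)$ and the spherical triangle inequality (plus great-circle interpolation for surjectivity) to obtain the reachable interval of $\theta(x',w^*)$; this is tidier and makes the role of $\psi$ transparent, at the cost of invoking the geodesic triangle inequality rather than a bare algebraic identity. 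You also treat the endpoints more carefully than the paper does: in the paper's forward argument, at $\beta = -\sin\psi$ one gets $\|x-x_\perp\| = r$, not $< r$, so strictly speaking the left endpoint is \emph{not} in $\Ncal_r(\Xcal)$; this is a genuine (if harmless, measure-zero) slip in the Fact as stated, which your proposal correctly flags. Your remark on the implicit restriction $\psi \le \pi/2$ is likewise a point the paper leaves tacit.
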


\begin{proof}
Recall our definition that $\Ncal_r(\Xcal) := \{x \in \features \mid \exists x' \in \region(x) \land h^*(x') \neq h^*(x) \}$, where $h^*(x) = \sign(\inner{w^*}{x})$. Thus, it suffices to show that

\[
\inner{w^*}{x} \in [-\sin\psi, \sin\psi)
 \Longleftrightarrow \exists x' \in \region(x) \centerdot \sign(\inner{w^*}{x'}) \neq \sign(\inner{w^*}{x}).
\]
%\abs{ \inner{w^*}{x} } \leq \sin \psi

We show the implications in both directions.
\paragraph{($\Rightarrow$):} Suppose we are given $x$ such that $\inner{w^*}{x} \in [-\sin\psi, \sin\psi)$. Then $x$ can be represented as $x = \beta w^* + \sqrt{1 - \beta^2} x_{\perp}$, for some $\beta \in [-\sin\psi, \sin\psi)$, and $x_{\perp}$ is a unit vector perpendicular to $w^*$.  
Observe that $x - x_{\perp} = \beta w^* + (\sqrt{1-\beta^2} - 1)x_{\perp}$, and therefore,
\[
\| x - x_{\perp} \|_2
=
\sqrt{ \beta^2 + (\sqrt{1-\beta^2} - 1)^2 }
=
\sqrt{2 (1 - \sqrt{1-\beta^2})}
.
\]

We now consider two cases of $\beta$:
\begin{enumerate}
    \item If $\beta \in [-\sin\psi, 0)$, 
    we consider $x' = x_{\perp}$. First observe that $x' \in \region(x)$. Indeed,
    \[
    \| x - x' \|_2 =
    \sqrt{2 (1 - \sqrt{1-\beta^2})}
    \leq
    \sqrt{ 2 (1 - \cos\psi) }
    = r.
    \]
Meanwhile, $\sign(\inner{w^*}{x'}) = \sign(0) = 1 \neq -1 = \sign(\beta) = \sign(\inner{w^*}{x})$, which establishes the claim.
\item If $\beta \in [0, \sin\psi)$, we first observe that $\| x - x_\perp \| = \sqrt{2 (1 - \sqrt{1-\beta^2})} < \sqrt{2(1-\cos\psi)} = r$. 
Therefore, there exists a small enough $\gamma > 0$, such that $x' = -\gamma w^* + \sqrt{1-\gamma^2} x_{\perp}$ is close enough to $x_\perp$, and hence lie in $\region(x)$. Now, 
$\sign(\inner{w^*}{x'}) = \sign(-1)= -1 \neq 1 = \sign(\beta) = \sign(\inner{w^*}{x})$, which establishes the claim.
\end{enumerate}

\paragraph{($\Leftarrow$):} Assume toward contradiction that $\inner{w^*}{x} \in [-1, -\sin\psi) \cup [\sin \psi, +1]$. 
Without loss of generality (due to spherical symmetry) suppose that $w^* = (1,0,\ldots,0)$ and $x = (\sin\theta, \cos\theta, 0,\ldots,0)$ with $\theta \in [-\frac\pi2, -\psi) \cup [\psi, \frac\pi2]$.

Consider any $z \in \Xcal \cap \region(x)$. We have:
\[
\sum_{i=1}^d z_i^2 = 1,
\]
\[
(z_1 - \sin\theta)^2 + (z_2 - \cos\theta)^2 + \sum_{i=3}^d z_i^2 \leq r^2,
\]
holding simultaneously. Combining the above two equations, we get
\[
\sin\theta z_1 + \cos\theta z_2 \geq 1 - \frac{r^2}{2} = \cos\psi. 
\]

We now consider two cases of $\theta$:
\begin{enumerate}
    \item $\theta \in [\psi, \frac\pi2]$. In this case, $\cos\theta \leq \cos\psi$. And so, $\sin \theta z_1 \geq \cos\psi - \cos \theta z_2 \geq 0$. Therefore, for all $z \in \region(x)$, $\sin \theta \cdot z_1 \geq 0$ and hence $z_1 \geq 0$. In this case, 
    $\sign(\inner{w^*}{x}) = \sign(\sin \theta) = 1 = \sign(z_1) = \sign(\inner{w^*}{z})$. 
    
    \item $\theta \in [-\frac\pi2, -\psi)$. In this case, $\cos\theta < \cos\psi$. And so, $\sin \theta z_1 \geq \cos\psi - \cos \theta z_2 > 0$. Therefore, for all $z \in \region(x)$, $\sin \theta \cdot z_1 > 0$ and hence $z_1 < 0$. In conclusion,
    $\sign(\inner{w^*}{x}) = \sign(\sin \theta) = -1 = \sign(z_1) = \sign(\inner{w^*}{z})$. 
\end{enumerate}

In either case, $\sign(\inner{w^*}{x}) = \sign(\inner{w^*}{z})$ holds for all $z \in \region(x)$, which contradicts the assumption that $\exists x' \in \region(x) \centerdot \sign(\inner{w^*}{x'}) \neq \sign(\inner{w^*}{x})$. This concludes the proof.
\end{proof}
%Recall that $\theta > \psi$, which implies that  Since $w^* = (1, 0, ...)$, then this implies that if $x$ is classified as $1$, so is $z$ and vice versa. 

%\note{We are keeping it as $>$ and not $\geq$?}
Recall that we define $\Lambda_\alpha(x) = \mathds{1}(\abs{\inner{w^*}{x}} > \alpha)$  and assume a uniform prior over homogeneous linear model class $\hypothesis$ and that $\features$ is the origin-centered unit sphere in $\RR^d$. With this, we show that the trend of monotonicity exists in this ``nice'' setting and we can also develop direct upper bounds on $\Pi$.

To do this, we first begin by characterizing the version space,

% in the Main body
\begin{lemma}[Restatement of Lemma~\ref{lemma: circle_vs}]
\label{lemma:circle_vs_restated}
Fix $\alpha \in [0,1)$. Recall that $\consistent = \{h \in \Hcal \mid h(x') = h^*(x'), \;  \forall x' \in \Ecal_{h^*}(\Xcal, \alpha) \} $ is the version space induced by explanation $\Ecal_{h^*}(\Xcal, \alpha)$. $\consistent$ can be equivalently written as:  
\[
\consistent = \cbr{ h_w \mid \| w \|_2 = 1,  w \cdot w^* \geq \sqrt{ 1 - \alpha^2} }.
\]
\end{lemma}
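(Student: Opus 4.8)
The plan is to prove the two set inclusions separately. Writing a generic unit vector as $w = \beta w^* + \sqrt{1-\beta^2}\,u$ with $\beta = \inner{w}{w^*}$ and $u$ a unit vector orthogonal to $w^*$ (the degenerate cases $w = \pm w^*$ being checked directly: $w=w^*$ gives $\beta=1\ge\sqrt{1-\alpha^2}$, and $w=-w^*$ gives $\beta=-1<\sqrt{1-\alpha^2}$ together with disagreement at $x=w^*$), and recalling that $\Ecal_{h^*}(\Xcal,\alpha) = \cbr{x \in \Xcal : \abs{\inner{w^*}{x}} > \alpha}$, the goal becomes: $h_w$ agrees with $h^* = h_{w^*}$ on every such $x$ if and only if $\beta \ge \sqrt{1-\alpha^2}$.

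For the inclusion ``$\beta \ge \sqrt{1-\alpha^2} \Rightarrow h_w \in \consistent$'', I would fix $x \in \Ecal_{h^*}(\Xcal,\alpha)$ and, after possibly replacing $x$ by $-x$ (legitimate because every inner product that appears will be shown strictly nonzero, so the $\sign(0)=+1$ convention never matters), reduce to the case $t := \inner{w^*}{x} > \alpha$, where I must show $\inner{w}{x} > 0$. Decomposing $x = t w^* + \sqrt{1-t^2}\,v$ with $v \perp w^*$ a unit vector, and using $\inner{u}{v}\ge -1$, one gets $\inner{w}{x} = \beta t + \sqrt{1-\beta^2}\sqrt{1-t^2}\,\inner{u}{v} \ge \beta t - \sqrt{1-\beta^2}\sqrt{1-t^2}$, so it suffices to show $\beta t > \sqrt{1-\beta^2}\sqrt{1-t^2}$. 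Both sides are nonnegative (as $\beta,t>0$), so after squaring this reduces to $\beta^2 + t^2 > 1$, which holds since $\beta^2 \ge 1-\alpha^2$ and $t^2 > \alpha^2$. Hence $\inner{w}{x} > 0$ and $h_w(x) = h^*(x)$.

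For the converse I would argue the contrapositive: given $\beta < \sqrt{1-\alpha^2}$, construct an explicit witness $x$ in the explanation set on which $h_w$ and $h^*$ disagree. Let $\theta_w = \arccos\beta$; then $\theta_w > \arccos\sqrt{1-\alpha^2} = \phi$. Working in the plane $\mathrm{span}(w^*,u)$, consider the unit vector $x_\gamma = \cos\gamma\, w^* - \sin\gamma\, u$, for which $\inner{w^*}{x_\gamma} = \cos\gamma$ and $\inner{w}{x_\gamma} = \cos\theta_w\cos\gamma - \sin\theta_w\sin\gamma = \cos(\theta_w+\gamma)$. I would then pick $\gamma$ in the interval $\bigl(\max\{0,\tfrac\pi2-\theta_w\},\ \tfrac\pi2-\phi\bigr)$ --- nonempty precisely because $\theta_w > \phi$ and $\phi < \tfrac\pi2$ (the latter since $\alpha<1$): this makes $\cos\gamma > \cos(\tfrac\pi2-\phi) = \sin\phi = \alpha$, so $x_\gamma \in \Ecal_{h^*}(\Xcal,\alpha)$ with $h^*(x_\gamma) = +1$, while $\theta_w + \gamma \in (\tfrac\pi2, \tfrac{3\pi}2)$ makes $\inner{w}{x_\gamma} < 0$, so $h_w(x_\gamma) = -1$. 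Thus $x_\gamma$ witnesses $h_w \notin \consistent$, as required.

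I expect the only mild obstacles to be bookkeeping rather than anything conceptual: (i) discharging the $\sign(0)=+1$ convention by noting that on $\Ecal_{h^*}(\Xcal,\alpha)$ one has $\inner{w^*}{x}\ne 0$, and that the constructions always produce $\inner{w}{x}\ne 0$; and (ii) in the converse, verifying that the admissible interval for $\gamma$ is nonempty (this is exactly where the strict inequality $\beta < \sqrt{1-\alpha^2}$, i.e. $\theta_w>\phi$, is used) and that $\theta_w+\gamma$ lands in $(\tfrac\pi2,\tfrac{3\pi}2)$ so that its cosine is genuinely negative. Beyond these, the lemma reduces to the elementary planar identity $\inner{w}{x_\gamma} = \cos(\theta_w+\gamma)$ after the orthogonal decomposition along $w^*$, so I do not anticipate a substantive difficulty.
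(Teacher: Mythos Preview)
Your proposal is correct and follows essentially the same approach as the paper: both directions are handled via the orthogonal decomposition $w = \beta w^* + \sqrt{1-\beta^2}\,u$, with the ``$\Leftarrow$'' direction reduced to the algebraic inequality $\beta^2 + t^2 > 1$ via Cauchy--Schwarz, and the ``$\Rightarrow$'' direction handled by constructing an explicit witness in $\mathrm{span}(w^*,u)$. The only cosmetic difference is that you parametrize the witness by an angle $\gamma$ and use $\inner{w}{x_\gamma}=\cos(\theta_w+\gamma)$, whereas the paper writes the witness directly in coordinates as $x_0=\gamma w^* - \sqrt{1-\gamma^2}\,w_\perp$ with $\gamma\in(\alpha,\beta)$ (after first observing $w^*\in\Ecal_{h^*}(\Xcal,\alpha)$ to secure $\inner{w}{w^*}\ge 0$); your angle formulation absorbs that preliminary observation by handling all $\theta_w\in(\phi,\pi]$ at once.
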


\begin{proof}
First observe that $w^* \in \Ecal_{h^*}(\Xcal, \alpha)$.
We will show
\[
(\forall x \in \Ecal_{h^*}(\Xcal, \alpha) \centerdot \sign(\inner{w}{x}) = \sign(\inner{w^*}{x}) ) \Longleftrightarrow \inner{w}{w^*} \geq \sqrt{1-\alpha^2}.
\]

%\note{$>$ and not $\geq$ here?}

We show the implications in both directions:

\paragraph{($\Rightarrow$)} 
First, since $w^* \in \Ecal_{h^*}(\Xcal, \alpha)$, we must have $\inner{w}{w^*} \geq 0$.

Assume towards contradiction that $\inner{w}{w^*} < \sqrt{1-\alpha^2}$, then $w$ can be represented as 
$w = \sqrt{1-\beta^2} w^\star + \beta w_{\perp}$, where $\beta > \alpha$ and $w_{\perp}$ is a unit vector perpendicular to $w^\star$. We now show that there is an $x_0 \in \Ecal_{h^*}(\Xcal, \alpha)$ such that $\sign(\inner{w^*}{x_0}) \neq \sign(\inner{w}{x_0})$, which will reach contradiction.

Choose $\gamma \in (\alpha, \beta)$, and define $x_0 = \gamma w^\star - \sqrt{1-\gamma^2} w_{\perp}$.
It can be readily checked that $\inner{w^*}{x_0} = \gamma > \alpha$, so $x_0 \in \Ecal_{h^*}(\Xcal, \alpha)$. 
Meanwhile, because $\gamma < \beta$,
\[
\inner{w}{x_0} = \sqrt{1-\beta^2} \gamma - \beta \sqrt{1 - \gamma^2}
= 
\sqrt{1-\beta^2} \gamma \del{ 1 - \frac{\beta}{\gamma} \cdot \frac{\sqrt{1-\gamma^2}}{\sqrt{1-\beta^2}} }
< 0,
\]
implying
$\sign(\inner{w}{x_0}) = -1 \neq 1 = \sign(\inner{w^*}{x_0})$.
%$\sign(\inner{w^*}{x_0}) = +1$, but 

\paragraph{($\Leftarrow$)}
If $\inner{w}{w^*} \geq \sqrt{1-\alpha^2}$, then $w$ can be represented as 
$w = \sqrt{1-\beta^2} w^\star + \beta w_{\perp}$, where $\beta \leq \alpha$ and $w_{\perp}$ is a unit vector perpendicular to $w^\star$.

Now consider any $x \in \Ecal_{h^*}(\Xcal, \alpha)$; we would like to show that $\sign(\inner{w^*}{x_0}) = \sign(\inner{w}{x_0})$.
First, since $x \in \Ecal_{h^*}(\Xcal, \alpha)$,
$x$ can be represented as $x = \xi w^\star + \sqrt{1-\xi^2} x_{\perp}$, where $\xi \in [-1,-\alpha) \cup (\alpha,+1]$ and $x_{\perp}$ is a unit vector perpendicular to $w^\star$.
% \note{Change to inequality here}

Without loss of generality, assume that $\xi \in (\alpha, +1]$; the case of $\xi \in [-1, \alpha)$ is symmetric.
In this case, we have $\sign(\inner{w^\star}{x}) = 1$. Meanwhile, 
\begin{align*}
\inner{w}{x} = & \inner{ \sqrt{1-\beta^2} w^\star + \beta w_{\perp}}{\xi w^\star + \sqrt{1-\xi^2} x_{\perp}} \\
= & \sqrt{1-\beta^2}\xi + \beta \sqrt{1-\xi^2} \inner{w_{\perp}}{x_{\perp}} \\
\geq & \sqrt{1-\beta^2}\xi - \beta \sqrt{1-\xi^2} \\ 
= & \sqrt{1-\beta^2}\xi (1 - \frac{\beta}{\xi} \cdot \frac{\sqrt{1-\xi^2}}{\sqrt{1-\beta^2}} )
> 0,
\end{align*}
where the first inequality is by Cauchy-Schwarz; the second inequality uses the observation that $\beta \leq \alpha < \xi$. The above implies that $\sign(\inner{w}{x}) = 1 = \sign(\inner{w^\star}{x})$.
\end{proof}

It is clear that increasing margin thickness $\psi$ leads to a strictly bigger margin region, and a higher $\max_{(x, x') \in \margin} \pi_{\alpha}(x, x')$. We derive an analytical form of this. 

%the rate of such increase as function of $\psi$.

%\note{Review this}

%\begin{theorem}
%$\max_{(x, x') \in \margin} \pi_{\alpha}(x, x')$ is increasing in $\psi$ in $[0,\phi]$ with:
%\[
%\max_{(x, x') \in \margin} \pi_{\alpha}(x, x') = \begin{cases}
%\frac{\int_0^\psi F(\theta) d\theta}{\int_0^\phi F(\theta) d\theta }  & \psi \leq \phi \\
%1 & \psi > \phi,
%\end{cases}
%\]
%where $F(\theta) = (1 - \frac{\cos^2 \phi}{\cos^2 \theta})^{(d-2)/2}$.
%\end{theorem}
%Under the settings in this section, 
\begin{theorem}[Restatement of Theorem~\ref{thm:max-pi-sphere}]
$\max_{(x, x') \in \margin} \pi_{\alpha}(x, x')$ can be written as:
\[
\max_{(x, x') \in \margin} \pi_{\alpha}(x, x') = \begin{cases}
\frac{\int_0^{\psi/2} F(\theta) d\theta}{\int_0^\phi F(\theta) d\theta }  & \psi \leq 2\phi \\
1 & \psi > 2\phi,
\end{cases}
\]
where $F(\theta) = (1 - \frac{\cos^2 \phi}{\cos^2 \theta})^{(d-2) / 2}$; therefore, it is strictly increasing for $\psi$ in $[0, 2\phi]$.
\end{theorem}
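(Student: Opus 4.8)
The plan is to use the spherical symmetry granted by Lemma~\ref{lemma:circle_vs_restated} to reduce the maximization over boundary pairs to a one‑dimensional optimization, and then to evaluate a single integral.

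\emph{Step 1: a profile function.} By Lemma~\ref{lemma:circle_vs_restated}, sampling $h_w \sim \Ucal(\consistent)$ is the same as sampling $w$ uniformly from the spherical cap $\{w : \|w\|_2 = 1,\ \angle(w, w^*) \le \phi\}$. For $x \in \Xcal$ set $\rho(x) := \Pr_{w \sim \Ucal(\consistent)}(\langle w, x\rangle \ge 0)$; since the cap is invariant under rotations fixing $w^*$, $\rho(x)$ depends only on $\theta := \theta(x, w^*)$, so write $\rho(x) = g(\theta)$. I would first record the elementary properties of $g$: it is non‑increasing; the hyperplane $\{w : \langle w, x\rangle = 0\}$ lies at angular distance $|\pi/2 - \theta|$ from $w^*$, whence $g(\theta) = 1$ for $\theta \le \pi/2 - \phi$ and $g(\theta) = 0$ for $\theta \ge \pi/2 + \phi$; $g(\pi/2) = 1/2$; and (passing to $-x$, which flips the relevant half‑space) the antipodal symmetry $g(\pi/2 + u) + g(\pi/2 - u) = 1$. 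Then $\pi_\alpha(x, x') = g(\theta(x', w^*)) - g(\theta(x, w^*))$.

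\emph{Step 2: reduction to one variable.} For a boundary pair the labels differ, and since $\pi_\alpha(x',x) = -\pi_\alpha(x,x')$ it is enough to treat the orientation $h^*(x) = -1$, $h^*(x') = +1$, i.e.\ $\theta(x', w^*) \le \pi/2 < \theta(x, w^*)$ (the reversed orientation gives $\pi_\alpha \le 0$). On the unit sphere $\|x - x'\|_2 = 2\sin(\theta(x,x')/2)$ and $r = 2\sin(\psi/2)$, so $x' \in \region(x)$ is $\theta(x,x') < \psi$; combining this with the spherical triangle inequality $\theta(x,w^*) \le \theta(x',w^*) + \theta(x,x')$ and monotonicity of $g$ gives $\pi_\alpha(x,x') \le g(t) - g(t+\psi)$ with $t := \theta(x',w^*)$, and this bound is approached by placing $x$ on the great circle through $w^*$ and $x'$ on the far side of $x'$. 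Using the characterization of $\Ncal_r(\Xcal)$ to see which $x'$ and $x$ are admissible, one gets $t \in (\pi/2 - \psi,\ \pi/2]$, so
\[
\Pi(\alpha) \;=\; \sup_{t \in (\pi/2-\psi,\ \pi/2]}\bigl(g(t) - g(t+\psi)\bigr),
\]
with the $\max$ in the statement understood as this supremum.

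\emph{Step 3: optimizing $t$ and the closed form.} Writing $t = \pi/2 - \psi/2 + \delta$ and using the antipodal symmetry, $g(t) - g(t+\psi) = g(\pi/2 - (\psi/2 - \delta)) + g(\pi/2 - (\psi/2 + \delta)) - 1 =: H(\delta) - 1$, which is even in $\delta$. I would show $H$ is maximized at $\delta = 0$ by showing $|g'|$ is unimodal with peak at $\pi/2$; this requires $g$ explicitly. Project $w$ onto $\mathrm{span}(w^*, x)$ and let $(a,b)$ be its coordinates in an orthonormal basis whose first vector is $w^*$: under $w$ uniform on $S^{d-1}$ the law of $(a,b)$ has density proportional to $(1 - a^2 - b^2)^{(d-4)/2}$ on the unit disc, the cap is $\{a \ge \cos\phi\}$, and $\{\langle w, x\rangle \ge 0\}$ is a half‑plane through the origin; in polar coordinates $(a,b) = (\sigma\cos\omega,\ \sigma\sin\omega)$ the $\sigma$‑integral is elementary and yields, for $\delta \in [0,\phi]$,
\[
g(\pi/2 - \delta) \;=\; \frac12 + \frac{\int_0^{\delta} F(\omega)\, d\omega}{2\int_0^{\phi} F(\omega)\, d\omega},
\qquad
F(\omega) = \Bigl(1 - \frac{\cos^2\phi}{\cos^2\omega}\Bigr)^{(d-2)/2},
\]
(the case $d=2$, where the cap is an arc and $F \equiv 1$, checked directly). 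Since $F$ is strictly decreasing on $[0,\phi]$, $|g'(\pi/2 - \delta)| = F(\delta)/(2\int_0^\phi F)$ decreases in $|\delta|$, so $H'(\delta) \le 0$ for $\delta \ge 0$ and $H$ peaks at $\delta = 0$. Hence for $\psi/2 \le \phi$, $\Pi(\alpha) = 2g(\pi/2 - \psi/2) - 1 = \int_0^{\psi/2} F \big/ \int_0^\phi F$; for $\psi/2 > \phi$ we have $g(\pi/2 - \psi/2) = 1$ and $g(\pi/2 + \psi/2) = 0$, a pair with $\theta(x,x') = 2\phi < \psi$ realizing these values is admissible, so $\Pi(\alpha) = 1$; and strict monotonicity on $[0,2\phi]$ follows from $\frac{d}{d\psi}\bigl(\int_0^{\psi/2}F \big/ \int_0^\phi F\bigr) = F(\psi/2)/(2\int_0^\phi F) > 0$ for $\psi < 2\phi$.

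\emph{Main obstacle.} The crux is Step 2 together with the unimodality of $|g'|$ in Step 3: establishing that the extremal boundary pair straddles the decision boundary symmetrically, and that the centered placement is optimal. Both rely on the monotonicity and antipodal symmetry of $g$, and on getting the projected‑measure exponent right — the $(d-4)/2$ there is exactly what produces the $(d-2)/2$ in $F$. The remaining items (non‑attainment of the supremum, the $d=2$ case, admissibility of the extremal pairs) are routine.
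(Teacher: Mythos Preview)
Your proposal is correct and follows essentially the same route as the paper: both project the uniform measure on the spherical cap $\Hcal_C$ onto the two–dimensional plane spanned by $w^*$ and the test point (using the density $(1-a^2-b^2)^{(d-4)/2}$), compute the resulting angular density $F(\theta)=(1-\cos^2\phi/\cos^2\theta)^{(d-2)/2}$, and then argue that the ``sliding window'' $\int_{c-\psi/2}^{c+\psi/2} F_+(\theta)\,d\theta$ is maximized at $c=0$. Your unimodality-of-$|g'|$ argument and the paper's derivative computation $F'(c)=F_+(c+\psi/2)-F_+(c-\psi/2)$ are the same observation, and your profile function $g(\theta)$ and its antipodal symmetry are just a convenient repackaging of the paper's Claim~\ref{claim:phi-angle}; the paper also glosses over the open-ball non-attainment issue you flag.
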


\begin{proof}
Denote by $F_+(\theta) = (1 - \frac{\cos^2 \phi}{\cos^2 \theta})_+^{(d-2) / 2}$, where $(z)_+ := \max(z, 0)$. 
Note that $F_+(\theta) = 0$ if $\theta \notin [-\phi, \phi]$. 

To show the theorem statement,
note that $ \int_{-\pi}^{\pi} F_+(\theta) d\theta = 2 \int_0^\phi F(\theta) d\theta$; it therefore suffices to show that,
\[
\max_{(x, x') \in \margin} \pi_{\alpha}(x, x') 
=
\frac{ \int_{-\psi/2}^{\psi/2} F_+(\theta) d\theta }{ \int_{-\pi}^{\pi} F_+(\theta) d\theta }
\]

%\note{Need to change the denominator to have $\phi$?}

We show the left hand side is both at most and at least the right hand side, respectively. Without loss of generality, let $w^* = (1,0,\ldots,0)$. 
\begin{enumerate}
    \item $\mathrm{LHS} \geq \mathrm{RHS}$: We choose $x' = (\sin\frac{\psi}{2}, \cos\frac{\psi}{2}, 0, \ldots, 0)$, $x = (-\sin\frac{\psi}{2}, \cos\frac{\psi}{2}, 0, \ldots, 0)$. It can be seen that $\| x - x' \|_2 = 2\sin\frac{\psi}{2} = r$, and $\inner{w^*}{x'} > 0$, $\inner{w^*}{x} < 0$, and therefore $(x,x')$ is indeed a boundary pair (i.e. in $\Mcal_r(\Xcal)$). 
    %Consider a pair of examples $x,x'$, such that $x,x',w^*$ lie on the same 2-dimensional plane, and $\inner{w^*}{x'} > 0$, $\inner{w^*}{x} < 0$, 
    %$\theta(w^*, x) = \frac\pi2 - \frac\psi 2$, $\theta(w^*, x') = \frac\pi2 + \frac\psi 2$, $\theta(x,x') = \psi$. 
    
    %\Pr_{h_w \sim \uniform(\consistent)}( w \cdot x \geq 0) 
    
      In addition, for $w = (w_1, w_2)$, denote by $\phi(w) \in (-\pi, \pi]$ its polar angle with respect to $(1,0)$ (so that $\phi((1,0)) = 0$).
    
    \begin{figure}
        \centering
        \includegraphics[width=0.5\textwidth]{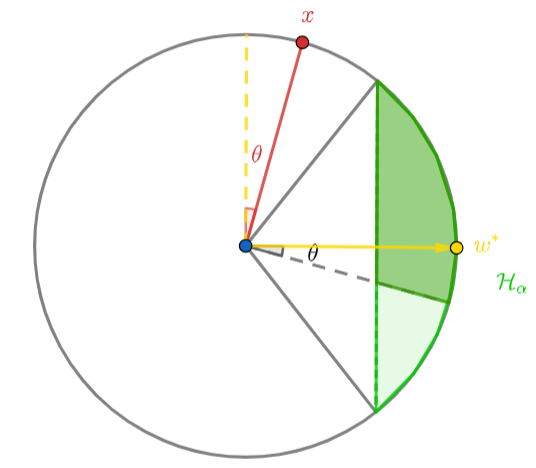}
        \caption{An illustration of $\Pr_{h_w \sim \Ucal(\Hcal_C)}(\inner{w}{x'} \geq 0)$ in the proof of Theorem~\ref{thm:max-pi-sphere}. Suppose $x$ (red dot) has angle $\frac{\pi}{2} - \theta$ with $w^*$, and we project $\Ucal(\Hcal_C)$ to the 2-dimensional plane spanned by $w^*$ and $x$; $\Ucal(\Hcal_C)$ (after projection) is supported on the green circle segment (the union of the dark and light green regions), whereas the subset $\cbr{h_w \in \Hcal_C: \inner{w}{x'} \geq 0}$ corresponds to the dark green region.}
        \label{fig:proof-thm-4}
    \end{figure}
    
    In this case, by Claim~\ref{claim:phi-angle} given below (see also Figure~\ref{fig:proof-thm-4} for an illustration), we have:
    \begin{align*}
    \Pr_{h_w \sim \Ucal(\Hcal_C)}(\inner{w}{x'} \geq 0)
    = & 
    \Pr_{h_w \sim \Ucal(\Hcal_C)}( \phi(w) \in [-\psi/2, \pi/2] ) \\
    = & 
    \frac{ \int_{-\psi/2}^{\pi/2} F_+(\theta) d\theta }{ \int_{-\pi}^{\pi} F_+(\theta) d\theta },
    \end{align*}
    and
    \begin{align*}
    \Pr_{h_w \sim \Ucal(\Hcal_C)}(\inner{w}{x} \geq 0)
    = & 
    \Pr_{h_w \sim \Ucal(\Hcal_C)}( \phi(w) \in [\psi/2, \pi/2] ) \\
    = & 
    \frac{ \int_{\psi/2}^{\pi/2} F_+(\theta) d\theta }{ \int_{-\pi}^{\pi} F_+(\theta) d\theta },
    \end{align*}
    and therefore,
    \[
    \max_{(x, x') \in \margin} \pi_{\alpha}(x, x') 
    \geq
    \Pr_{h_w \sim \Ucal(\Hcal_C)}(\inner{w}{x'} \geq 0)
    -
    \Pr_{h_w \sim \Ucal(\Hcal_C)}(\inner{w}{x} \geq 0)
    = 
    \frac{ \int_{-\psi/2}^{\psi/2} F_+(\theta) d\theta }{ \int_{-\pi}^{\pi} F_+(\theta) d\theta }
    \]
    
    %\note{Need to change the denominator to have $\phi$?}
    
    \item $\mathrm{LHS} \leq \mathrm{RHS}$: First, for every $z \in \RR^d$, denote by $\theta(w^*,z) = \arccos(\frac{\inner{w^*}{z}}{\|w^*\| \| z\|}) \in [0, \pi]$ the angle between $z$ and $w^*$. 
    %\note{Do we need to substantiate a bit more here since it should between $0$ and $2\pi$?} We claim that 
    \[
    \Pr_{h_w \sim \Ucal(\Hcal_C)}(\inner{w}{z} \geq 0)
    = 
    \frac{ \int_{\theta(w,z) - \frac\pi 2}^{\frac \pi 2} F_+(\theta) d\theta }{ \int_{-\pi}^{\pi} F_+(\theta) d\theta }
    \]

    To see this, without loss of generality, let $z = (z_1, z_2, 0, \ldots, 0)$. Then, by Claim~\ref{claim:phi-angle} (given below), we have
    \[
    \Pr_{h_w \sim \Ucal(\Hcal_C)}(\inner{w}{z} \geq 0)
    = 
    \Pr_{h_w \sim \Ucal(\Hcal_C)}\del{ \phi((z_1, z_2)) \in [\theta(w,z) - \frac\pi2, \frac \pi 2] }
    = 
    \frac{ \int_{\theta(w,z) - \frac\pi 2}^{\frac \pi 2} F_+(\theta) d\theta }{ \int_{-\pi}^{\pi} F_+(\theta) d\theta }.
    \]
    
    Therefore, for every $(x,x') \in \Mcal_r(\Xcal)$, 
    \begin{align*}
    & \Pr_{h_w \sim \Ucal(\Hcal_C)}(\inner{w}{x'} \geq 0)
    -
    \Pr_{h_w \sim \Ucal(\Hcal_C)}(\inner{w}{x} \geq 0) \\
    = & 
    \frac{ \int_{\theta(w,x') - \frac\pi 2}^{ \theta(w,x) - \frac\pi 2} F_+(\theta) d\theta }{ \int_{-\pi}^{\pi} F_+(\theta) d\theta } \\
    \leq & 
    \frac{ \max \cbr{ \int_{a}^{b} F_+(\theta) d\theta :b-a \leq \psi } }{ \int_{-\pi}^{\pi} F_+(\theta) d\theta },
    \end{align*}
    where the inequality follows by observing $\theta(w,x) - \theta(w,x') \leq \theta(x,x') \leq \psi$, which follows from $2\sin\frac{\theta(x,x')}{2} = \| x - x' \| \leq r = 2\sin\frac{\psi}{2}$ and that $\psi/2$ is acute by definition, which means that $\theta(x,x') / 2 \leq \psi/2$ and $\psi / 2$ are both acute. 
    It suffices to show that for every $a,b$ such that $b - a \leq \psi$, 
    \begin{equation}
    \int_{a}^{b} F_+(\theta) d\theta
    \leq 
    \int_{-\psi/2}^{\psi/2} F_+(\theta) d\theta.
    \label{eqn:f-plus-integral}
    \end{equation}
    As $F_+(\theta) \geq 0$ for any $\theta \in \RR$, the max must be achieved at $b - a = \psi$ and so it suffices to show $\forall c$,
    \[ 
    \int_{c - \psi/2}^{c + \psi/2} F_+(\theta) d\theta
    \leq 
    \int_{-\psi/2}^{\psi/2} F_+(\theta) d\theta.
    \]
    Let $F(c) = \int_{c - \psi/2}^{c + \psi/2} F_+(\theta) d\theta$; it can be seen that $F'(a) = F_+(c + \psi/2) - F_+(c - \psi/2)$. Therefore,
    \[
    F'(c) 
    \begin{cases}
    \geq 0 & c \leq -\psi/2 \\
    \geq 0 & -\psi/2 \leq c \leq 0 \\ 
    \leq 0 & 0 \leq c \leq \psi/2 \\
    \leq 0 & c \geq \psi/2,
    \end{cases}
    \]
    and hence $\max_{c \in \RR} F(c) = F(0) = \int_{-\psi/2}^{+\psi/2} F_+(\theta) d\theta$, which concludes the proof of Equation~\eqref{eqn:f-plus-integral}, and concludes that $\mathrm{LHS} \leq \mathrm{RHS}$. 
    \qedhere
    %Without loss of generality, we assume $a \leq 0$. We consider three subcases:
    %\begin{enumerate}
    %    \item If $a < -\psi$, then 
    %    \begin{align*}
    %    & \int_{a}^{a + \psi} F_+(\theta) d\theta
    %    - 
    %    \int_{-\psi/2}^{\psi/2} F_+(\theta) d\theta \\
    %    \leq & 
    %    \int_{-\psi}^{0} F_+(\theta) d\theta
    %    - 
    %    \int_{-\psi/2}^{\psi/2} F_+(\theta) d\theta
    %    \end{align*}
    %    \item If $a < -\frac\psi 2$, then 
    %    \begin{align*}
    %    & \int_{a}^{a + \psi} F_+(\theta) d\theta
    %    - 
    %    \int_{-\psi/2}^{\psi/2} F_+(\theta) d\theta \\
    %    = &
    %    \int_{a}^{-\psi/2} F_+(\theta) d\theta
    %    - 
    %    \int_{a+\psi/2}^{\psi/2} F_+(\theta) d\theta \\
    %    = &
    %    \int_{\psi/2}^{-a} F_+(\theta) d\theta
    %    -  
    %    \int_{a+\psi/2}^{\psi/2} F_+(\theta) d\theta \\
    %    \leq & 0,
    %    \end{align*}
    %    where the first equality is by canceling out the common integral region $[-\psi/2, a+\psi]$; the second inequality is by the symmetry of $F_+(\theta)$; the inequality is by the observation that $\max_{\theta \in [\psi/2, -a]} F_+(\theta) \leq F_+(\psi_2) \leq \min_{\theta \in [a+\psi, \psi]} F_+(\theta)$.   
    %    \item If $0 \geq a \geq -\frac\psi 2$, then 
    %    \begin{align*}
    %    & \int_{a}^{a + \psi} F_+(\theta) d\theta
    %    - 
    %    \int_{-\psi/2}^{\psi/2} F_+(\theta) d\theta \\
    %    = &
    %    \int_{\psi/2}^{a + \psi} F_+(\theta) d\theta
    %    - 
    %    \int_{-\psi/2}^{a} F_+(\theta) d\theta \\
    %    = &
    %    \int_{\psi/2}^{a + \psi} F_+(\theta) d\theta
    %    -  
    %    \int_{-a}^{\psi/2} F_+(\theta) d\theta \\
    %    \leq & 0,
    %    \end{align*}
    %\end{enumerate}
    
\end{enumerate}

%\frac{\int_{w_1 \geq \sqrt{1 - \alpha^2} \land \phi(w) \in [-\min(\gamma, \phi), \phi]} p(w_1, w_2) dw_1 dw_2}{\int_{w_1 \geq \sqrt{1 - \alpha^2} \land \phi(w) \in [-\phi, \phi]} p(w_1, w_2) dw_1 dw_2} 

\end{proof}

\begin{fact}
\label{fact:2d-proj}
The probability density function of the uniform distribution over unit sphere projected onto the first two dimensions is 
\[ 
p(w_1, w_2) = \frac{d-2}{2\pi} (1-w_1^2-w_2^2)^{\frac{d-4}{2}}. 
\]
\end{fact}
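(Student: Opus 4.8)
The plan is to compute directly the pushforward of the uniform (normalized surface) measure on $S^{d-1} = \cbr{ w \in \RR^d : \| w \|_2 = 1 }$ under the coordinate projection $w \mapsto (w_1, w_2)$, by choosing a parametrization of the sphere adapted to this projection and reading off the induced surface-area element; throughout I assume $d \ge 3$ (for $d = 2$ the image is just the circle and carries no Lebesgue density). Concretely, write $(w_1, w_2) = (u,v)$ for a point of the open unit disk $D = \cbr{ (u,v) : u^2 + v^2 < 1 }$, set $\rho = \rho(u,v) = \sqrt{1 - u^2 - v^2}$, and represent a generic sphere point as $w = (u, v, \rho\,\omega)$ with $\omega$ ranging over $S^{d-3} \subset \RR^{d-2}$ (for $d = 3$ one has $S^0 = \cbr{-1,+1}$ and everything below goes through with $\omega$ discrete).

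Introducing local coordinates $\phi$ on $S^{d-3}$, the next step is to compute the Gram matrix of the tangent frame $\partial_u w, \partial_v w, \partial_{\phi_i} w$. Using $\partial_u \rho = -u/\rho$, $\partial_v \rho = -v/\rho$, and the orthogonality $\inner{\omega}{\partial_{\phi_i}\omega} = 0$ (which holds because $\| \omega \|_2 \equiv 1$), the cross terms between the $(u,v)$- and $\phi$-directions vanish, so the Gram matrix is block diagonal: the $(u,v)$-block is the rank-one update $A = I_2 + \rho^{-2} s s^{\top}$ with $s = (u,v)^{\top}$, and the $\phi$-block is $\rho^2$ times the standard metric of $S^{d-3}$. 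Then $\det A = 1 + \rho^{-2}\| s \|_2^2 = 1 + (u^2 + v^2)/\rho^2 = 1/\rho^2$, so $\sqrt{\det A} = 1/\rho$ and the surface element factors as $dS = \rho^{-1}\cdot \rho^{d-3}\, du\, dv\, dS_{S^{d-3}} = \rho^{d-4}\, du\, dv\, dS_{S^{d-3}}$. Integrating out the $S^{d-3}$ directions, the surface measure on $S^{d-1}$ pushes forward to the measure on $D$ with density $\omega_{d-3}\,(1 - u^2 - v^2)^{(d-4)/2}$, where $\omega_k$ is the surface area of $S^k \subset \RR^{k+1}$; normalizing by $\omega_{d-1}$ gives $p(w_1,w_2) = \frac{\omega_{d-3}}{\omega_{d-1}}\,(1 - w_1^2 - w_2^2)^{(d-4)/2}$, and plugging in $\omega_k = 2\pi^{(k+1)/2}/\Gamma(\tfrac{k+1}{2})$ and $\Gamma(\tfrac{d}{2}) = \tfrac{d-2}{2}\,\Gamma(\tfrac{d-2}{2})$ yields $\omega_{d-3}/\omega_{d-1} = \frac{1}{\pi}\cdot\frac{d-2}{2} = \frac{d-2}{2\pi}$, which is the claimed constant.

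The only genuinely delicate step is this metric/Jacobian computation --- verifying block-diagonality (which rests on $\inner{\omega}{\partial_{\phi_i}\omega} = 0$) and $\det A = 1/\rho^2$; the rest is bookkeeping with the constants $\omega_k$. If one prefers to avoid differential geometry, an alternative route is the Gaussian representation $W = X/\| X \|_2$ with $X \sim \Ncal(0, I_d)$: conditionally on $W_1^2 + W_2^2 = t$ the pair $(W_1, W_2)$ is uniform on the circle of radius $\sqrt{t}$, while $W_1^2 + W_2^2 \sim \mathrm{Beta}(1, \tfrac{d-2}{2})$ has density $\tfrac{d-2}{2}(1-t)^{(d-4)/2}$ on $[0,1]$, and passing to polar coordinates $(w_1, w_2) = (\sqrt{t}\cos\theta, \sqrt{t}\sin\theta)$ recovers the same formula. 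However, establishing the $\mathrm{Beta}$ fact reduces to essentially the same surface-area computation, so the direct parametrization seems most economical.
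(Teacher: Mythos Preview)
The paper states this fact without proof, treating it as a standard result from spherical geometry, so there is no approach to compare against. Your derivation is correct and complete: the parametrization $w = (u,v,\rho\,\omega)$ with $\rho = \sqrt{1-u^2-v^2}$ and $\omega \in S^{d-3}$ is well suited to the projection, the block-diagonality of the Gram matrix indeed follows from $\inner{\omega}{\partial_{\phi_i}\omega} = 0$, and the determinant computation $\det A = 1 + \rho^{-2}(u^2+v^2) = \rho^{-2}$ and the ratio $\omega_{d-3}/\omega_{d-1} = (d-2)/(2\pi)$ are both correct. Your alternative Gaussian/Beta argument is also valid and arguably more self-contained for readers unfamiliar with the surface-area element, since the $\mathrm{Beta}(1,\tfrac{d-2}{2})$ law of $W_1^2+W_2^2$ can be obtained directly from the fact that $\|X_{1:2}\|^2$ and $\|X_{3:d}\|^2$ are independent $\chi^2$ with $2$ and $d-2$ degrees of freedom respectively; this sidesteps the differential-geometric bookkeeping entirely.
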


\begin{claim}
\label{claim:phi-angle}
In the notation of the proof of Theorem~\ref{thm:max-pi-sphere} above, for every $a < b$ such that $[a,b] \subset (-\pi, \pi]$, 
\[
\Pr_{h_w \sim \Ucal(\Hcal_C)}\del{ \phi( (w_1, w_2) ) \in [a,b] }
=
\frac{ \int_{a}^{b} F_+(\theta) d\theta }{ \int_{-\pi}^{\pi} F_+(\theta) d\theta }
\]
\end{claim}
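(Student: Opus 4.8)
The plan is to reduce the claim to an explicit one–dimensional integral by combining the projection density from Fact~\ref{fact:2d-proj} with a change to polar coordinates. First I would invoke spherical symmetry to assume $w^* = (1,0,\ldots,0)$, so that by Lemma~\ref{lemma:circle_vs_restated} the version space is the spherical cap $\consistent = \cbr{w : \|w\|_2 = 1,\ w_1 \geq \cos\phi}$ (using $\sqrt{1-\alpha^2} = \cos\phi$), and $\Ucal(\consistent)$ is the uniform law on it. Since both the cap constraint $w_1 \geq \cos\phi$ and the target event $\cbr{\phi((w_1,w_2)) \in [a,b]}$ depend on $w$ only through $(w_1,w_2)$, it suffices to describe the law of $(w_1,w_2)$ under $\Ucal(\consistent)$. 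By Fact~\ref{fact:2d-proj}, under the unconditioned uniform distribution on the sphere $(w_1,w_2)$ has density $\frac{d-2}{2\pi}(1-w_1^2-w_2^2)^{(d-4)/2}$ on the unit disk; hence under $\Ucal(\consistent)$ the pair $(w_1,w_2)$ has density proportional to $(1-w_1^2-w_2^2)^{(d-4)/2}$ restricted to $\cbr{w_1^2+w_2^2 \leq 1,\ w_1 \geq \cos\phi}$.

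Next I would pass to polar coordinates $w_1 = \rho\cos\theta$, $w_2 = \rho\sin\theta$ with $\rho \in [0,1]$ and $\theta = \phi((w_1,w_2)) \in (-\pi,\pi]$, so that $dw_1\,dw_2 = \rho\,d\rho\,d\theta$. The constraint $w_1 \geq \cos\phi$ becomes $\rho\cos\theta \geq \cos\phi$, which forces $\cos\theta > 0$ and $\rho \geq \cos\phi/\cos\theta$; intersecting with $\rho \leq 1$, this $\rho$–interval is nonempty exactly when $\cos\theta \geq \cos\phi$, i.e. $\theta \in [-\phi,\phi]$ (recall $\phi$ is acute). Hence the marginal density of $\theta$ under $\Ucal(\consistent)$ is proportional, for $\theta \in [-\phi,\phi]$, to $\int_{\cos\phi/\cos\theta}^{1} (1-\rho^2)^{(d-4)/2}\rho\,d\rho$, and zero otherwise.

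Then I would evaluate this inner integral via the substitution $u = 1 - \rho^2$, which yields $\frac{1}{d-2}\del{1 - \tfrac{\cos^2\phi}{\cos^2\theta}}^{(d-2)/2} = \frac{1}{d-2}F_+(\theta)$ for $\theta \in [-\phi,\phi]$; since $F_+$ already vanishes outside $[-\phi,\phi]$ and the constant $\frac{1}{d-2}$ is absorbed by normalization, the marginal density of $\theta$ is $F_+(\theta)\big/\int_{-\pi}^{\pi}F_+(\theta')\,d\theta'$ on all of $(-\pi,\pi]$. Integrating this over $[a,b]\subset(-\pi,\pi]$ then gives precisely the claimed ratio $\int_a^b F_+(\theta)\,d\theta \big/ \int_{-\pi}^{\pi}F_+(\theta)\,d\theta$.

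I do not anticipate a genuine obstacle; the two places that need care are (i) correctly conditioning the Fact~\ref{fact:2d-proj} density on the cap, and (ii) tracking the support of $\theta$, which is exactly what manufactures the positive–part truncation in $F_+$ and makes the normalizing integral $\int_{-\pi}^{\pi}$ collapse to $\int_{-\phi}^{\phi}$. The degenerate case $d=2$, in which Fact~\ref{fact:2d-proj} carries no density and $\consistent$ is an arc, can be verified directly: there $\Ucal(\consistent)$ is uniform in $\theta$ over $[-\phi,\phi]$ while $F_+ \equiv 1$ on that interval, so the identity still holds.
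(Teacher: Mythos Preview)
Your proposal is correct and follows essentially the same route as the paper: condition on the cap via Lemma~\ref{lemma:circle_vs_restated}, project to $(w_1,w_2)$ using Fact~\ref{fact:2d-proj}, pass to polar coordinates, and evaluate the radial integral to obtain $F_+(\theta)$, so that the probability reduces to the claimed ratio. The only differences are cosmetic (you compute the marginal density of $\theta$ and then integrate, whereas the paper writes the conditional probability as a ratio of two $\Ucal$-probabilities and evaluates numerator and denominator separately), and you additionally note the $d=2$ degenerate case.
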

\begin{proof}
Recall Lemma~\ref{lemma: circle_vs} that characterizes $\Hcal_C$ (see also Figure~\ref{fig:version_space}), we have:
\[
\Pr_{h_w \sim \Ucal(\Hcal_C)}\del{ \phi( (w_1, w_2) ) \in [a,b] }
= 
\frac{ \Pr_{h_w \sim \Ucal} \del{ w_1 \geq \sqrt{1-\alpha^2}, \phi( (w_1, w_2) ) \in [a,b] } }{ \Pr_{h_w \sim \Ucal} \del{ w_1 \geq \sqrt{1-\alpha^2} } }
\]
From Fact~\ref{fact:2d-proj} above, we can express the numerator and the denominator in integral form. 
For the denominator, by changing of variables to the polar coordinates,
%formulat for 2d integration
\begin{align*}
& \Pr_{h_w \sim \Ucal} \del{ w_1 \geq \sqrt{1-\alpha^2} } \\
= & 
\int_{-\phi}^{\phi} \del{ \int_{\frac{\cos\phi}{\cos\theta}}^1 \frac{d-2}{2\pi} (1-r^2)^{\frac{d-4}{2}} r dr } d\theta \\
= &
\frac{1}{2\pi} \int_{-\phi}^{\phi} \del{ 1-\frac{\cos^2\phi}{\cos^2\theta}}^{\frac{d-2}{2}}  d\theta \\
= & \frac{1}{2\pi} \int_{-\pi}^{\pi} F_+(\theta) d\theta. 
\end{align*}

%\note{I think the last step things  got changed to $\pi$, and this is copied everywhere above}

For the numerator,
\begin{align*}
& \Pr_{h_w \sim \Ucal} \del{ w_1 \geq \sqrt{1-\alpha^2}, \phi( (w_1, w_2) ) \in [a,b] } \\
= &
\int_{\max(-\phi, a)}^{\min(\phi, b)} \del{ \int_{\frac{\cos\phi}{\cos\theta}}^1 \frac{d-2}{2\pi} (1-r^2)^{\frac{d-4}{2}} r dr } d\theta \\
= & 
\frac{1}{2\pi} \int_{\max(-\phi, a)}^{\min(\phi, b)} \del{ 1-\frac{\cos^2\phi}{\cos^2\theta}}^{\frac{d-2}{2}}  d\theta \\
= & 
\frac{1}{2\pi} \int_{a}^{b} F_+(\theta) d\theta. 
\end{align*}
The lemma follows by combining two equalities above.
\end{proof}

\begin{theorem}[Restatement of Theorem~\ref{thm:monotonicity}]
\label{thm:monotonicity-restated}
$\Pi(\alpha)$ is decreasing in $\alpha$, for $\alpha \in [0, 1)$, and is strictly decreasing in $[\sin(\psi/2), 1)$.
\end{theorem}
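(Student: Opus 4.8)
The plan is to leverage the explicit formula for $\Pi(\alpha)$ from Theorem~\ref{thm:max-pi-sphere} and reduce the monotonicity claim to a one-dimensional calculus statement about the ratio $R(\phi) := \left(\int_0^{\psi/2} F_\phi(\theta)\,d\theta\right) / \left(\int_0^\phi F_\phi(\theta)\,d\theta\right)$, where I write $F_\phi(\theta) = (1 - \cos^2\phi / \cos^2\theta)^{(d-2)/2}$ to emphasize the dependence on $\phi$. Since $\alpha = \sin\phi$ is a strictly increasing function of $\phi$ on $[0,1) \leftrightarrow [0,\pi/2)$, it suffices to show $\Pi$ is decreasing in $\phi$, and strictly decreasing once $\phi \geq \psi/2$ (which corresponds to $\alpha \geq \sin(\psi/2)$). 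In the regime $\psi > 2\phi$, i.e. $\phi < \psi/2$, we have $\Pi = 1$ identically, so monotonicity is trivial there; the only content is in the regime $\phi \geq \psi/2$, where I must show $R(\phi)$ is (strictly) decreasing, and also check continuity at the junction $\phi = \psi/2$ (there $R(\psi/2) = 1$ since the numerator and denominator integrals coincide, matching the other branch).

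For the main regime $\phi \in [\psi/2, \pi/2)$, I would first rewrite $F_\phi(\theta)$ by pulling out the $\phi$-dependence cleanly. A useful substitution is to scale the integration variable: since the integrand only depends on $\theta$ through $\cos\phi/\cos\theta$, consider changing variables so that both integrals have $\phi$ appearing only through the upper limit. Concretely, I expect that after an appropriate substitution the denominator $\int_0^\phi F_\phi(\theta)\,d\theta$ can be written as $\cos\phi \cdot \int \dots$ or similar, isolating the structural dependence. Alternatively — and this is probably the cleanest route — differentiate $R(\phi)$ directly: $R'(\phi) \cdot (\text{denominator})^2$ equals
$$\left(\frac{\partial}{\partial\phi}\!\int_0^{\psi/2}\! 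F_\phi\right)\!\!\int_0^\phi\! F_\phi \;-\; \int_0^{\psi/2}\! F_\phi \left(\frac{\partial}{\partial\phi}\!\int_0^\phi\! F_\phi\right),$$
and note $\frac{\partial}{\partial\phi}\int_0^\phi F_\phi(\theta)\,d\theta = F_\phi(\phi) + \int_0^\phi \partial_\phi F_\phi(\theta)\,d\theta = \int_0^\phi \partial_\phi F_\phi(\theta)\,d\theta$ because $F_\phi(\phi) = 0$. Since $\partial_\phi F_\phi(\theta) = (d-2)/2 \cdot (1 - \cos^2\phi/\cos^2\theta)^{(d-4)/2} \cdot (2\cos\phi\sin\phi/\cos^2\theta) \geq 0$ — the integrand is monotonically increasing in $\phi$ pointwise — both $\phi$-derivatives of the two integrals are nonnegative, so the sign of $R'(\phi)$ is not immediate from the product rule alone. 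The key observation I would push is a "weighted average" / Chebyshev-type comparison: the extra mass $\partial_\phi F_\phi(\theta)$ is concentrated near $\theta = \phi$ (large $\theta$), whereas the numerator only integrates $\theta$ up to $\psi/2 \leq \phi$; so proportionally the denominator grows faster than the numerator under an increase in $\phi$, forcing $R$ to decrease. Making this precise is the heart of the argument.

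The main obstacle, then, is turning that intuition into a rigorous inequality. I would try to show it by establishing that the function $\theta \mapsto \partial_\phi F_\phi(\theta) / F_\phi(\theta)$ (the logarithmic derivative of the integrand in $\phi$) is increasing in $\theta$ on $(0,\phi)$; combined with the fact that $[0,\psi/2] \subseteq [0,\phi]$ is an initial segment, a standard monotone-likelihood-ratio argument (FKG / Chebyshev's sum inequality in integral form) then yields $\frac{\int_0^{\psi/2} \partial_\phi F_\phi}{\int_0^{\psi/2} F_\phi} \leq \frac{\int_0^{\phi} \partial_\phi F_\phi}{\int_0^{\phi} F_\phi}$, which is exactly the rearrangement of $R'(\phi) \leq 0$. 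Strictness on $\phi \in (\psi/2, \pi/2)$ follows because the log-derivative is strictly increasing there and $\psi/2 < \phi$, so the two averages are strictly ordered. Finally I would handle the degenerate cases: $d = 2$ (where $F_\phi \equiv 1$, so $R(\phi) = (\psi/2)/\phi$, manifestly strictly decreasing) and the boundary $\alpha = 0$, $\phi = 0$, to confirm the "decreasing on all of $[0,1)$" claim; and I would note that at $\phi = \psi/2$ the formula gives $R = 1$ matching $\lim_{\phi \downarrow \psi/2}$ from the constant branch, so $\Pi$ is continuous and the two monotone pieces glue into a globally decreasing function.
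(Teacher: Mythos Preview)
Your proposal is correct, but it takes a different route from the paper's proof. The paper does \emph{not} differentiate the ratio $R(\phi)$ in the $\theta$-coordinates of Theorem~\ref{thm:max-pi-sphere}; instead it rewrites $\Pi(\alpha)$ in Cartesian coordinates $(w_1,w_2)$ (using the density $p(w_1,w_2)\propto(1-w_1^2-w_2^2)^{(d-4)/2}$) as
\[
\Pi(\alpha)=\frac{\int_{t}^{1} f(w_1)\,dw_1}{\int_{t}^{1} g(w_1)\,dw_1},\qquad t=\sqrt{1-\alpha^2}=\cos\phi,
\]
where $f$ and $g$ are \emph{$\alpha$-independent} inner integrals over $w_2$. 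Then a one-line computation shows $f(w_1)/g(w_1)$ is increasing in $w_1$, and the same ``average over an initial segment is smaller'' principle you invoke gives $F'(t)\ge 0$ directly. The virtue of the paper's change of variables is that $\alpha$ appears \emph{only} in the integration limit, so there is no $\partial_\phi F_\phi$ term, no differentiation under the integral sign, and no need to worry about the singularity of $\partial_\phi F_\phi(\theta)$ at $\theta=\phi$ when $d=3$.

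Your approach, by contrast, works straight from the Theorem~\ref{thm:max-pi-sphere} formula and differentiates in $\phi$, then uses the monotonicity of the logarithmic derivative $\partial_\phi\log F_\phi(\theta)=\frac{(d-2)\cos\phi\sin\phi}{\cos^2\theta-\cos^2\phi}$ in $\theta$ to apply a weighted-average comparison. Both proofs ultimately rest on the same monotone-ratio lemma (the weighted average of an increasing function over a sub-interval is at most the average over the full interval); yours is arguably more direct from the statement of Theorem~\ref{thm:max-pi-sphere}, while the paper's is slightly cleaner analytically because it sidesteps the Leibniz-rule bookkeeping and the mild integrability check near $\theta=\phi$.
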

\begin{proof}%[Second proof of Theorem~\ref{thm:monotonicity-restated}]
Consider $\Pi(\alpha)$ for $\alpha = \sin\phi \in [\sin(\psi/2), 1]$, which, from the proof of Theorem~\ref{thm:max-pi-sphere}, has the following form:
\begin{align*}
\Pi(\alpha)
= & \Pr_{h_w \sim \Ucal(\Hcal_C)}\del{ \phi(w) \in [-\psi/2, \psi/2] } \\
= &
\frac{ \Pr_{h_w \sim \Ucal} \del{ w_1 \geq \sqrt{1-\alpha^2}, \phi( (w_1, w_2) ) \in [-\psi/2, \psi/2] } }{ \Pr_{h_w \sim \Ucal} \del{ w_1 \geq \sqrt{1-\alpha^2} } } \\
= &
\frac{ \int_{\sqrt{1-\alpha^2}}^1 \del{ \int_0^{w_1 \tan\psi} p(w_1, w_2) d w_2 } d w_1 }{ \int_{\sqrt{1-\alpha^2}}^1 \del{ \int_0^{\sqrt{1-w_1^2}} p(w_1, w_2) d w_2 } d w_1 },
\end{align*}
where $p(w_1, w_2) = \frac{d-2}{2\pi} (1-w_1^2-w_2^2)^{(d-4)/2}$ is the pdf of $(w_1, w_2)$ when $h_w \sim \Ucal$ (Fact~\ref{fact:2d-proj}).

%Using the 2-d integral formulation, another way to write the ratio is 
%\[
% F(t) = 
%\]
 %It suffices to show that $F$ is monotonically increasing in $t$. 
 %This is true because we can explicitly calculate it
 Consider $f(w_1) = \int_0^{w_1 \tan\psi} p(w_1, w_2) d w_2$, and $g(w_1) = \int_0^{\sqrt{1-w_1^2}} p(w_1, w_2) d w_2$, and 
 $F(t) = \frac{\int_t^1 f(w_1) dw_1}{\int_t^1 g(w_1) dw_1}$. 
 ;with this,  
 $\Pi(\alpha) = F(\sqrt{1-\alpha^2})$. It suffices to show that $F(t)$ is monotonically increasing, i.e. $F'(t) \geq 0$ for all $t$.

 To show this, first observe that 
 $\frac{f(w_1)}{g(w_1)}$ is monotonically increasing: indeed,
 \[
 \frac{f(w_1)}{g(w_1)}
 =
 \frac{ \int_0^{\frac{w_1\tan\psi}{\sqrt{1-w_1^2}}} (1 - v^2)^{\frac{d-4}{2}} dv }{ \int_0^1 (1 - v^2)^{\frac{d-4}{2}} dv  },
 \]
 which is increasing in $w_1$. As a consequence, 
 \begin{equation}
 \int_{t}^1 f(w_1) d w_1
 = 
 \int_{t}^1 g(w_1) \cdot (\frac{f(w_1)}{g(w_1)}) d w_1
 \geq 
 \frac{f(t)}{g(t)} \cdot \int_{t}^1 g(w_1) d w_1
 \label{eqn:integral-compare-f-g}
 \end{equation}
 
 Therefore,
 \[
 F'(t)
 =
 \frac{-f(t) \int_t^1 g(w_1) dw_1 + g(t) \int_t^1 f(w_1) dw_1 }{ (\int_t^1 g(w_1) dw_1)^2 }
 \geq 0,
 \]
 where the last inequality is from Equation~\eqref{eqn:integral-compare-f-g}.
\end{proof}

Below, we derive bounds on $\Pi(\alpha)$ given specific assumptions on $\phi$ and $\psi$.

%%%%%%%%%%%%%%%%%%%%%%%%%%%%%%%%%%%
%%%%%%%%%%%%%% REVISIT %%%%%%%%%%%%%%
%%%%%%%%%%%%%%%%%%%%%%%%%%%%%%%%%%%

%%%%%%%%INCORPORATE ONE%%%%%%%%
%$\frac{ \int_0^{\psi} F(\theta) d\theta }{ \int_0^{\phi} F(\theta) d\theta }
%\frac{10}{11}
%There exists some constant $c$, such that 
%there exists some constant $c_1 > 0$,
%there exists some $\zeta > 0$ such that
\begin{theorem}[Refined version of Theorem~\ref{thm:pi-bounds}]
\label{thm:pi-bounds-refined}
We have the following:
%given $\alpha = \sin\phi$
\begin{enumerate}
    \item If $\cos\phi \leq  \frac{1}{2d^{1/4}} $, then
    $
    \Pi(\alpha) \leq 6 \cdot \del{
    \psi (1 + d^{\frac12} \cos\phi) } $. 
    \label{item:ratio-ub}
    \item For any $c_1, c_2 > 0$, there exists $c_3 > 0$ such that the following holds:
    given any $\phi \in \intco{c_1, \frac\pi2}$, and 
    \begin{equation}
        \psi \geq c_3 \max\del{ \cos\phi, \frac{1}{d^{\frac12} \cos\phi} \sqrt{ \ln \frac{4}{c_2} + \ln\del{ 1+\frac{1}{d^{\frac12} \cos\phi}} } },
        \label{eqn:psi-lb}
    \end{equation} 
    then $\Pi(\alpha) \geq 1- c_2$.

    \label{item:ratio-lb}
\end{enumerate}
\end{theorem}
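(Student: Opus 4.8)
The plan is to reduce both parts to estimating, in terms of $\phi$ and $\psi$, the ratio of integrals appearing in Theorem~\ref{thm:max-pi-sphere}. Write $n := d-2$, so that $F(\theta) = (1-\cos^2\phi/\cos^2\theta)^{n/2}$ and $\Pi(\alpha)=1$ when $\psi>2\phi$, while $\Pi(\alpha) = \big(\int_0^{\psi/2}F\big)\big/\big(\int_0^{\phi}F\big)$ when $\psi\le 2\phi$. The crucial device is the substitution $t=\tan\theta$ followed by $s=t/\tan\phi$: using the identity $1-\frac{\cos^2\phi}{\cos^2\theta} = \cos^2\phi\,(\tan^2\phi-\tan^2\theta)$ (which follows from $\sec^2\theta = 1+\tan^2\theta$) and $d\theta = \tfrac{dt}{1+t^2}$, one gets, with $\tau:=\tan\phi$, that $F(\theta)\,d\theta = \cos^{n}\phi\,\tau^{n+1}\cdot\tfrac{(1-s^2)^{n/2}}{1+\tau^2 s^2}\,ds$; the common prefactor cancels, so for $\psi\le 2\phi$
\[
\Pi(\alpha) = \frac{\int_0^{a} g(s)\,ds}{\int_0^{1} g(s)\,ds}, \qquad g(s) := \frac{(1-s^2)^{n/2}}{1+\tau^2 s^2}, \qquad a := \frac{\tan(\psi/2)}{\tan\phi} \in [0,1].
\]
I would then record three elementary facts: (i) $g$ is nonincreasing on $[0,1]$ with $g(0)=1$; (ii) $\int_0^1 g \ge \tfrac{c}{\sqrt d + \tau}$ for a universal $c>0$, obtained by restricting to $\big[0,\tfrac12\min(d^{-1/2},\tau^{-1})\big]$, on which both factors of $g$ exceed absolute constants; and (iii) $\int_a^1 g \le \tfrac{1}{1+\tau^2 a^2}\,\min\!\big(\tfrac{1}{na},\sqrt{\tfrac{\pi}{2n}}\big)\,e^{-na^2/2}$, from $(1-s^2)^{n/2}\le e^{-ns^2/2}$ and the standard Gaussian tail bound. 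Two useful elementary observations: $\tau a = \tan(\psi/2)\ge \psi/2$, and $a \ge \tfrac{\psi\cos\phi}{2\sin\phi}\ge \tfrac{\psi\cos\phi}{2}$.

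For Part~1, assume first $\psi\le 2\phi$ and $\psi\le\pi/2$. By (i), $\int_0^{a}g\le a$, so (ii) gives $\Pi(\alpha)\le \tfrac{a(\sqrt d+\tau)}{c} = \tfrac{\tan(\psi/2)}{c}\big(1+\tfrac{\sqrt d\cos\phi}{\sin\phi}\big)$. The hypothesis $\cos\phi\le\tfrac{1}{2d^{1/4}}$ forces $\sin\phi\ge\sqrt 3/2$, and $\tan(\psi/2)\le\tfrac{2}{\pi}\psi$ for $\psi\le\pi/2$, so after fixing the constants this is at most $6\,\psi(1+d^{1/2}\cos\phi)$. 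The leftover cases are trivial: $\cos\phi\le 1/2$ forces $\phi\ge\pi/3$, so if $\psi>2\phi$ or $\psi>\pi/2$ then $6\psi(1+d^{1/2}\cos\phi)>1\ge\Pi(\alpha)$.

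For Part~2, the case $\psi>2\phi$ is immediate ($\Pi(\alpha)=1\ge 1-c_2$). Otherwise I would bound $1-\Pi(\alpha) = \big(\int_a^1 g\big)\big/\big(\int_0^1 g\big)$ using (ii) and (iii), and check that condition~\eqref{eqn:psi-lb} forces $a$ — hence both $na^2$ and $1+\tau^2 a^2 = 1+\tan^2(\psi/2)$ — to be large enough for this ratio to be at most $c_2$. One splits on whether $\tau\le\sqrt n$ (where the factor $e^{-na^2/2}$ in (iii) already supplies the decay) or $\tau>\sqrt n$ (where one additionally uses $1+\tau^2 a^2\gtrsim c_3^2\ln(4/c_2)$, via $\tau a=\tan(\psi/2)$, and where~\eqref{eqn:psi-lb} becomes unsatisfiable — so the implication is vacuous — once $\cos\phi$ is small enough for fixed $d$). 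In each regime the sufficient lower bound on $\psi$ reduces, after the estimate $\tfrac{1}{\cos\phi(\sqrt d+\tau)}\lesssim \min\big(\tfrac{1}{\sqrt d\cos\phi},\,1\big)$, to precisely $\psi \gtrsim \tfrac{1}{\sqrt d\cos\phi}\sqrt{\ln\tfrac{4}{c_2}+\ln\big(1+\tfrac{1}{\sqrt d\cos\phi}\big)}$, the inner logarithmic term absorbing the $\tfrac{1}{na}$ prefactor from (iii). The hypothesis $\phi\ge c_1$ keeps $\sin\phi$ bounded away from $0$ and makes the statement non-vacuous only in the intended range of $\phi$.

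I expect Part~2 to be where the real work lies: one must juggle, uniformly in $d$, the two competing decay mechanisms of $g$ — Gaussian decay of rate $\asymp d$ in $s$, versus the polynomial factor $(1+\tau^2 s^2)^{-1}$ that takes over when $\cos\phi$ is small — and then compress the resulting sufficient condition on $\psi$ into the precise shape of~\eqref{eqn:psi-lb}, including the $\ln\big(1+\tfrac{1}{\sqrt d\cos\phi}\big)$ correction. The reductions and Part~1 are, by contrast, routine once the change of variables is in place.
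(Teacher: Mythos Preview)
Your approach is correct and takes a genuinely different route from the paper's.

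The key divergence is the substitution $s=\tan\theta/\tan\phi$, which rewrites $\Pi(\alpha)$ as $\int_0^a g / \int_0^1 g$ for the Beta-type kernel $g(s)=(1-s^2)^{n/2}/(1+\tau^2 s^2)$; this makes the two competing decay mechanisms (the ``Gaussian'' factor $(1-s^2)^{n/2}\le e^{-ns^2/2}$ of rate $n$, and the Cauchy weight $(1+\tau^2 s^2)^{-1}$) explicit and lets you import standard tail bounds directly. The paper instead stays in the $\theta$-coordinate: for Part~1 it upper-bounds the numerator by $\tfrac{\psi}{2}F(0)$ and lower-bounds the denominator by $\theta' F(\theta')$ for an ad-hoc threshold $\theta'=\min\bigl(\phi/2,\ \arctan(1/(\sqrt d\cos\phi)),\ \arccos(d^{1/4}\cos\phi)\bigr)$, then matches the exponential approximations of $F$ at the two endpoints; for Part~2 it compares $F(c_4\psi)$ to $F(\psi/2)$ and tracks the exponent difference $\tfrac{1}{\cos^2(\psi/2)}-\tfrac{1}{\cos^2(c_4\psi)}$ by hand. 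Your reduction is cleaner conceptually and makes Part~1 essentially a two-line estimate; the paper's version is more direct but pays for it with the somewhat opaque choice of $\theta'$. For Part~2 the two arguments are doing the same comparison in different coordinates, and your split on $\tau\lessgtr\sqrt n$ corresponds exactly to whether the $\cos\phi$ term or the $1/(\sqrt d\cos\phi)$ term dominates in~\eqref{eqn:psi-lb}.

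Two small points to tighten. First, the bound you state in (iii), $\int_a^\infty e^{-ns^2/2}\,ds \le \sqrt{\pi/(2n)}\,e^{-na^2/2}$, is not the trivial $\int_0^\infty$ bound but the Chernoff inequality $\Pr(Z\ge x)\le\tfrac12 e^{-x^2/2}$ for standard normal $Z$; make that citation explicit. Second, in Part~2 your sketch for the regime $\tau>\sqrt n$ is correct but compressed: the inequality $\tau/(1+\tau^2 a^2)\le 1/(2a)$ (from $(1-\tau a)^2\ge 0$) is the cleanest way to absorb the $\tau$ prefactor, after which the remaining $1/(na^2)$ factor is exactly what the $\ln\bigl(1+\tfrac{1}{\sqrt d\cos\phi}\bigr)$ term in~\eqref{eqn:psi-lb} is there to handle; spelling this out will make the role of $\phi\ge c_1$ (which in your argument only serves to keep the statement non-vacuous and $\sin\phi$ bounded below) fully transparent.
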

%then there exists some constant $c' > 0$ such that if

Before presenting the proof of Theorem~\ref{thm:pi-bounds-refined}, we first show how it concludes the proof of Theorem~\ref{thm:pi-bounds}.

\begin{proof}[Proof of Theorem~\ref{thm:pi-bounds}]
We show the two items respectively.
\begin{enumerate}

    \item Recall that $\alpha = \sin\phi$.
    If $\alpha \geq 1 - \frac{1}{8d}$, then  $\cos^2\phi = 1 - \alpha^2 \leq \frac{1}{4d}$, implying that $\cos\phi \leq \frac{1}{2\sqrt{d}}$. As $\frac{1}{2\sqrt{d}} \leq \frac{1}{2d^{1/4}}$, the conditions of item~\ref{item:ratio-ub} of Theorem~\ref{thm:pi-bounds-refined}  is satisfied. As a result,
    \[
    \Pi(\alpha) \leq 6 \cdot \del{ 
    \psi (1 + d^{\frac12} \cos\phi) }
    \leq 9 \psi. 
    \]
    %$C_1 > \frac{1}{d^{1/4}}$ and 
    \item
    Let $C_1 \in (0,1)$. Choose $\phi' := \arccos(\frac{1}{d^{1/4}})$. 
    %such that $\cos\phi' = $.
    %\[
    %    \frac{1}{\sqrt{d} C_1} \leq \cos\phi' = \frac{1}{d^{1/4}}  \leq C_1.
    %\]
    Note that $\phi' \geq \phi$, since $1 - \cos^2\phi = \alpha^2 = (1 - \frac{1}{\sqrt{d}})^2 \leq 1 - \frac{1}{\sqrt{d}} = 1 - \cos^2\phi'$. Denote by $\alpha := \sin \phi$ and $\alpha' := \sin \phi'$; we have $\alpha' \geq \alpha$. 
    
    %\note{constant?} 
    In addition, as $\phi' = \arccos(\frac{1}{d^{1/4}})$, there exists some numerical constant $c_1 > 0$ such that $\phi' \geq c_1$. Now, by item 2 of Theorem~\ref{thm:pi-bounds-refined}, there exists some $c_3 > 0$, such that when $\psi \geq \frac{ c_3  \sqrt{ \ln\frac{8}{C_1} }}{d^{1/4}} \geq c_3 \max\del{ \frac{1}{d^{1/4}}, \frac{ \sqrt{ \ln\frac{4}{C_1} + \ln\del{1 + \frac{1}{d^{1/4}}} }}{d^{1/4}} }$, $\Pi(\alpha') \geq 1 - C_1$.
    Now, as $\Pi(\cdot)$ is monotonically decreasing in $\alpha$, $\Pi(\alpha) \geq \Pi(\alpha') \geq 1 - C_1$.  
    Therefore, the theorem statement holds with $C_2 = c_3 \sqrt{ \ln\frac{8}{C_1} }$. 
    \qedhere
\end{enumerate}
\end{proof}

We now present the proof of Theorem~\ref{thm:pi-bounds-refined}.

\begin{proof}

Recall that
\[
\Pi(\alpha) 
= 
\begin{cases}
\frac{ \int_0^{\psi/2} F(\theta) d\theta }{ \int_0^{\phi} F(\theta) d\theta }, & \arcsin\alpha = \phi \geq  \psi/2 \\
1, & \arcsin\alpha = \phi < \psi/2. 
\end{cases}
\]

\begin{enumerate}
\item First we note that $\cos\phi \leq \frac1{2d^{1/4}}$ implies that $\phi \geq \frac\pi 3$.

If $\phi \leq \psi/2$, then $\psi \geq \frac{2}{3} \pi$. Therefore,
$\Pi(\alpha) = 1 \leq 6 \psi \leq 6 \cdot \del{
    \psi (1 + d^{\frac12} \cos\phi) }$ holds. 

For the rest of the proof, we focus on the case of $\phi > \psi/2$.
In this case, $\Pi(\alpha)$ equals the integral ratio $\frac{ \int_0^{\psi/2} F(\theta) d\theta }{ \int_0^{\phi} F(\theta) d\theta }$.
With foresight, define $\theta' = \min\del{ \frac{\phi}{2}, \arctan(\frac{1}{d^{\frac12} \cos\phi}), \arccos(d^{\frac14} \cos\phi) }$. As we will see below, this is a ``critical threshold'' of the integral $\int_0^\phi F(\theta) d\theta$, in the sense that the contribution of $[\theta', \psi]$ to the integral is negligible.

%=O(\frac{1}{d^{\frac14}})
    
By our assumption that $\cos\phi \leq \frac{1}{2d^{\frac14}}$, $\arccos(d^{\frac14} \cos\phi) \geq \frac\pi3$. 
    In addition, $\arctan(\frac{1}{d^{\frac12} \cos\phi}) \geq \min\del{ \frac{\pi}{4}, \frac{1}{ 2 d^{\frac12} \cos\phi} }$ by Lemma~\ref{lem:arctan} given after the proof. 
    Moreover, recall that $\phi \geq \frac \pi 3$.
    Combining the above bounds, $\theta' \geq  \min\del{ \frac \pi 6, \frac{1}{ 2 d^{\frac12} \cos\phi} }$. 
    
    We now upper bound $\Pi(\alpha)$. 
    First we upper bound the numerator: 
    \[ 
    \int_0^{\psi/2} F(\theta) d\theta \leq \psi/2 \cdot F(0) = \frac{\psi}{2} (1 - \cos^2\phi)^{\frac{d-2}{2}} \leq \frac{\psi}{2} \exp\del{-\frac{d-2}{2} \cos^2\phi}.
    \]
    
    %By our assumption on $\phi$, $\phi$ is in $[\frac{\pi}{3}, \frac{\pi}{2}]$, therefore $\frac\phi2$ is in $[\frac\pi6, \frac{\pi}{4}]$.
    We next lower bound the denominator. As $\theta' \leq \frac{\phi}{2} \leq \frac{\pi}{4}$ (since by definition, $\phi / 2 \leq \pi / 2$), this implies that $\cos^2\theta' \geq \frac12$ and hence $\phi \geq \pi/ 3 \Rightarrow \frac{\cos^2\phi}{\cos^2\theta'} \in [0,\frac12]$. 
    Therefore,
    \[
    \int_0^{\phi} F(\theta) d \theta 
    \geq
    \int_0^{\theta'} F(\theta) d \theta
    \geq 
    \theta' F(\theta') 
    =
    \theta' \del{ 1 - \frac{\cos^2\phi}{\cos^2\theta'} }^{\frac{d-2}{2}}
    \geq
    \theta' \exp\del{ - \frac{d-2}{2} \del{ \frac{\cos^2\phi}{\cos^2\theta'} + \frac{\cos^4\phi}{\cos^4\theta'}  } },
    \]
    where the last inequality uses the elementary fact that $1-x \geq \exp(-x-x^2)$ for $x \in [0,\frac12]$. 
    
    Combining the upper and lower bounds, we get that the integral ratio is bounded by:
    \[
    \frac{ \int_0^{\psi/2} F(\theta) d\theta }{ \int_0^{\phi} F(\theta) d\theta } 
    \leq 
    \frac{\psi}{2 \theta'} \exp\del{ \frac{d-2}{2} \del{ \cos^2\phi \tan^2\theta' + \frac{\cos^4\phi}{\cos^4\theta'}} }
    \]
    
    From our choice of $\theta'$, it can be easily seen that: (1) $\cos^2\phi \tan^2\theta' \leq \cos^2\phi \cdot \frac{1}{d \cos^2\phi} \leq \frac 1 d$, and (2) $\frac{\cos^4\phi}{\cos^4\theta'} \leq \frac{\cos^4\phi}{ (d^{\frac14} \cos\phi)^4 } \leq \frac 1 d$. This implies that the exponential term is at most $\exp\del{ \frac{d-2}{2} \cdot \frac{2}{d} } \leq e$.
    %are at most $\frac1d$, making the exponential term at most a constant. %In addition, $\theta' = \Omega(1)$
    
    In conclusion, we have that:
    \[
    \frac{ \int_0^{\psi/2} F(\theta) d\theta }{ \int_0^{\phi} F(\theta) d\theta } 
    \leq
    \frac{e}{2} \cdot
    \frac{\psi}{\theta'}
    \leq
    6 \cdot \del{
    \psi (1 + d^{\frac12} \cos\phi) },
    \]

    where in the last inequality we recall that 
    $\theta' \geq  \min\del{ \frac \pi 6, \frac{1}{ 2 d^{\frac12} \cos\phi} }$, and use that for $A, B > 0$,$\max(A,B) \leq A + B$. 
    %$\theta' = \Omega\del{ \min(1, \frac{1}{d^{\frac12} \cos\phi}) }$. 

\item Fix $c_1, c_2 > 0$, and let $\phi \geq c_1$.

If $\phi \leq \psi/2$, then $\Pi(\alpha) = 1 \geq 1 - c_2$ holds. 

For the rest of the proof, we focus on the case of $\phi > \psi/2$.
As $\phi \geq c_1 > 0$, $\cos\phi \leq \cos c_1 < 1$. 

Therefore there exists some small constant $c_5 > 0$ such that $\cos\phi \leq 1 - 2 c_5$; meanwhile there exists some small enough constant $c_4 < \frac14$ such that $\cos^2 (c_4 \psi) \geq 1 - c_5$ since $c_4 \psi \leq \pi/4$; as a consequence,
$\cos^2 \phi / \cos^2 (c_4 \psi) \leq \frac{1 - 2c_5}{1 - c_5} \leq 1 - c_5$. 
In summary, there exist some small enough constants $c_4, c_5 > 0$ (independent of $\phi$), such that $c_4 < \frac14$ and $\frac{\cos^2 \phi}{\cos^2 (c_4 \psi)} \leq 1- c_5$.

By Lemma~\ref{lem:1-x-approx} (deferred after the proof), there exists some constant $c_6 > 0$ (independent of $\phi$) such that \begin{equation}
    1-\frac{\cos^2\phi}{\cos^2(c_4 \psi)} \geq \exp\del{ -\del{\frac{\cos\phi}{\cos(c_4 \psi)}}^2 - c_6 \del{\frac{\cos\phi}{\cos(c_4 \psi)}}^4 }.
    \label{eqn:numerator-lb}
\end{equation}

Therefore,
  \begin{align*}
    \frac{\int_{0}^{\psi/2} F(\theta) d\theta}{\int_{\psi/2}^{\phi} F(\theta) d\theta} 
    \geq & \frac{\int_{0}^{c_4 \psi} F(\theta) d\theta}{\int_{\psi/2}^{\phi} F(\theta) d\theta}  \\
    \geq & \frac{c_4 \psi \cdot F(c_4 \psi)}{\phi \cdot F(\psi / 2)} \\
    \geq & \frac{2 c_4 \psi}{\pi} \cdot \frac{ \del{ 1 - \frac{\cos^2 \phi}{\cos^2 (c_2 \psi )}}^{(d-2)/2}}{\del{ 1 - \frac{\cos^2 \phi}{\cos^2 (\psi/2) } }^{(d-2)/2}} \\
    \geq & \frac{2 c_4 \psi}{\pi} \cdot \frac{\exp \del{ -\frac{d-2}{2} (\frac{\cos^2 \phi}{\cos^2 (c_4 \psi)} + c_6 (\frac{\cos^2 \phi}{\cos^2 (c_4 \psi)})^2) } }{\exp\del{ -\frac{d-2}{2} \frac{\cos^2 \phi}{\cos^2 (\psi/2)} } } \\
    = &\frac{2 c_4 \psi}{\pi} \cdot \exp\left(\frac{d-2}{2} \cos^2 \phi \del{ \frac{1}{\cos^2 (\psi/2)} - \frac{1}{\cos^2 (c_4 \psi)} - c_6  \frac{\cos^2 \phi}{\cos^4 (c_4 \psi)}} \right),
\end{align*}
where the first inequality is because $c_4 \leq \frac14$; the second inequality is because $F(\theta)$ is monotonically decreasing for $\theta \geq 0$; the third inequality follows from the definition of $F(\theta)$, and $\phi \leq \frac{\pi}{2}$; the fourth inequality is from Equation~\eqref{eqn:numerator-lb} as well as using $1-x \leq \exp(-x)$ to upper bound the denominator; the equality is by algebra. 

%where the second inequality uses our definition of $c_4$.
Observe: 
\begin{align*}
\frac{1}{\cos^2 (\psi/2)} - \frac{1}{\cos^2 (c_4 \psi)}
= &
\frac{\cos^2(c_4 \psi) - \cos^2(\psi/2)}{\cos^2(c_4 \psi) \cdot \cos^2(\psi/2)} \\
= & 
\frac{\sin^2(\psi/2) - \sin^2(c_4 \psi)}{\cos^2(c_4 \psi) \cdot \cos^2(\psi/2)} \\
= & \frac{(\sin(\psi/2) + \sin(c_4 \psi)) (\sin(\psi/2) - \sin(c_4 \psi)) }{\cos^2(c_4 \psi) \cdot \cos^2\psi} \\
\geq & \frac{ \frac{\psi}{2\pi} \cdot \cos(\psi/2) \frac{\psi}{4}  }{  \cos^2(c_4 \psi) \cdot \cos^2\psi } \\
\geq & \frac{\psi^2}{8\pi}. 
\end{align*}
where the first inequality uses, $\sin(\psi/2) \geq \frac{\psi}{2\pi}$, and the Lagrange mean value theorem and the choice of $c_4$, such that $c_4 \leq \frac14$ so that $\sin(\psi/2) - \sin(c_4 \psi) = (\psi/2 - c_4 \psi) \cos\xi$ for some $\xi \in [c_4 \psi, \psi/2]$, which in turn is $\geq \frac{\psi}{4} \cos(\psi/2)$; the second inequality uses that $\cos(c_4 \psi) \geq \cos(\psi/2)$, and $\cos\gamma \leq 1$ for any $\gamma$.
 
With foresight, we will choose $c_3 \geq 16\sqrt{c_6}$, and defer the exact setting of $c_3$ to the next paragraph. By the assumption of lower bound on $\psi$ (Equation~\eqref{eqn:psi-lb}),
We have $\psi \geq 16 \sqrt{c_6} \cos\phi$, and therefore $\frac{\psi^2}{8 \pi} \geq 8 c_6 \cos^2 \phi$. 
In addition, recall that $c_4 \leq \frac14$, $c_6 \frac{\cos^2 \phi}{\cos^4 (c_4\psi)} \leq c_6 \cdot \frac{\cos^2 \phi}{\cos^4(\frac \pi 8)} \leq 4 c_6 \cos^2 \phi$. Hence, 
\[
\frac{1}{\cos^2 \psi} - \frac{1}{\cos^2 (c_2 \psi)} - c_4 \frac{\cos^2 \phi}{\cos^4 (c_2\psi)}
\geq \frac{\psi^2}{8\pi} \cdot (1 - \frac12) \geq \frac{\psi^2}{16 \pi}. 
\]
%Therefore, it suffices to let $\psi$ be such that the exponential term is at least 
%by the choice of $c'$, 

%Now, by our assumption, $\psi^2 = \Omega\del{\frac{1}{d \cos^2\phi} \ln \del{\frac{\max(e, \frac{1}{d \cos^2\phi})} \zeta}  }$, which by  Lemma~\ref{lem:alnx} implies that $\psi^2 = \Omega\del{ \frac{1}{d \cos^2 \phi} \ln\frac{1}{\zeta \psi^2} }$; therefore, 
%\label{eqn:critical-thres}
We would also like to set $c_3 > 0$ such that
\begin{equation}
 \exp\del{ \frac{d-2}{2} \cos^2 \phi \cdot \frac{\psi^2}{16 \pi} } \geq  \frac{\pi}{c_2 c_4 \psi},
 \label{eqn:c_3-setting-2}
\end{equation}
because this would imply that 
%\geq \frac{c_2 \psi F(c_2 \psi)}{(\phi - \psi) F(\psi)} 
\[
\frac{\int_{0}^{\psi/2} F(\theta) d\theta}{\int_{\psi/2}^{\phi} F(\theta) d\theta} 
\geq  
\frac{2c_4 \psi}{\pi} \cdot \exp\del{ \frac{d-2}{2} \cos^2 \phi \cdot \frac{\psi^2}{16 \pi} }
\geq \frac{2}{c_2},
\]
which in turn implies
\[
\frac{\int_{0}^{\psi/2} F(\theta) d\theta}{\int_{0}^{\phi} F(\theta) d\theta} =
\frac{1}{1 + \frac{\int_{\psi/2}^{\phi} F(\theta) d\theta}{\int_{0}^{\psi/2} F(\theta) d\theta}}
=
\frac{1}{1 + c_2 / 2}
\geq
1 - c_2. 
\]
We analyze a sufficient condition for Equation~\eqref{eqn:c_3-setting-2} to hold:
\begin{align*}
    & \exp\del{ \frac{d-2}{2} \cos^2 \phi \cdot \frac{\psi^2}{16 \pi} } \geq  \frac{\pi}{c_2 c_4 \psi} \\
    \Leftarrow \; & 
    \frac{d-2}{2} \cos^2\phi \cdot \frac{\psi^2}{16\pi} \geq \ln\del{ \frac{\pi}{c_2 c_4} \cdot \frac{1}{\psi} } \\
    \Leftarrow \; & 
    \psi^2 \geq \frac{96\pi}{d \cos^2\phi} \ln\del{ \frac{2\pi}{c_2 c_4} \frac{1}{\psi^2} } \\
     \Leftarrow \; &
     \psi^2 \geq \frac{192\pi}{d \cos^2\phi} \del{ \ln\frac{8\pi}{c_2 c_4} + \ln\del{ 1 + \frac{96\pi}{d \cos^2\phi} } } \\
     \Leftarrow \; &
     \psi \geq \sqrt{ \frac{192\pi}{d \cos^2\phi} \del{ \ln\frac{8\pi}{c_2 c_4} + \ln\del{ 1 + \frac{96\pi}{d \cos^2\phi} } } }
\end{align*}
%to be the smallest number that is greater than
Therefore, choosing $c_3 = \max\del{ 16 \sqrt{c_6}, 2, 1 + \frac{\ln(96\pi) + \ln(\frac{2\pi}{c_4})}{ \ln\frac{4}{c_2} } }$ (which is independent of $\phi$), and by algebra, it satisfies 
$c_3 \frac{1}{d^{\frac12} \cos\phi} \sqrt{ \ln \frac{4}{c_2} + \ln\del{ 1+\frac{1}{d^{\frac12} \cos\phi}} } \geq \sqrt{ \frac{192\pi}{d \cos^2\phi} \del{ \ln\frac{8\pi}{c_2 c_4} + \ln\del{ 1 + \frac{96\pi}{d \cos^2\phi} } } }$, we have that Equation~\eqref{eqn:c_3-setting-2} is satisfied, and therefore $\Pi(\alpha) = \frac{\int_{0}^{\psi/2} F(\theta) d\theta}{\int_{0}^{\phi} F(\theta) d\theta} \geq 1 - c_2$.
\qedhere
\end{enumerate}
\end{proof}

\begin{lemma}
\label{lem:alnx}
For $a,b > 0$, $\zeta \in (0,1)$, if $a \geq 2b \del{ \ln\frac{4}{\zeta} + \ln(1+\frac{1}{b})}$, then $a \geq b \ln\frac{1}{\zeta a}$.
\end{lemma}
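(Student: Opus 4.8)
My plan is to reduce the claim to its ``boundary case'' via a monotonicity observation and then verify a short elementary inequality. First I would note that the function $g(a) := a - b\ln\frac{1}{\zeta a} = a + b\ln\zeta + b\ln a$ is strictly increasing on $(0,\infty)$, since $g'(a) = 1 + \frac b a > 0$. Consequently, writing $a_0 := 2b\del{\ln\frac 4\zeta + \ln\del{1+\frac1b}}$ for the threshold appearing in the hypothesis, it is enough to prove the conclusion when $a = a_0$; that is, to show $a_0 \geq b\ln\frac{1}{\zeta a_0}$, and the general case $a \geq a_0$ then follows because $g(a) \geq g(a_0)$.

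To do this I would set $L := \ln\frac 4\zeta + \ln\del{1+\frac1b} = \ln\frac{4(b+1)}{\zeta b}$, so that $a_0 = 2bL$, and record the easy fact $L > \ln 4 > 1$ (which holds because $\frac1\zeta > 1$ and $\frac{b+1}{b} > 1$). The desired inequality $2bL \geq b\ln\frac{1}{2bL\zeta}$, after dividing through by $b$ and exponentiating, is equivalent to $e^{2L}\cdot 2bL\zeta \geq 1$. Since $e^{2L} = \del{\frac{4(b+1)}{\zeta b}}^2 = \frac{16(b+1)^2}{\zeta^2 b^2}$, the left-hand side equals $\frac{32(b+1)^2 L}{\zeta b}$, so the inequality reduces to $32(b+1)^2 L \geq \zeta b$.

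Finally, I would close the argument using only the trivial bounds $\zeta < 1$, $(b+1)^2 > b$, and $L > 1$: together these give $32(b+1)^2 L > 32 b > b > \zeta b$, as needed. I do not anticipate a genuine obstacle here; the only points that require a little care are justifying the reduction to $a = a_0$ through the monotonicity of $g$, and retaining enough slack (via $L > 1$) so that the final chain of inequalities goes through comfortably. The remaining manipulations are routine logarithm/exponential bookkeeping.
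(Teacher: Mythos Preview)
Your argument is correct. The monotonicity reduction is clean: since $g(a)=a+b\ln\zeta+b\ln a$ is increasing, checking $g(a_0)\ge 0$ at the threshold $a_0=2bL$ suffices, and the remaining verification $32(b+1)^2L\ge \zeta b$ follows immediately from $L>\ln 4>1$, $(b+1)^2>b$, and $\zeta<1$.

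This is a genuinely different route from the paper's. The paper does not reduce to the boundary value; instead it splits the hypothesis as $a\ge 2b\ln\frac{1}{\zeta}$ and $a\ge 2b\ln\!\big(\max(e,\tfrac{1}{2b})\big)$, invokes an auxiliary lemma (their Lemma~\ref{lem:xlnx}, an ``$x\ln x\le y$'' bound) to turn the second inequality into $a\ge 2b\ln\frac{1}{a}$, and then averages the two to obtain $a\ge b\ln\frac{1}{\zeta a}$. Your approach is more self-contained and elementary---it avoids the helper lemma entirely and makes the slack in the constants transparent through the single inequality $32(b+1)^2L>\zeta b$. The paper's decomposition, on the other hand, isolates the two competing terms $\ln\frac{1}{\zeta}$ and $\ln\frac{1}{a}$ and may generalize more readily if one wanted to track the dependence on each separately.
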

\begin{proof}
If $a \geq 2b \del{ \ln\frac{4}{\zeta} + \ln(1+\frac{1}{b})} = 2b \del{ \ln\frac{1}{\zeta} + \ln(4+\frac{4}{b})}$, then $a \geq 2b\ln\frac{1}{\zeta}$ and $a \geq 2b \ln(\max(e, \frac1{2b}))$ hold simultaneously. 

The latter condition implies that 
$\frac{1}{a} \leq \frac{\frac 1 {2b}}{ \ln(\max(e, \frac{1}{2b})) }$.
By Lemma~\ref{lem:xlnx}, this gives $\frac{1}{a} \ln\frac{1}{a} \leq \frac{1}{2b}$, in other words, $a \geq 2b \ln\frac{1}{a}$.

Now combine this with $a \geq 2b\ln\frac1\zeta$ by taking average on both sides, we get 
$a \geq \frac{1}{2}( 2b \ln\frac1\zeta + 2b\ln\frac1a) = b \ln\frac{1}{a \zeta}$. The lemma follows.
\end{proof}

\begin{lemma}
For $y > 0$, and $x \leq  \frac{y}{\ln(\max(e,y))}$, then $x \ln x \leq y$.
\label{lem:xlnx}
\end{lemma}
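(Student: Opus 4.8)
The plan is to reduce the two-variable claim to a single-variable monotonicity check. First I would dispose of the trivial regime: whenever $x \le 1$ we have $x\ln x \le 0 < y$, so the inequality is automatic. Consequently it is enough to treat $x > 1$, and in that case the hypothesis forces the upper endpoint $x_0 := \frac{y}{\ln(\max(e,y))}$ to satisfy $x_0 \ge x > 1$; if instead $x_0 \le 1$ every admissible $x$ is $\le 1$ and there is nothing left to prove.

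Next I would use that $g(t) = t\ln t$ has derivative $g'(t) = \ln t + 1 > 0$ for $t > 1/e$, so $g$ is increasing on $[1,\infty)$. Since $1 \le x \le x_0$, this yields $x\ln x = g(x) \le g(x_0) = x_0\ln x_0$, so it suffices to establish $x_0 \ln x_0 \le y$.

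Finally, writing $M := \max(e,y) \ge e$, so that $x_0 = y/\ln M$, I would note that $x_0 \ln x_0 \le y$ is equivalent (divide by $y>0$, then multiply through by $\ln M \ge 1$) to $\ln\!\left(\frac{y}{\ln M}\right) \le \ln M$, i.e.\ to $y \le M\ln M$. But $M \ge e$ gives $\ln M \ge 1$, whence $M\ln M \ge M \ge y$ by the very definition of $M$; this closes the argument.

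The computation is elementary and I do not anticipate a genuine obstacle; the only point requiring a little care is the case analysis on whether $x$ (equivalently $x_0$) exceeds $1$, which is exactly what lets one both discard the region where $g$ is non-positive and invoke monotonicity of $g$ on the correct interval $[1,\infty)$.
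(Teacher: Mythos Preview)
Your proof is correct and follows essentially the same strategy as the paper: first dispose of the regime where $x\ln x \le 0$, then use monotonicity of $t\mapsto t\ln t$ to reduce to showing $x_0\ln x_0 \le y$ at the endpoint $x_0 = y/\ln(\max(e,y))$, and finally verify that endpoint inequality. The only cosmetic differences are that you split cases at $x=1$ rather than $x=1/e$, and you verify $x_0\ln x_0 \le y$ via the single equivalent inequality $y \le M\ln M$ with $M=\max(e,y)$, whereas the paper argues by separately considering $y\le e$ and $y>e$.
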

\begin{proof}
Define $x_0 := \frac{y}{\ln(\max(e,y))}$. We first verify that $x_0 \ln x_0 \leq y$.

\begin{enumerate}
\item If $y \leq e$, then $x_0 = y$; in this case, $x_0 \ln x_0 = y \ln y \leq y$ holds.

\item Otherwise, $y > e$. In this case, $x_0 = \frac{y}{\ln y} \leq y$. Therefore, $x_0 \ln x_0 \leq x_0 \ln y = y$. 
\end{enumerate}

Now, given $x \leq x_0$, we consider two cases of $x$: 
\begin{enumerate}
    \item If $x \leq \frac1e$, then $x \ln x < 0 < y$ holds.
    \item Otherwise, $x > \frac{1}{e}$, and since $f(x) = x \ln x$ is monotonically increasing in $(\frac1e, +\infty)$, we have that $x \ln x \leq x_0 \ln x_0 \leq y$.
\end{enumerate} 
In summary, if $x \leq x_0$, we must have $x \ln x \leq y$. 
\end{proof}

\begin{lemma}
\label{lem:1-x-approx}
For any $c_5 > 0$, there exists $c_6 > 0$ such that 
\[
1-x \geq \exp(-x - c_6 x^2), \quad  \forall x \in [0,1-c_5].
\]
\end{lemma}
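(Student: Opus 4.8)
The plan is to take logarithms and reduce the claim to a one-sided estimate on $-\ln(1-x)$. Since both sides of $1-x \geq \exp(-x-c_6 x^2)$ are positive for $x \in [0,1-c_5]$ (the left side is at least $c_5 > 0$), the inequality is equivalent to $\ln(1-x) \geq -x - c_6 x^2$, i.e. to
\[
g(x) := -\ln(1-x) - x \leq c_6 x^2, \qquad x \in [0, 1-c_5].
\]
So it suffices to produce a constant $c_6 > 0$, depending only on $c_5$, with $g(x) \leq c_6 x^2$ on this interval.

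First I would record that $g(0) = 0$ and $g'(t) = \frac{1}{1-t} - 1 = \frac{t}{1-t} \geq 0$, so $g$ is nonnegative and nondecreasing. Next, for $t \in [0, x]$ with $x \leq 1-c_5$ we have $1 - t \geq 1 - x \geq c_5$, hence $g'(t) = \frac{t}{1-t} \leq \frac{t}{c_5}$. Integrating from $0$ to $x$,
\[
g(x) = \int_0^x g'(t)\, dt \leq \frac{1}{c_5} \int_0^x t\, dt = \frac{x^2}{2 c_5}.
\]
Therefore the choice $c_6 = \frac{1}{2 c_5} > 0$ works: it gives $-\ln(1-x) - x \leq c_6 x^2$, equivalently $\ln(1-x) \geq -x - c_6 x^2$, equivalently $1-x \geq \exp(-x - c_6 x^2)$, for all $x \in [0, 1-c_5]$.

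There is no real obstacle here; the only thing worth flagging is that the constant necessarily blows up as $c_5 \to 0$ (since $g(x)/x^2 \to \infty$ as $x \to 1$), which is consistent with the statement restricting $x$ to a closed sub-interval of $[0,1)$. An alternative, slightly softer route — if one prefers not to track an explicit constant — is to note that $h(x) := g(x)/x^2$ extends continuously to $x=0$ with $h(0) = \tfrac12$ (by the Taylor expansion $-\ln(1-x) = x + x^2/2 + \cdots$), hence is bounded on the compact interval $[0, 1-c_5]$, and take $c_6$ to be its maximum; but the direct integration argument above is cleaner and yields $c_6 = \tfrac{1}{2c_5}$ outright.
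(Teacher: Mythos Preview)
Your proof is correct and follows essentially the same route as the paper: both reduce to showing $-\ln(1-x) \le x + c_6 x^2$ on $[0,1-c_5]$ and both end up with the constant $c_6 = \tfrac{1}{2c_5}$. The only cosmetic difference is that the paper bounds $-\ln(1-x)$ via its Taylor series to get $-\ln(1-x) \le x + \tfrac{x^2}{2(1-x)}$ and then invokes $1-x \ge c_5$, whereas you integrate the derivative bound $g'(t) = \tfrac{t}{1-t} \le \tfrac{t}{c_5}$ directly; these are equivalent one-line arguments.
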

\begin{proof}
It suffices to choose $c_6 > 0$ such that
\[
-\ln(1-x) \leq x + c_6 x^2, \quad  \forall x \in [0,1-c_5].
\]
By Taylor's expansion,
\begin{align*}
    -\ln(1-x)
    = & x + \sum_{i=2}^\infty \frac{x^i}{i} \\
    \leq & x + \frac{x^2}{2} \del{ \sum_{i=0}^\infty x^i } \\
    \leq & x + \frac{x^2}{2(1-x)},
\end{align*}
therefore, it suffices to choose $c_6 = \frac{1}{2c_5}$ such that the above is at most $x + c_6 x^2$ for all $x \in [0,1-c_5]$.
\end{proof}

\begin{lemma}
For $x \geq 0$,
$\arctan(x) \geq \min( \frac{\pi}{4}, \frac{x}{2} )$. 
\label{lem:arctan}
\end{lemma}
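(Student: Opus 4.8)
The plan is to split on whether $x \geq 1$ or $x \in [0,1]$. The point of this split is that the two branches of the minimum cross only at $x = \frac{\pi}{2} > 1$: for $x \leq 1$ we have $\frac{x}{2} \leq \frac12 < \frac{\pi}{4}$, so $\min(\frac{\pi}{4}, \frac{x}{2}) = \frac{x}{2}$, while for $x \geq 1$ the bound $\frac{\pi}{4}$ will already be dominated by $\arctan(x)$.

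For the regime $x \geq 1$: since $\arctan$ is monotonically increasing, $\arctan(x) \geq \arctan(1) = \frac{\pi}{4} \geq \min(\frac{\pi}{4}, \frac{x}{2})$, which is exactly what is claimed.

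For the regime $x \in [0,1]$: by the remark above it suffices to show $\arctan(x) \geq \frac{x}{2}$. I would verify this via the derivative test. Set $g(x) := \arctan(x) - \frac{x}{2}$, so that $g(0) = 0$ and $g'(x) = \frac{1}{1+x^2} - \frac12$. On $[0,1]$ we have $1 + x^2 \leq 2$, hence $\frac{1}{1+x^2} \geq \frac12$ and $g'(x) \geq 0$; thus $g$ is nondecreasing on $[0,1]$ and $g(x) \geq g(0) = 0$, i.e. $\arctan(x) \geq \frac{x}{2}$. (An equally short alternative is to use concavity of $\arctan$ on $[0,\infty)$: the chord from $(0,0)$ to $(1,\frac{\pi}{4})$ lies below the graph, giving $\arctan(x) \geq \frac{\pi}{4} x \geq \frac{x}{2}$ on $[0,1]$.)

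This is an elementary calculus fact, so there is no genuine obstacle; the only point requiring a moment's care is choosing the cut at $x = 1$ so that the $\min$ collapses to $\frac{x}{2}$ on the low regime and to the constant bound $\frac{\pi}{4}$ on the high regime, after which both cases are one line each.
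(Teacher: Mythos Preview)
Your proposal is correct and follows essentially the same approach as the paper: both split at $x=1$, handle $x\geq 1$ by monotonicity of $\arctan$, and handle $x\in[0,1]$ using that $(\arctan)'(x)=\frac{1}{1+x^2}\geq\frac12$ on $[0,1]$. The only cosmetic difference is that the paper packages the second case via the mean value theorem (writing $\arctan(x)=\frac{x}{1+\xi^2}\geq\frac{x}{2}$ for some $\xi\in[0,x]$), while you phrase it as a derivative test on $g(x)=\arctan(x)-\frac{x}{2}$; the underlying inequality is identical.
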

\begin{proof}
We consider two cases:
\begin{enumerate}
    \item If $x \geq 1$, $\arctan(x) \geq \frac{\pi}{4} \geq \min( \frac{\pi}{4}, \frac{x}{2} )$.
    \item If $x < 1$, by mean value theorem, there exists some $\xi \in [0,x]$, such that 
    $\arctan(x) = 0 + x \cdot \eval[1]{(\arctan(z))'}_{z = \xi} = \frac{x}{1+\xi^2} \geq \frac{x}{2} \geq \min( \frac{\pi}{4}, \frac{x}{2} )$. 
\end{enumerate}
The lemma follows by combining the two cases.
\end{proof}

\setcounter{proposition}{0}

\subsection{Section 5 Proofs}

In this section, we provide complementary negative results to the positive results obtained under the assumptions that: 1) $\features$ is a sphere; and 2) $\Ucal$ is the uniform distribution over $\hypothesis$, the class of homogeneous linear models. We show that removing one of the two conditions, i.e either allowing for non-spherical features (Proposition~\ref{prop:non-spherical-neg-result}) or allowing $\Ucal$ to be non-uniform over $\hypothesis$ (Proposition~\ref{prop:non-uniform-neg-result}), leads to non-monotonicity.

\begin{proposition}\label{prop:non-spherical-neg-result}

Suppose $d=2$. We have uniform prior over homogeneous linear models $\hypothesis = \{h_w \mid w \in \mathbb{R}^d, \|w\| = 1\}$, there exists a feature space $\features$ and thresholds $0 < \alpha_2 < \alpha_1$ such that $\Pi(\alpha_2) < \Pi(\alpha_1)$.
\end{proposition}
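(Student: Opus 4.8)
The plan is to exhibit an explicit four-point feature space $\Xcal \subset \RR^2$ together with thresholds $\alpha_2 = \tfrac14 < \alpha_1 = \tfrac12$ for which $\Pi(\alpha_2) < \Pi(\alpha_1)$, showing that without the spherical-feature-space assumption $\Pi$ need not be monotone in $\alpha$. The conceptual point is that once $\Xcal$ need not lie on the unit sphere, the margin $|\inner{w^*}{x}|$ of a point is decoupled from the angle between $x$ and $w^*$: a point of large norm can have huge margin (hence be an explanation) while lying almost on the decision hyperplane in angle (hence cut the version space down very tightly). This lets one engineer the version space to \emph{grow asymmetrically} as the threshold rises, which is what breaks monotonicity.

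I would work in coordinates with $w^* = (1,0)$, parameterize each $h_w \in \Hcal$ by the polar angle $\beta \in (-\pi,\pi]$ of $w$ (so $h_{w^*}$ is $\beta = 0$), and use that in $\RR^2$ the version space $\consistent$ is always an arc of angles containing $0$, on which $\Ucal(\consistent)$ is uniform, and that for a point $z$ at polar angle $\gamma_z$, $h_w(z) = +1 \iff \beta \in [\gamma_z - \tfrac\pi2,\, \gamma_z + \tfrac\pi2] \pmod{2\pi}$. The feature space I propose is $\Xcal = \{e_R, e_L, x, x'\}$, where: (i) $e_R$ lies in the fourth quadrant with large norm and polar angle $\gamma_R$ satisfying $\gamma_R + \tfrac\pi2 = v_0$ for a small $v_0 \in (0,\tfrac\pi2)$ (e.g.\ $v_0 \approx 0.2$), its norm being large enough that its margin exceeds $\tfrac12$ — so $e_R$ is an explanation at both thresholds and, alone, cuts $\consistent$ down to the half-arc $[\gamma_R - \tfrac\pi2,\, v_0]$; (ii) $e_L$ has large norm and polar angle $\gamma_L = \tfrac\pi2 - \eta$ for a tiny $\eta > 0$, its norm tuned so that its margin lies strictly in $(\tfrac14, \tfrac12]$ — so $e_L$ is an explanation at $\alpha_2$ but not at $\alpha_1$, and when present it imposes $\beta \ge -\eta$; (iii) $x', x$ have tiny norm $\rho < r/2$ and polar angles $\gamma', \gamma$ (e.g.\ $\gamma' \approx -1.4$, $\gamma \approx 1.7$) chosen so that $h^*(x') = +1 \ne -1 = h^*(x)$, $\|x - x'\| < r$, and the favorable region $G := \{\beta : h_w(x') = +1,\, h_w(x) = -1\}$ is an interval $(-p, q]$ with $p$ strictly larger than $|\gamma_R - \tfrac\pi2|$ and $q < v_0$; their margins are at most $\rho$, so neither is an explanation at either threshold. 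Verifying that some choice of $v_0, \eta, \gamma', \gamma$ makes all these inequalities hold simultaneously is a short, routine computation.

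One then computes the two version spaces: at $\alpha_1$ the only explanation is $e_R$, so $\consistent = [\gamma_R - \tfrac\pi2,\, v_0]$, whose left endpoint lies strictly inside $G$ and whose right endpoint $v_0 > q$ sticks out of $G$; at $\alpha_2$ both $e_R$ and $e_L$ are explanations, so $\consistent = [-\eta,\, v_0]$, a much shorter arc still sticking out on the right. Writing $R := \{\beta : h_w(x') = -1,\, h_w(x) = +1\}$ for the reverse region, one has $\pi_\alpha(x,x') = \frac{|\consistent \cap G| - |\consistent \cap R|}{|\consistent|}$ (all lengths being arc lengths); substituting the two arcs gives $\pi_{\alpha_1}(x,x')$ close to $1$ (numerator is the whole arc minus a sliver near $v_0$, denominator is $\pi$) while $\pi_{\alpha_2}(x,x') \to q/v_0 < 1$ as $\eta \to 0$, so $\pi_{\alpha_2}(x,x') < \pi_{\alpha_1}(x,x')$ for the chosen values. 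Finally $\Mcal_r(\Xcal) = \{(x,x'),(x',x)\}$ — $e_R$ and $e_L$ share a label and lie at distance far more than $r$ from everything else — and $\pi_\alpha(x',x) = -\pi_\alpha(x,x') < 0$, so $\Pi(\alpha_i) = \pi_{\alpha_i}(x,x')$, giving $\Pi(\alpha_2) < \Pi(\alpha_1)$.

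The crux, and the main obstacle, is arranging the \emph{asymmetric growth}: as the threshold rises from $\alpha_2$ to $\alpha_1$ and $e_L$ drops out, $\consistent$ gains a large clockwise segment (from $-\eta$ down to $\gamma_R - \tfrac\pi2$) but essentially nothing counterclockwise, and since $G$ extends much farther clockwise than $v_0$, almost all of this new mass is favorable mass, so the ratio defining $\pi$ goes \emph{up}. Making the asymmetry pronounced forces $e_L$ to sit at an angle right next to the decision boundary while keeping its margin above $\alpha_2$ — precisely what non-sphericity permits and the sphere forbids. The remaining effort is bookkeeping: the simultaneous angular inequalities in the construction, and the (wrap-around) descriptions of $G$, $R$, and the two version-space arcs.
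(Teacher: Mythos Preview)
Your construction is correct and follows essentially the same approach as the paper: an explicit finite feature set in $\RR^2$ with a single boundary pair, where the version space is an arc of angles and one additional explanation at the smaller threshold slices off a portion of the arc lying entirely in the ``favorable'' region $G$, so that the favorable fraction $\pi$ drops when the threshold decreases. The paper's version uses five points (three explanation points bounding the arc on both sides at both thresholds) and arrives at the tidy comparison $\Pi(\alpha_1)=\tfrac{\mu+\theta}{\mu+\nu}>\tfrac{\gamma+\theta}{\gamma+\nu}=\Pi(\alpha_2)$, whereas your four-point version lets $\consistent$ at $\alpha_1$ be a full half-arc and shrinks it from the left---but the mechanism (margin decoupled from angle, asymmetric shrinkage of the arc) is identical.
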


\begin{proof}
Define
$\Xcal = \cbr{x^1, x^2, x^3, z^1, z^2}$, with the choices of $x^1, x^2, x^3, z^1, z^2$ specified shortly.

\begin{figure}
    \centering
    \includegraphics[width=0.5\textwidth]{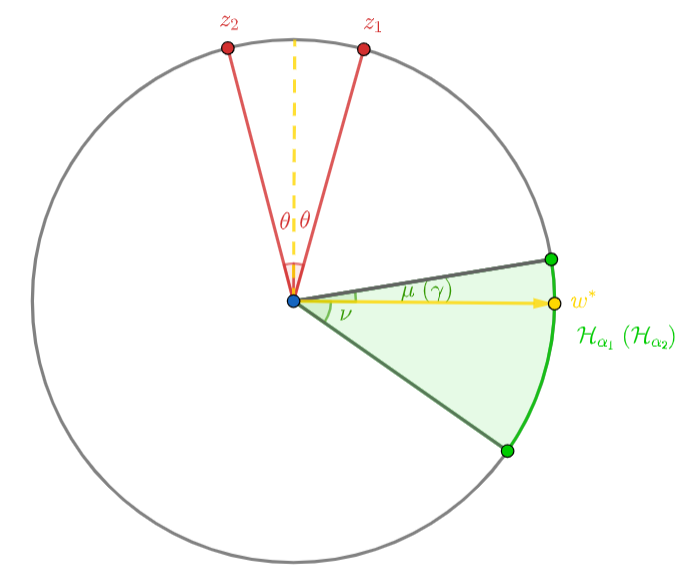}
    \caption{An illustration of $\Hcal_{\alpha_1}$ and $\Hcal_{\alpha_2}$ in the proof of Proposition~\ref{prop:non-spherical-neg-result}.}
    %; $\Hcal_{\alpha_2}$ can be viewed as replacing the upper part polar angle $\mu$ with $\gamma$.}
    \label{fig:my_label}
\end{figure}

Let $w^\star = (1,0)$, and therefore $h^\star((x_1, x_2)) = \sign(x_1)$.
Let $\theta \in (0,\frac{\pi}{4})$ be an angle.
Define $z^1 = (\frac{r}{2}\sin\theta, \frac{r}{2} \cos\theta)$, $z^2 = (-\frac{r}{2}\sin\theta, \frac{r}{2} \cos\theta)$; it can be readily seen that $\| z^1 - z^2 \| \leq r$ and $\sign(h^\star(z^1))= +1 \neq -1 = \sign(h^\star(z^2))$; therefore $(z^1, z^2) \in \Mcal_r(\Xcal)$. As we will see shortly, this is the only pair in $\Mcal_r(\Xcal)$ up to reordering.
% is a margin pair

Let $\alpha_1', \alpha_2'$ be such that $0 < r < \alpha_2' < \alpha_1'$, and angles $\gamma, \mu, \nu$ be such that $\gamma < \mu < \theta < \nu$, and $\theta + \nu < \frac\pi2$.
Define $x^1 = (\alpha_1', -\alpha_1' \cot\mu)$, $x^2 = (\alpha_1', \alpha_1' \cot\nu)$, and $x^3 = (\alpha_2', -\alpha_2' \cot\gamma)$. It can be seen that $h^*(x^1) = h^*(x^2) = h^*(x^3) = +1$; in addition, note that all of $\| x^1 - z^2 \|$, $\| x^2 - z^2 \|$, $\| x^3 - z^2 \|$ are $> r$, ensuring that $\Mcal_r(\Xcal) = \cbr{ (z^1, z^2), (z^2, z^1) }$.

Let $\alpha_2 = \alpha_2'/2$ and $\alpha_1 = (\alpha_1' + \alpha_2') / 2$. Observe that $\cbr{x \in \Xcal: \Lambda_{\alpha_1}(x) = 1} = \cbr{x^1, x^2}$,
and $\cbr{x \in \Xcal: \Lambda_{\alpha_2}(x) = 1} = \cbr{x^1, x^2, x^3}$.

\paragraph{Numerical Example.} For concreteness, we can take $\alpha_1' = 10$, $\alpha_2' = 5$, $\alpha_1 = 7.5$, $\alpha_2 = 2.5$, $r = 1$, $\gamma = \frac{\pi}{16}$, $\mu = \frac{\pi}{12}$, $\theta = \frac{\pi}{8}$, and $\nu = \frac{\pi}{4}$, which satisfy all requirements above.

%\note{with label $1$}. 

Given $w = (w_1, w_2) \in \RR^2$, denote by $\phi(w) \in (-\pi, \pi]$ its polar angle with respect to $(1,0)$ (so that $\phi((1,0)) = 0$).

%\chicheng{x^1}
We now calculate $\Pi(\alpha_1)$.
First, observe that
\begin{align*}
\Hcal_{\alpha_1} = & \cbr{h \in \Hcal: h(x^1) = 1, h(x^2) = 1} = \cbr{h_w: \|w\|_2 = 1, \phi(w) \in [-\nu,\mu]} 
\end{align*}
Therefore,
\begin{align*}
\Pi(\alpha_1) 
= &
\max_{(x,x') \in \margin}
\del{ \PP_{h_w \sim \Ucal(\Hcal_{\alpha_1})}( \inner{w}{x} \geq 0 ) - \PP_{h_w \sim \Ucal(\Hcal_{\alpha_1})}( \inner{w}{x'} \geq 0 ) }
\\
= & \abs{ \PP_{h_w \sim \Ucal(\Hcal_{\alpha_1})}( \inner{w}{z^1} \geq 0 ) - \PP_{h_w \sim \Ucal(\Hcal_{\alpha_1})}( \inner{w}{z^2} \geq 0 ) }
\\
= & \abs{ \frac{\mu + \theta}{\mu + \nu} - 0 }
= \frac{\mu + \theta}{\mu + \nu}.
\end{align*}

%\note{Technically $r$ is not defined yet}

%here $\max_{x \in \margin} \PP_{h_w \sim \Ucal(\Hcal_{\alpha_1})}( \inner{w}{x} \geq 0 ) = \PP_{h_w \sim \Ucal(\Hcal_{\alpha_1})}( \inner{w}{z^1} \geq 0 ) = \frac{\mu + \psi}{\mu + \nu}$; $\min_{x' \in \margin} \PP_{h_w \sim \Ucal(\Hcal_{\alpha_1})}( \inner{w}{x'} \geq 0 ) = \PP_{h_w \sim \Ucal(\Hcal_{\alpha_1})}( \inner{w}{z^2} \geq 0 ) = 0$. 

%Now construct a new point $x^3 = (\alpha_2 \cot\gamma,\alpha_2)$ for $\gamma < \mu$ \note{with label 1}; It can now be verified that $\cbr{x \in \Xcal: \Lambda_{\alpha_2}(x) = 1} = \cbr{x^1, x^2, x^3}$. 

We now calculate $\Pi(\alpha_2)$. First observe that
\[
\Hcal_{\alpha_2} = \cbr{h \in \Hcal: h(x^1) = 1, h(x^2) = 1, h(x^3) = 1 } = \cbr{h_w: \| w \|_2 = 1, \phi(w) \in [-\nu, \gamma]}
\]
%The induced version space $\$ is the set of homogeneous halfspaces whose normal vectors have radial angle with $w^*$ in $[-\nu, \gamma]$.
Therefore,
\begin{align*}
\Pi(\alpha_2) 
= &
\max_{(x,x') \in \margin}
\del{ \PP_{h_w \sim \Ucal(\Hcal_{\alpha_2})}( \inner{w}{x} \geq 0 ) - \PP_{h_w \sim \Ucal(\Hcal_{\alpha_2})}( \inner{w}{x'} \geq 0 ) } \\
= & 
\abs{ \PP_{h_w \sim \Ucal(\Hcal_{\alpha_2})}( \inner{w}{z^1} \geq 0 ) - \PP_{h_w \sim \Ucal(\Hcal_{\alpha_2})}( \inner{w}{z^2} \geq 0 )
}
\\
= & 
\abs{ \frac{\gamma + \theta}{\gamma + \nu} - 0
}
= \frac{\gamma + \theta}{\gamma + \nu}.
\end{align*}
%here $\max_{x \in \margin} \PP_{h_w \sim \Ucal(\Hcal_{\alpha_2})}( \inner{w}{x} \geq 0 ) = \PP_{h_w \sim \Ucal(\Hcal_{\alpha_2})}( \inner{w}{z^1} \geq 0 ) = \frac{\gamma + \psi}{\gamma + \nu}$; $\min_{x' \in \margin} \PP_{h_w \sim \Ucal(\Hcal_{\alpha_2})}( \inner{w}{x'} \geq 0 ) = \PP_{h_w \sim \Ucal(\Hcal_{\alpha_2})}( \inner{w}{z^2} \geq 0 ) = 0$. 

In conclusion, 
\[
\Pi(\alpha_1)
= 
\frac{\mu+\theta}{\mu+\nu} 
\geq 
\frac{\gamma+\theta}{\gamma+\nu}
=
\Pi(\alpha_2).
\qedhere
\]

%\note{Remark: what if we did margin-distancing via angle wrt $w^*$}

\end{proof}

\begin{proposition}\label{prop:non-uniform-neg-result}
Suppose $\features$ is the $d$-dimensional unit sphere with $d \geq 3$. There exists a non-uniform distribution $\Ucal$ over homogeneous linear models $\hypothesis$,  such that there exists thresholds $0 < \alpha_2 < \alpha_1$ with $\Pi(\alpha_2) < \Pi(\alpha_1)$.
\end{proposition}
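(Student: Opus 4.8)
The plan is to choose a prior $\Ucal$ that is an $\epsilon$-mixture of a point mass at a carefully placed direction $w_0$ and the uniform distribution on $\hypothesis$, and to exploit the fact that the version space itself does not depend on the prior: by Lemma~\ref{lemma: circle_vs}, $\consistent$ at threshold $\alpha=\sin\phi$ is exactly the spherical cap $\cbr{h_w : w\cdot w^* \geq \cos\phi}$ of angular radius $\phi$ around $w^*$, whatever $\Ucal$ is. For a small threshold $\alpha_2$ this cap will miss $w_0$, so the posterior $\Ucal(\consistent)$ is simply uniform on that cap and $\Pi(\alpha_2)$ equals the closed form of Theorem~\ref{thm:max-pi-sphere}, which is strictly below $1$. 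For a larger threshold $\alpha_1$ the cap will contain $w_0$, the posterior concentrates on $w_0$, and on a suitably chosen boundary pair $\pi_{\alpha_1}$ becomes arbitrarily close to $1$. Hence $\Pi(\alpha_1)>\Pi(\alpha_2)$.

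Concretely, take $w^* = e_1$, fix $r\in(0,2)$ and $\psi = 2\arcsin(r/2)\in(0,\pi)$, and pick angles $\psi/2 < \phi_2 < \phi_0 < \phi_1 < \pi/2$; set $\alpha_2 = \sin\phi_2$ and $\alpha_1 = \sin\phi_1$, so that $0<\alpha_2<\alpha_1<1$. Choose $\delta\in(0,\psi/2)$ and let $x' = (\sin\delta,\cos\delta,0,\ldots,0)$ and $x = (-\sin\delta,\cos\delta,0,\ldots,0)$; then $\|x-x'\| = 2\sin\delta < 2\sin(\psi/2) = r$ and $h^*(x') = +1 \neq -1 = h^*(x)$, so $(x,x')\in\margin$. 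Using the third coordinate (this is where $d\geq 3$ is used), set $w_0 = (\cos\phi_0,0,\sin\phi_0,0,\ldots,0)$, so the angle between $w_0$ and $w^*$ is $\phi_0$, while $\inner{w_0}{x'} = \cos\phi_0\sin\delta > 0$ and $\inner{w_0}{x} = -\cos\phi_0\sin\delta < 0$, i.e. $h_{w_0}$ agrees with $h^*$ on the pair. Finally let $\Ucal = (1-\epsilon)\delta_{w_0} + \epsilon\,\Ucal_{\mathrm{unif}}$, with $\epsilon\in(0,1)$ to be fixed at the end.

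It remains to evaluate the two boundary certainties. By Lemma~\ref{lemma: circle_vs}, $\consistent$ at threshold $\alpha_i$ equals the cap $C_i := \cbr{h_w : w\cdot w^* \geq \cos\phi_i}$. Since $\phi_2<\phi_0$, $w_0\notin C_2$, so the normalized restriction $\Ucal(\consistent)$ at $\alpha_2$ coincides with the uniform distribution on $C_2$, which is exactly the posterior produced by a uniform prior; hence Theorem~\ref{thm:max-pi-sphere} applies with version-space angle $\phi_2$, giving $\Pi(\alpha_2) = \frac{\int_0^{\psi/2}F(\theta)\,d\theta}{\int_0^{\phi_2}F(\theta)\,d\theta}$ with $F(\theta) = (1-\cos^2\phi_2/\cos^2\theta)^{(d-2)/2}$. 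Since $F(\theta)>0$ for $\theta\in(0,\phi_2)$ and $\psi/2<\phi_2$, this ratio is some $\rho<1$. On the other hand $\phi_0<\phi_1$ gives $w_0\in C_1$, and, writing $m:=\Ucal_{\mathrm{unif}}(C_1)\in(0,1)$,
\[
\pi_{\alpha_1}(x,x') = \PP_{h_w\sim\Ucal(\consistent)}\del{\inner{w}{x'}\geq 0} - \PP_{h_w\sim\Ucal(\consistent)}\del{\inner{w}{x}\geq 0} \geq \frac{1-\epsilon}{1-\epsilon+\epsilon m} - \frac{\epsilon m}{1-\epsilon+\epsilon m} \geq (1-\epsilon) - \frac{\epsilon}{1-\epsilon},
\]
which tends to $1$ as $\epsilon\to 0^+$. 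Choosing $\epsilon$ small enough that $(1-\epsilon)-\epsilon/(1-\epsilon) > \rho$ yields $\Pi(\alpha_1) \geq \pi_{\alpha_1}(x,x') > \rho = \Pi(\alpha_2)$, as claimed. The only delicate point is the observation that the description of $\consistent$ in Lemma~\ref{lemma: circle_vs} is prior-independent (it is, since $\consistent$ is defined purely by label-agreement with the explanation set, and Lemma~\ref{lemma: circle_vs} proves the set-equality); everything else is bookkeeping about the mixture weights. If one prefers $\Ucal$ to be a genuine density rather than a mixture with an atom, one replaces $\delta_{w_0}$ by the uniform distribution over a sufficiently small spherical cap around $w_0$ that is contained in $C_1$ but disjoint from $C_2$, which changes nothing in the argument.
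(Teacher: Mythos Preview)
Your proof is correct and follows essentially the same idea as the paper's: keep the prior uniform on the smaller cap $C_2$ so that Theorem~\ref{thm:max-pi-sphere} pins $\Pi(\alpha_2)<1$, and load extra mass in the annulus $C_1\setminus C_2$ onto directions that separate a fixed boundary pair, forcing $\Pi(\alpha_1)$ above that value. The paper realises this with a piecewise-constant density (uniform on $C_2$, uniform on the ``separating'' subset of $C_1\setminus C_2$, zero elsewhere in the annulus) and concludes via a one-line convex-combination identity $\pi_{\alpha_1}(x,x')=\pi_{\alpha_2}(x,x')\cdot\PP(C_2)+1\cdot\PP(C_1\setminus C_2)>\pi_{\alpha_2}(x,x')$, so no limiting argument in a mixture weight is needed. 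Your construction with the atom $\delta_{w_0}$ and the $\epsilon\to 0$ limit is a minor variant of the same mechanism; placing $w_0$ along $e_3$ makes the role of $d\geq 3$ pleasantly explicit, and taking $\delta<\psi/2$ (rather than $=\psi/2$) sidesteps the open-ball boundary issue in $\region(x)$.
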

\begin{proof}
WLOG, we assume that $w^* = (1, 0, \ldots, 0)$.
Define 
$x = (-\sin (\psi/2), \cos (\psi/2), 0, \ldots, 0)$ and $x' = (\sin (\psi/2), \cos (\psi/2), 0, \ldots, 0)$ which will be used later. 
It can be seen that $x, x'$ and $w^*$ are on the same 2-dimensional plane.

Let $\alpha_2, \alpha_1$ be such that $0 < \alpha_2 < \alpha_1 < 1$ and with $\phi_1 = \arcsin\alpha_1$ and $\phi_2 = \arcsin\alpha_2$, $\phi_1 > \phi_2 > \psi / 2$. We know from Lemma~\ref{lemma: circle_vs} that 
\[ 
\hypothesis_{\alpha_2} = \cbr{h_w: \| w \|_2 = 1, \langle w,  w^* \rangle \geq \sqrt{1 - \alpha_2^2}} \subset \hypothesis_{\alpha_1} = \cbr{h_w: \| w \|_2 = 1, \langle w,  w^* \rangle \geq \sqrt{1 - \alpha_1^2}},
\]
and that $\hypothesis_{\alpha_1} \backslash \hypothesis_{\alpha_2} = \cbr{h_w: \| w \|_2 = 1, \langle w,  w^* \rangle \in [ \sqrt{1 - \alpha_1^2}, \sqrt{1 - \alpha_2^2}) }$.

We define the density of the non-uniform prior $\mathcal{U}$ as follows.
Let $\mathcal{U}$ be uniform when restricted to  $\Hcal_{\alpha_2}$. 
And let $\mathcal{U}$ have positive density that is uniform over $\{h_w: w \in \hypothesis_{\alpha_1} \backslash \hypothesis_{\alpha_2}, -1 = \sign(w \cdot x) \neq \sign(w \cdot x') = +1\}$; note that this is an non-empty set as it comprises of all $w$'s whose projection onto $w^*$ has value in $[ \sqrt{1 - \alpha_1^2}, \sqrt{1 - \alpha_2^2}]$ and has polar angle wrt $w^*$ in  $[-\psi/2, \psi/2]$. Finally, let $\Ucal$ have zero density over all other parts of $w \in \hypothesis_{\alpha_1} \backslash \hypothesis_{\alpha_2}$. The density of $\mathcal{U}$ outside $\hypothesis_{\alpha_1}$ can be chosen arbitrarily. See Figure~\ref{fig:distn-u} for an illustration. 

\begin{figure}
    \centering
    \includegraphics[width=0.5\textwidth]{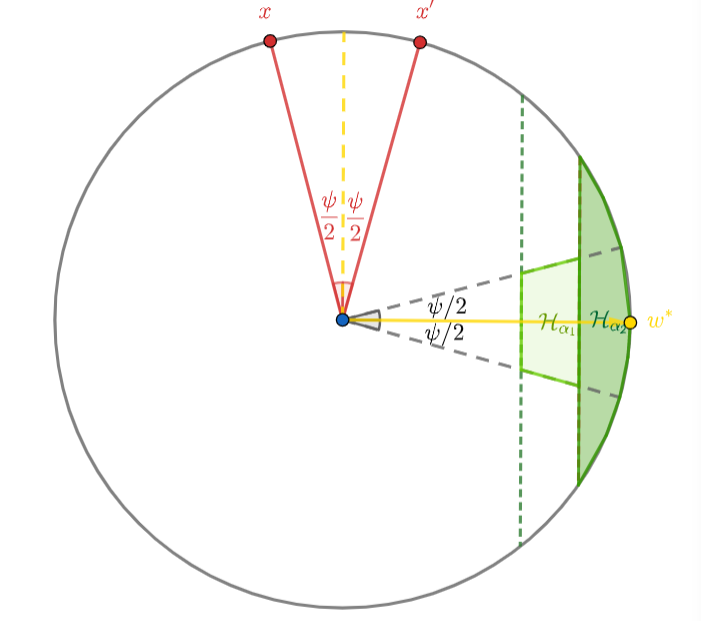}
    \caption{In the proof of Proposition~\ref{prop:non-uniform-neg-result}, a projection of $\Ucal$ onto the 2-dimensional plane spanned by $w^*$, $x$ and $x'$; it is uniform when restricted to $\Hcal_{\alpha_2}$ (the dark green region), and is concentrated in $\{h_w \in \hypothesis_{\alpha_1} \backslash \hypothesis_{\alpha_2}: -1 = \sign(w \cdot x) \neq \sign(w \cdot x') = +1\}$ (the light green region) when restricted to $\Hcal_{\alpha_1} \setminus \Hcal_{\alpha_2}$.}
    \label{fig:distn-u}
\end{figure}

%$(x, x')$ be a pair in $\Mcal_r(\Xcal)$ that maximizes $\pi_{\alpha}(x, x')$ under $S_{\alpha_2}$. , 
By the definition of $x, x'$, and the fact that $\Ucal$ is uniform when restricted to $\Hcal_{\alpha_2}$, from the proof of Theorem~\ref{thm:max-pi-sphere}, $(x, x') \in \arg\max_{(x,x') \in \Mcal_r(\Xcal)} \pi_{\alpha_2}(x, x')$; in other words, $\pi_{\alpha_2}(x, x') = \Pi(\alpha_2)$. 

%the maximizer $x, x'$ must have the following structure: if as before WLOG , then $x, x'$ make polar angle $\pi / 2 - \psi / 2, \pi / 2 + \psi / 2$ with $w^*$  respectively. 

%\chicheng{We need to define $x,x'$ properly so that $x, x', w^*$ are on the same 2d-plane; this cannot be derived.}

%$\Pr(w) = $
 
%its x-coordinate

%with respect to the rest of the support
% (by Theorem 2)
With this, we know that since $\phi_0 > \psi$, $\Pi(\alpha_2) = \pi_{\alpha_2}(x,x') < 1$. Then, 
\begin{align*}
    \Pi(\alpha_1) \geq \pi_{\alpha_1}(x, x') & = \PP_{h_w \sim \Ucal(\Hcal_{\alpha_1})}( \inner{w}{x'} \geq 0 ) - \PP_{h_w \sim \Ucal(\Hcal_{\alpha_1})}( \inner{w}{x} \geq 0 ) \\
    & = \del{ \PP_{h_w \sim \Ucal(\Hcal_{\alpha_2})}( \inner{w}{x'} \geq 0 ) - \PP_{h_w \sim \Ucal(\Hcal_{\alpha_2})}( \inner{w}{x} \geq 0 ) } \cdot \PP_{h_w \sim \Ucal(\Hcal_{\alpha_1})}(w \in \Hcal_{\alpha_2}) + \\
    & \quad  \del{ \PP_{h_w \sim \Ucal(\hypothesis_{\alpha_1} \backslash \hypothesis_{\alpha_2})}( \inner{w}{x'} \geq 0 ) - \PP_{h_w \sim \Ucal(\hypothesis_{\alpha_1} \backslash \hypothesis_{\alpha_2})}( \inner{w}{x} \geq 0 ) } \cdot \PP_{h_w \sim \Ucal(\Hcal_{\alpha_1})}(w \in \hypothesis_{\alpha_1} \backslash \hypothesis_{\alpha_2}) \\
    %&= \del{ \PP_{h_w \sim \Ucal(\Hcal_{\alpha_2})}( \inner{w}{x'} \geq 0 ) - \PP_{h_w \sim \Ucal(\Hcal_{\alpha_2})}( \inner{w}{x} \geq 0 ) } \cdot \PP_{h_w \sim \Ucal(\Hcal_{\alpha_1})}(w \in \Hcal_{\alpha_2}) \\
    %& \quad + \PP_{w' \sim \Ucal(\Hcal_{\alpha_1})}\del{ w' \in \hypothesis_{\alpha_1} \backslash \hypothesis_{\alpha_2}, -1 = \sign(w' \cdot x) \neq \sign(w' \cdot x') = +1 } \\
    &= \pi_{\alpha_2}(x, x') \cdot \PP_{h_w \sim \Ucal(\Hcal_{\alpha_1})}(w \in \Hcal_{\alpha_2}) + \PP_{h_w \sim \Ucal(\Hcal_{\alpha_1})}(w \in \hypothesis_{\alpha_1} \backslash \hypothesis_{\alpha_2}) \\
    & >  \pi_{\alpha_2}(x, x') = \Pi(\alpha_2),
\end{align*} 
%\PP_{h_w \sim \Ucal(\Hcal_{\alpha_2})}( \inner{w}{x'} \geq 0 ) - \PP_{h_w \sim \Ucal(\Hcal_{\alpha_2})}( \inner{w}{x} \geq 0 ) \\
%    & =
where the first inequality is from the definition of $\Pi(\alpha_1)$;
the 
first equality is by the definition of $\pi(\alpha_1)$; the second equality is by the total law of probability; the third equality is by the construction that $\Ucal$ has zero density in $\{h_w: w \in \hypothesis_{\alpha_1} \backslash \hypothesis_{\alpha_2},  \sign(w \cdot x) = +1 \vee \sign(w \cdot x') = -1\}$, so that $\PP_{h_w \sim \Ucal(\hypothesis_{\alpha_1} \backslash \hypothesis_{\alpha_2})}( \inner{w}{x'} \geq 0 ) = 1$ and $\PP_{h_w \sim \Ucal(\hypothesis_{\alpha_1} \backslash \hypothesis_{\alpha_2})}( \inner{w}{x} \geq 0 ) = 0$, along with the definition of $\pi_{\alpha_2}(x,x')$; 
the last inequality is strict because $\PP_{h_w \sim \Ucal(\Hcal_{\alpha_1})}(w \in \hypothesis_{\alpha_1} \backslash \hypothesis_{\alpha_2}) > 0$ and that $\Pi(\alpha_2) = \pi_{\alpha_2}(x,x') < 1$.
\end{proof}

Lastly, fixing assumptions 1 and 2, one may also wonder if it is possible to achieve any threshold $\kappa$ in the more general, non-homogeneous linear models. We saw that this is not so asymptotically in the homogeneous case (Theorem~\ref{thm:pi-bounds}). Here, we demonstrate that this does not hold in general.

\begin{figure}
    \centering
    \includegraphics[scale=0.5]{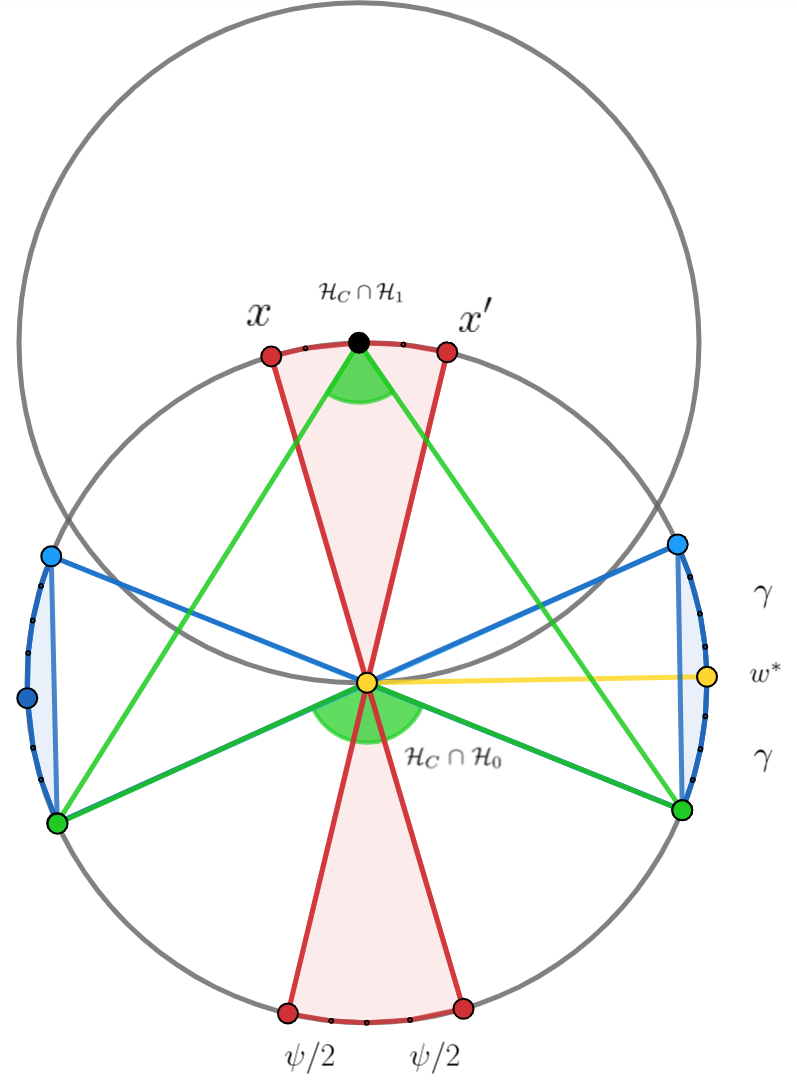}
    %In this diagram, we visualize t
    \caption{The construction in Proposition~\ref{prop:non-homog-neg-result}. In blue are the explanations, in green are the decision boundaries of models in the version space, in red is the margin region and in yellow is $w^*$.}
    \label{fig: nonmonotonic}
\end{figure}

%\chicheng{It occurs to me that if we just define $\Hcal$ to be $\Hcal_1$ we can show $\Pi(\alpha) = 1$, for $\alpha \in [0,1)$. Tom, it is because you want to ensure ``$\Pi(\alpha)$ decreases monotonically (and strictly so at some point)'' that we introduce this extra $\Hcal_0$?} \note{Yep, otherwise you can find some trivial examples.}

\begin{proposition}\label{prop:non-homog-neg-result}
There exists a class of 2-dimensional non-homogeneous linear models, with spherical $\features$  such that $\Pi(\alpha)$ decreases monotonically (and strictly so at some point) with increasing $\alpha$, and yet $\Pi(\alpha) \geq 1 / 3$ for all $\alpha \in [0,1)$ and $\psi \in (0,\pi]$.
\end{proposition}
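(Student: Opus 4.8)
The plan is to build a ``double fan'' of lines on the unit circle $\features = \{x \in \RR^2 : \|x\|_2 = 1\}$. Take $h^* = \sign(x_2)$, whose decision boundary meets $\features$ at $p_+ = (1,0)$ and $p_- = (-1,0)$, with margin score $f^*(x) = x_2$. Fix $\gamma_0 = \pi/4$ and let $\Hcal$ be the union of two one-parameter families: $\{g_\gamma : |\gamma|\le\gamma_0\}$, where $g_\gamma$ is the line through $p_+$ making angle $\gamma$ with the boundary of $h^*$, and $\{g'_\gamma : |\gamma|\le\gamma_0\}$, defined the same way at $p_-$; both families contain $h^*$ at $\gamma=0$. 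Let $\Ucal$ be the equal mixture of the uniform distribution on each family. The one geometric fact I use repeatedly is that a half-angle computation shows $g_\gamma$ agrees with $h^*$ everywhere on $\features$ except on an arc hugging $p_-$ of exact angular width $2|\gamma|$ (and symmetrically $g'_\gamma$ near $p_+$); in particular every $g_\gamma$ agrees with $h^*$ on a fixed neighbourhood of $p_+$. (To make every member strictly non-homogeneous, replace $h^*=\sign(x_2)$ by $\sign(x_2-c)$ for a small $c>0$ and pivot about the perturbed crossing points; all estimates below then change only by $O(c)$.)

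First I would characterise the version space. Because $\Lambda_\alpha$ thresholds $|f^*(x)|=|x_2|$, margin-distancing deletes precisely the explanations in the strip $\{|x_2|\le\alpha\}$, i.e.\ two arcs, one around each of $p_\pm$. A fan line lies in $\consistent$ exactly when its single disagreement arc is swallowed by the deleted strip, which (for $\gamma_0=\pi/4$, so the pivot bound never binds) happens iff its pivot angle satisfies $|\gamma|\le\Gamma(\alpha):=\tfrac12\arcsin\alpha$. Hence $\Ucal(\consistent)$ is the equal mixture of $\mathrm{Unif}[-\Gamma(\alpha),\Gamma(\alpha)]$ over the two families, and $\Gamma(\cdot)$ is continuous and strictly increasing on $[0,1)$.

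Next, the lower bound and the closed form. For any $\psi\in(0,\pi]$ choose a boundary pair $(x,x')$ close to $p_+$ with $h^*(x)=-1\neq+1=h^*(x')$, and write $t_1,t_2>0$ for the angular offsets of $x,x'$ from $p_+$. Every $g_\gamma$ agrees with $h^*$ near $p_+$, so it labels $x$ negative and $x'$ positive, contributing a gap of $1$ to $\Pr_{h\sim\Ucal(\consistent)}(h(x')=1)-\Pr_{h\sim\Ucal(\consistent)}(h(x)=1)$ from its (mass-$\tfrac12$) family, while the $g'$ family contributes a non-negative net amount; thus $\pi_\alpha(x,x')\ge\tfrac12$, so $\Pi(\alpha)\ge\tfrac12>\tfrac13$ for all $\alpha\in[0,1)$ and all $\psi\in(0,\pi]$ --- already stronger than asserted. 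A direct evaluation of the $g'$ contribution gives $\pi_\alpha(x,x')=\tfrac12+\tfrac12\Pr_{\gamma\sim\mathrm{Unif}[-\Gamma(\alpha),\Gamma(\alpha)]}(\gamma\in[-t_1/2,\,t_2/2))$, and optimising over admissible $t_1,t_2$ (the pair lies in $\margin$ iff $t_1+t_2<\psi$) yields $\Pi(\alpha)=\min\bigl(1,\ \tfrac12+\tfrac{\psi}{4\arcsin\alpha}\bigr)$, the cap being attained exactly when $\psi\ge 2\arcsin\alpha$. Monotonicity follows from this formula: $\Pi$ is non-increasing on $[0,1)$ and strictly decreasing on $(\sin(\psi/2),1)$, which is non-empty whenever $\psi<\pi$; equivalently, for each fixed boundary pair, enlarging $\consistent$ only admits pivoted lines whose net effect drives $\Pr(h(x')=1)-\Pr(h(x)=1)$ down toward $\tfrac12$, never above its current value.

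I expect the main obstacle to be establishing the closed form --- hence the exact monotonicity --- uniformly over \emph{all} boundary pairs, not just the small ones hugging $p_\pm$. When $\psi$ is close to $\pi$ a boundary pair can subtend a large arc, and then a line pivoted about $p_+$ need no longer agree with $h^*$ at both endpoints, so one must run a short case analysis on where each pivoted line's disagreement arc sits relative to the pair; the content of that analysis is that such wide pairs never produce a value of $\pi_\alpha$ exceeding $\min(1,\tfrac12+\tfrac{\psi}{4\arcsin\alpha})$, so the closed form --- and with it monotonicity and the bound $\Pi(\alpha)\ge\tfrac13$ --- survives (note the sole genuinely flat case is $\psi=\pi$, where $\Pi\equiv1$). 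A secondary, purely bookkeeping point is carrying the clean half-angle identities over to the $c>0$ perturbation used to make the whole class non-homogeneous, which is routine since the disagreement arcs vary continuously with $c$.
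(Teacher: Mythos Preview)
Your construction is correct and takes a genuinely different route from the paper's. The paper sets $h^*=\sign(x_1)$ and $\Hcal=\Hcal_0\cup\Hcal_1$, where $\Hcal_0$ is the full homogeneous fan through the origin and $\Hcal_1$ is the fan of lines through the single point $(0,1)$ on the circle, with $\Ucal$ the equal mixture. Because $\Hcal_0$ is exactly the Section~\ref{sec:sphere} setting, Theorem~\ref{thm:max-pi-sphere} supplies its contribution $\min\bigl(1,\psi/(2\arcsin\alpha)\bigr)$ for free; the $\Hcal_1$ part contributes at most $1$ for every boundary pair and exactly $1$ at the pair that already maximises the $\Hcal_0$ part, giving $\Pi(\alpha)=\tfrac13+\tfrac23\min\bigl(1,\psi/(2\arcsin\alpha)\bigr)$ with no fresh optimisation. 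Your double fan pivots instead at \emph{both} boundary crossings $p_\pm$; neither family is the homogeneous one, so Theorem~\ref{thm:max-pi-sphere} is unavailable and you must carry out the maximisation over $\Mcal_r(\Xcal)$ from scratch---hence the wide-pair case you flag, which does go through (for $\psi\le\pi$ every boundary pair's short arc contains exactly one of $p_\pm$, and one checks directly that a pair with one angular offset exceeding $\pi-2\Gamma(\alpha)$ yields a strictly smaller $\pi_\alpha$ than the symmetric small pair). The payoff of your route is a sharper floor $\Pi(\alpha)\ge\tfrac12$ and a more symmetric picture; both constructions produce the same shape $\Pi(\alpha)=\lambda+(1-\lambda)\min\bigl(1,\psi/(2\arcsin\alpha)\bigr)$ with $\lambda=\tfrac13$ (paper) versus $\lambda=\tfrac12$ (yours). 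One step you can drop: the perturbation to make every member non-homogeneous is unnecessary, since the paper's own class contains all of $\Hcal_0$---the proposition only asks that the class not be \emph{purely} homogeneous.
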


\begin{proof}
%We will focus on the case . 
Let the hypothesis class of interest be $\hypothesis = \hypothesis_{1} \cup \hypothesis_0$, where
\[ \hypothesis_0 = \cbr{ x \mapsto \sign(w_1 x_1 + w_2 x_2): \|w \|_2 = 1} \] 
is its homogeneous part, 
and
\[ 
\hypothesis_1 = \cbr{ x \mapsto \sign(w_1 x_1 + w_2 (x_2 - 1)): \|w \|_2 = 1}
\]
is its non-homogeneous part.

We will take same setting as before $\features$ is a unit circle centered at $(0,0)$ and $\Ecal_{h^*}(\Xcal, \alpha) = \{x \in \features \mid \Lambda_{\alpha}(x) = 1\}$.  We assume an uniform prior $\Ucal$ over $\hypothesis$, i.e. drawing $i \sim \Bernoulli(\frac12)$, and chooses a classifier uniformly at random from $\Hcal_i$ induces $\Ucal$. 

Let $h^*(x) = x \mapsto \sign(x_1)$, which is a member of $\Hcal$. We consider a boundary pair  $(x, x') \in \Mcal_r(\Xcal)$ where $\|x' - x \|_2 \leq r$, $h^*(x') = +1 \neq -1 = h^*(x)$. 

%maximizes $\pi_{\alpha}$ under $\Ecal_{h^*}(\Xcal, \alpha)$.

Given $w = (w_1, w_2) \in \RR^2$, denote by $\phi(w) \in (-\pi, \pi]$ its polar angle with respect to $(1,0)$ (so that $\phi((1,0)) = 0$).

Given a value of $\alpha \in [0,1)$, the induced explanation set 
\[ 
\Ecal_{h^*}(\Xcal,\alpha) = \cbr{x \in \Xcal: \phi(x) \in [-\pi, -\pi + \gamma) \cup (-\gamma, \gamma) \cup (\pi - \gamma, \pi]},
\]
with $\gamma = \arccos \alpha \in (0, \frac \pi 2]$.

%span polar angle $\gamma$ to each side of $w^*$  %\chicheng{would be good to remind $\alpha = \sin\gamma$} 
%the same label and how much of it predicts the pair

We will examine the structure of version space $\consistent$ and count how much of it predicts $(x, x')$ differently. Please refer to Figure~\ref{fig: nonmonotonic} for an illustration. We will look at $\consistent \cap \hypothesis_1$ and $\consistent \cap \hypothesis_0$ respectively. 

% of $\consistent$'s structure
%visualization of the construction.

% with opposite labels

\paragraph{Part 1: $\consistent \cap \hypothesis_1$.} For any $h \in \consistent \cap \hypothesis_1$, it always holds that $h(x) = +1$ and $h(x') = -1$ as long as $\gamma > 0$. This is because if the explanation is nonempty, then it includes points $(-1, 0)$ and $(1, 0)$, which enforces that any $h \in \consistent \cap \hypothesis_1$ must be a subset of $h \in \hypothesis_1$ with polar angle in interval $[- \pi / 4, \pi /4]$ and all such $h$'s predict $(x, x')$ differently. More specifically, 
\[
\consistent \cap \hypothesis_1 
= 
\cbr{ x \mapsto \sign(w_1 x_1 + w_2 (x_2 - 1)): \|w \|_2 = 1, \phi(w) \in \intcc{-(\frac \pi 4 - \frac \gamma 2), \frac \pi 4 - \frac \gamma 2} },
\]
whose total arc length of $\frac \pi 2 - \gamma$. 
To summarize,
\[
\PP_{h \sim \Ucal} \del{ h \in \Hcal_C \cap \Hcal_1 } 
= \frac 12 \cdot \frac{\frac \pi 2 - \gamma}{2\pi} 
=
\frac{\frac\pi2-\gamma}{4\pi},
\]
and
\[
\PP_{h \sim \Ucal(\Hcal_C \cap \Hcal_1)}( h(x') = +1) - \PP_{h \sim \Ucal(\Hcal_C \cap \Hcal_1)}( h(x) = +1 ) = 1.
\]

\paragraph{Part 2: $\consistent \cap \hypothesis_0$.} As we showed in Lemma 1, 
\[ \hypothesis_0 = \cbr{ x \mapsto \sign(w_1 x_1 + w_2 x_2): \|w \|_2 = 1, \phi(w) \in \intcc{-(\frac\pi2-\gamma), \frac\pi2-\gamma} }, \] 
whose total arc length is $\pi-2\gamma$.

In addition, by Theorem~\ref{thm:max-pi-sphere} with $d=2$ with $\phi = \frac\pi2 - \gamma$, we have
\[
\max_{(x, x') \in \margin} \del{\PP_{h \sim \Ucal(\Hcal_C \cap \Hcal_0)}( h(x') = +1) -  \PP_{h \sim \Ucal(\Hcal_C \cap \Hcal_0)} ( h(x) = +1 )} = \begin{cases}
\frac{\psi}{2(\frac\pi2 - \gamma)}  & \psi \leq 2(\frac\pi2 - \gamma), \\
1 & \psi > 2(\frac\pi2 - \gamma).
\end{cases}
\]
To summarize,
\[
\PP_{h \sim \Ucal} \del{ h \in \Hcal_C \cap \Hcal_0 } = \frac{2(\frac\pi2-\gamma)}{4\pi}
\]

which is twice $\PP_{h \sim \Ucal} \del{ h \in \Hcal_C \cap \Hcal_1 }$ and,

\[
\max_{(x, x') \in \margin} \del{ \PP_{h \sim \Ucal(\Hcal_C \cap \Hcal_0)}( h(x') = +1) -  \PP_{h \sim \Ucal(\Hcal_C \cap \Hcal_0)} ( h(x) = +1 ) }
 = \min\del{ 1, \frac{\psi}{2(\frac\pi2 - \gamma)}}
 \]

Combining the two parts, observe that 
$\PP_{h \sim \Ucal(\Hcal_C)}(h \in \Hcal_C \cap \Hcal_0) = \frac23$, and  
by the law of total probability, 
\begin{align*}
\Pi(\alpha)
= &
\max_{(x,x') \in \Mcal_r(\Xcal)} \pi_\alpha(x,x') \\
= &
\max_{(x,x') \in \Mcal_r(\Xcal)} \del{ \PP_{h \sim \Ucal(\Hcal_C)}(h(x) = +1)
-
\PP_{h \sim \Ucal(\Hcal_C)}(h(x') = +1)} \\
= &
\max_{(x,x') \in \Mcal_r(\Xcal)} \left(
\PP_{h \sim \Ucal(\Hcal_C)}(h \in \Hcal_C \cap \Hcal_0) \cdot \del{ \PP_{h \sim \Ucal(\Hcal_C \cap \Hcal_0)}( h(x') = +1) - \PP_{h \sim \Ucal(\Hcal_C \cap \Hcal_0)}( h(x) = +1 )} \right. \\
& \left. + 
\PP_{h \sim \Ucal(\Hcal_C)}(h \in \Hcal_C \cap \Hcal_1) \cdot \del{ \PP_{h \sim \Ucal(\Hcal_C \cap \Hcal_1)}( h(x') = +1) - \PP_{h \sim \Ucal(\Hcal_C \cap \Hcal_1)}( h(x) = +1 )} \right) \\
= & 
\frac23 \cdot \min\del{ 1, \frac{\psi}{2(\frac\pi2 - \gamma)}} + \frac13  \\ 
\geq &  \frac13
\end{align*}

through which we see that $\Pi(\alpha)$ is increasing in $\gamma$ and strictly so for when $\pi /2  - \gamma > \psi/2$.
In other words, $\Pi(\alpha)$ is identically $1$ for $\alpha \in [0, \sin(\psi/2)]$, and is strictly decreasing in $\alpha$ for $\alpha \in [\sin(\psi/2), 1)$.
\end{proof}

\section{Additional Experiments}

\subsection{Fair accessibility to explanations}

A notable concern that may arise with margin distancing is that omission of prototypical explanations is necessary for regions close to the margin. Thus, this could disproportionately affect individuals in those regions, since they will not have their representative explanation be in the explanation set. We plot the composition of margin set in Figure~\ref{fig: margin_train_and_moral_hazard} with a threshold of $0.03$ 
for both logistic and SVM models and note that there is some disproportionate effect. Verily, this is another important factor that needs to be taken into account in the explanation generation process.

\begin{figure}
\begin{center}
\begin{subfigure}[b]{0.33\textwidth}
    \centering
    \includegraphics[width=\linewidth]{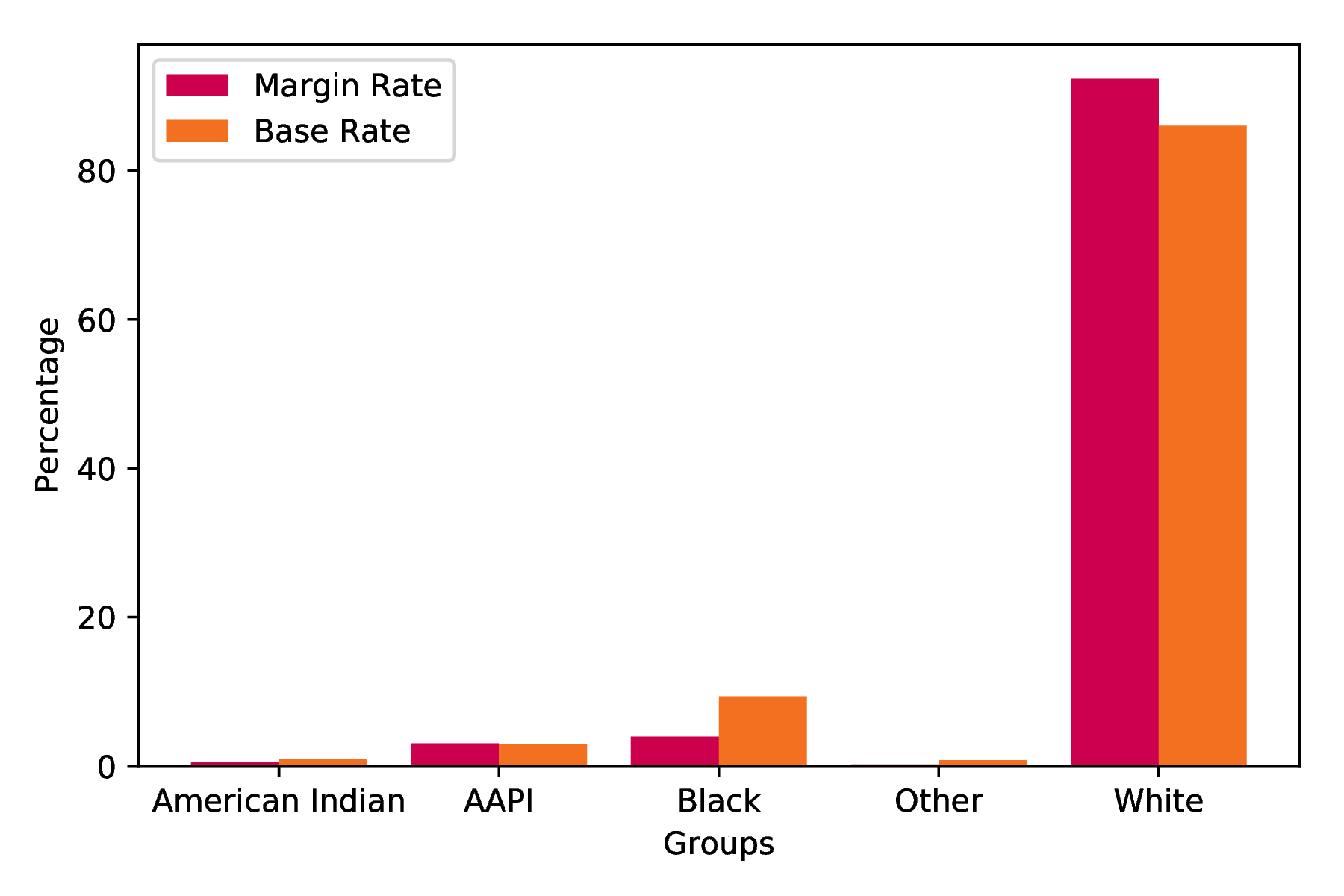}
\end{subfigure}%
\begin{subfigure}[b]{0.33\textwidth}
    \centering
    \includegraphics[width=\linewidth]{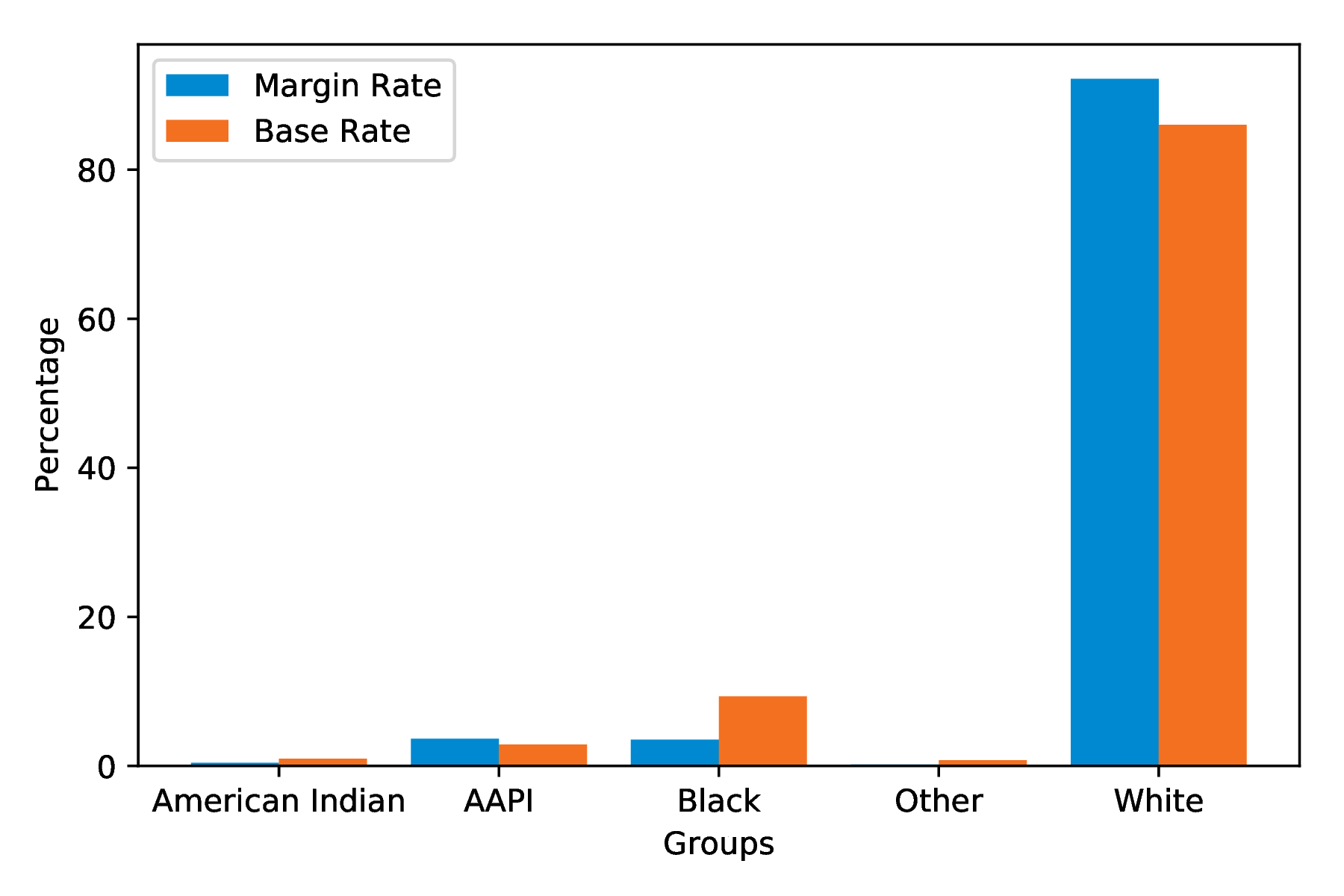}
\end{subfigure}%
\caption{Racial composition of margin points under LR (left) and SVM (right).}
\label{fig: margin_train_and_moral_hazard}
\end{center}
\end{figure}

\subsection{MMD Explanations}

We include results on the trend of the three metrics under MMD-Critic explanations to further empirically trace how the boundary certainty varies with explanation omission. Similar to the MLP results under $k$-medoid, we see that in Figure~\ref{fig:mmd_mlp} the trend is almost monotonic everywhere. One difference however, is that the boundary certainty does not drop off as fast as in the $k$-medoid setting. This suggests that the search strategy of trying small omission percentages may work with some explanation methods such as the $k$-medoid, but will not with others like MMD-Critic.

\subsection{Effects of Larger Models} 

We include results on the trend of the three metrics for a two hidden-layer MLP to showcase the effects of larger models. In Figure~\ref{fig:mlp_bigger}, we see similar trends under both explanations, but with higher values across the board in comparison with the one-layer case. Again, as in the one-layer MLP case, under MMD-critic explanations, the drop in the metrics are slower than the drop under $k$-medoid explanations.

\subsection{Monotonicity Tables}
\label{sec:monot_tables}

We present tables charting the differences between the percentage of explanations omitted calculated through binary search and the optimal percentage of explanation calculated through a left-to-right linear search, for ten, equally spaced out values of target boundary certainty corresponding to Figure~\ref{fig: credit_linear} in Tables~\ref{tab:max-r01} through~\ref{tab:avg-r03}.

\begin{table}
\centering
\begin{tabular}{||c c c c||} 
 \hline
 Target Certainty & Binary Search & Optimal & Difference \\ [0.5ex] 
 \hline\hline
 0.036 & 45 & 10 & 35 \\ 
 \hline
 0.046 & 45 & 10 & 35  \\
 \hline
 0.055 & 10 & 10 & 0 \\
 \hline
 0.065 & 10 & 10 & 0 \\
 \hline
 0.075 & 10 & 10 & 0  \\ 
 \hline
 0.084 & 10 & 10 & 0 \\
 \hline
 0.094 & 5 & 5 & 0 \\
 \hline
 0.103 & 5 & 5 & 0 \\
 \hline
 0.113 & 5 & 5 & 0 \\
 \hline
 0.122 & 5 & 5 & 0 \\ 
 \hline
\end{tabular}
\caption{\label{tab:max-r01} Difference table with the max metric and at $r=0.1$}
\end{table}

\begin{table}
\centering
\begin{tabular}{||c c c c||} 
 \hline
 Target Certainty & Binary Search & Optimal & Difference \\ [0.5ex] 
 \hline\hline
 0.071 & 70 & 15 & 55 \\ 
 \hline
 0.11 & 45 & 10 & 35  \\
 \hline
 0.15 & 10 & 10 & 0 \\
 \hline
 0.18 & 10 & 10 & 0 \\
 \hline
 0.22 & 10 & 10 & 0  \\ 
 \hline
 0.26 & 5 & 5 & 0 \\
 \hline
 0.30 & 5 & 5 & 0 \\
 \hline
 0.33 & 5 & 5 & 0 \\
 \hline
 0.37 & 5 & 5 & 0 \\
 \hline
 0.41 & 5 & 5 & 0 \\ 
 \hline
\end{tabular}
\caption{\label{tab:max-r02} Difference table with the max metric and at $r=0.2$}
\end{table}

\begin{table}
\centering
\begin{tabular}{||c c c c||} 
 \hline
 Target Certainty & Binary Search & Optimal & Difference \\ [0.5ex] 
 \hline\hline
 0.16 & 65 & 65 & 0 \\ 
 \hline
 0.23 & 25 & 25 & 0  \\
 \hline
 0.31 & 10 & 10 & 0 \\
 \hline
 0.39 & 5 & 5 & 0 \\
 \hline
 0.47 & 5 & 5 & 0  \\ 
 \hline
 0.55 & 5 & 5 & 0\\
 \hline
 0.63 & 5 & 5 & 0 \\
 \hline
 0.7 & 5 & 5 & 0 \\
 \hline
 0.78 & 5 & 5 & 0 \\
 \hline
 0.86 & 5 & 5 & 0 \\ 
 \hline
\end{tabular}
\caption{\label{tab:max-r03} Difference table with the max metric and at $r=0.3$}
\end{table}

\begin{table}
\centering
\begin{tabular}{||c c c c||} 
 \hline
 Target Certainty & Binary Search & Optimal & Difference \\ [0.5ex] 
 \hline\hline
 0.03 & 45 & 10 & 35 \\ 
 \hline
 0.04 & 45 & 10 & 35  \\
 \hline
 0.05 & 10 & 10 & 0 \\
 \hline
 0.06 & 10 & 10 & 0 \\
 \hline
 0.07 & 10 & 10 & 0 \\ 
 \hline
 0.08 & 10 & 10 & 0\\
 \hline
 0.09 & 5 & 5 & 0\\
 \hline
 0.1 & 5 & 5 & 0 \\
 \hline
 0.11 & 5 & 5 & 0\\
 \hline
 0.12 & 5 & 5 & 0 \\ 
 \hline
\end{tabular}
\caption{\label{tab:avg95-r01} Difference table with the top $5$ percentile average and at $r=0.1$}
\end{table}

\begin{table}
\centering
\begin{tabular}{||c c c c||} 
 \hline
 Target Certainty & Binary Search & Optimal & Difference \\ [0.5ex] 
 \hline\hline
 0.05 & 65 & 15 & 50 \\ 
 \hline
 0.07 & 40 & 10 & 30 \\
 \hline
 0.1 & 10 & 10 & 0 \\
 \hline
 0.12 & 10 & 10 & 0 \\
 \hline
 0.14 & 10 & 10 & 0 \\ 
 \hline
 0.17 & 5 & 5 & 0\\
 \hline
 0.19 & 5 & 5 & 0\\
 \hline
 0.21 & 5 & 5 & 0 \\
 \hline
 0.24 & 5 & 5 & 0\\
 \hline
 0.26 & 5 & 5 & 0\\
 \hline
\end{tabular}
\caption{\label{tab:avg95-r03} Difference table with the top $5$ percentile average and at $r=0.2$}
\end{table}

\begin{table}
\centering
\begin{tabular}{||c c c c||} 
 \hline
 Target Certainty & Binary Search & Optimal & Difference \\ [0.5ex] 
 \hline\hline
 0.11 & 65 & 65 & 0\\ 
 \hline
 0.17 & 25 & 25 & 0\\
 \hline
 0.23 & 10 & 10 & 0\\
 \hline
 0.3 & 10 & 10 & 0\\
 \hline
 0.36 & 5 & 5 & 0\\ 
 \hline
 0.42 & 5 & 5 & 0\\
 \hline
 0.48 & 5 & 5 & 0\\
 \hline
 0.54 & 5 & 5 & 0 \\
 \hline
 0.6 & 5 & 5 & 0\\
 \hline
 0.66 & 5 & 5 & 0\\
 \hline
\end{tabular}
\caption{\label{tab:avg95-r03} Difference table with the top $5$ percentile average and at $r=0.3$}
\end{table}

\begin{table}
\centering
\begin{tabular}{||c c c c||} 
 \hline
 Target Certainty & Binary Search & Optimal & Difference \\ [0.5ex] 
 \hline\hline
 0.008 & 45 & 15 & 30\\ 
 \hline
 0.014 & 45 & 10 & 35\\
 \hline
 0.019 & 40 & 10 & 30\\
 \hline
 0.025 & 10 & 10 & 0\\
 \hline
 0.031 & 5 & 5 & 0\\ 
 \hline
 0.037 & 5 & 5 & 0\\
 \hline
 0.042 & 5 & 5 & 0\\
 \hline
 0.048 & 5 & 5 & 0\\
 \hline
 0.054 & 5 & 5 & 0\\
 \hline
 0.06 & 5 & 5 & 0\\
 \hline
\end{tabular}
\caption{\label{tab:avg-r01} Difference table with the average and at $r=0.1$}
\end{table}

\begin{table}
\centering
\begin{tabular}{||c c c c||} 
 \hline
 Target Certainty & Binary Search & Optimal & Difference \\ [0.5ex] 
 \hline\hline
 0.013 & 65 & 10 & 55\\ 
 \hline
 0.02 & 45 & 10 & 35\\
 \hline
 0.027 & 10 & 10 & 0\\
 \hline
 0.034 & 10 & 10 & 0\\
 \hline
 0.041 & 5 & 5 & 0\\ 
 \hline
 0.049 & 5 & 5 & 0\\
 \hline
 0.056 & 5 & 5 & 0\\
 \hline
 0.063 & 5 & 5 & 0\\
 \hline
 0.07 & 5 & 5 & 0\\
 \hline
 0.077 & 5 & 5 & 0\\
 \hline
\end{tabular}
\caption{\label{tab:avg-r02} Difference table with the average and at $r=0.2$}
\end{table}

\begin{table}
\centering
\begin{tabular}{||c c c c||} 
 \hline
 Target Certainty & Binary Search & Optimal & Difference \\ [0.5ex] 
 \hline\hline
 0.044 & 65 & 65 & 0\\ 
 \hline
 0.075 & 40 & 30 & 10\\
 \hline
 0.106 & 10 & 10 & 0\\
 \hline
 0.137 & 10 & 10 & 0\\
 \hline
 0.168 & 5 & 5 & 0\\ 
 \hline
 0.199 & 5 & 5 & 0\\
 \hline
 0.229 & 5 & 5 & 0\\
 \hline
 0.26 & 5 & 5 & 0\\
 \hline
 0.291 & 5 & 5 & 0\\
 \hline
 0.322 & 5 & 5 & 0\\
 \hline
\end{tabular}
\caption{\label{tab:avg-r03} Difference table with the average and at $r=0.3$}
\end{table}

%%%%%%%%%%%%%%%%%%%%%%%%%%%%%%%%%
%%%%%%%%%%%%%%%%%%%%%%%%%%%%%%%%%
%%%%%%%%%%%Other Graphs%%%%%%%%%%%
%%%%%%%%%%%%%%%%%%%%%%%%%%%%%%%%%
%%%%%%%%%%%%%%%%%%%%%%%%%%%%%%%%%

\begin{figure*}
  \begin{subfigure}[b]{.33\textwidth}
    \centering
    \includegraphics[width=\linewidth]{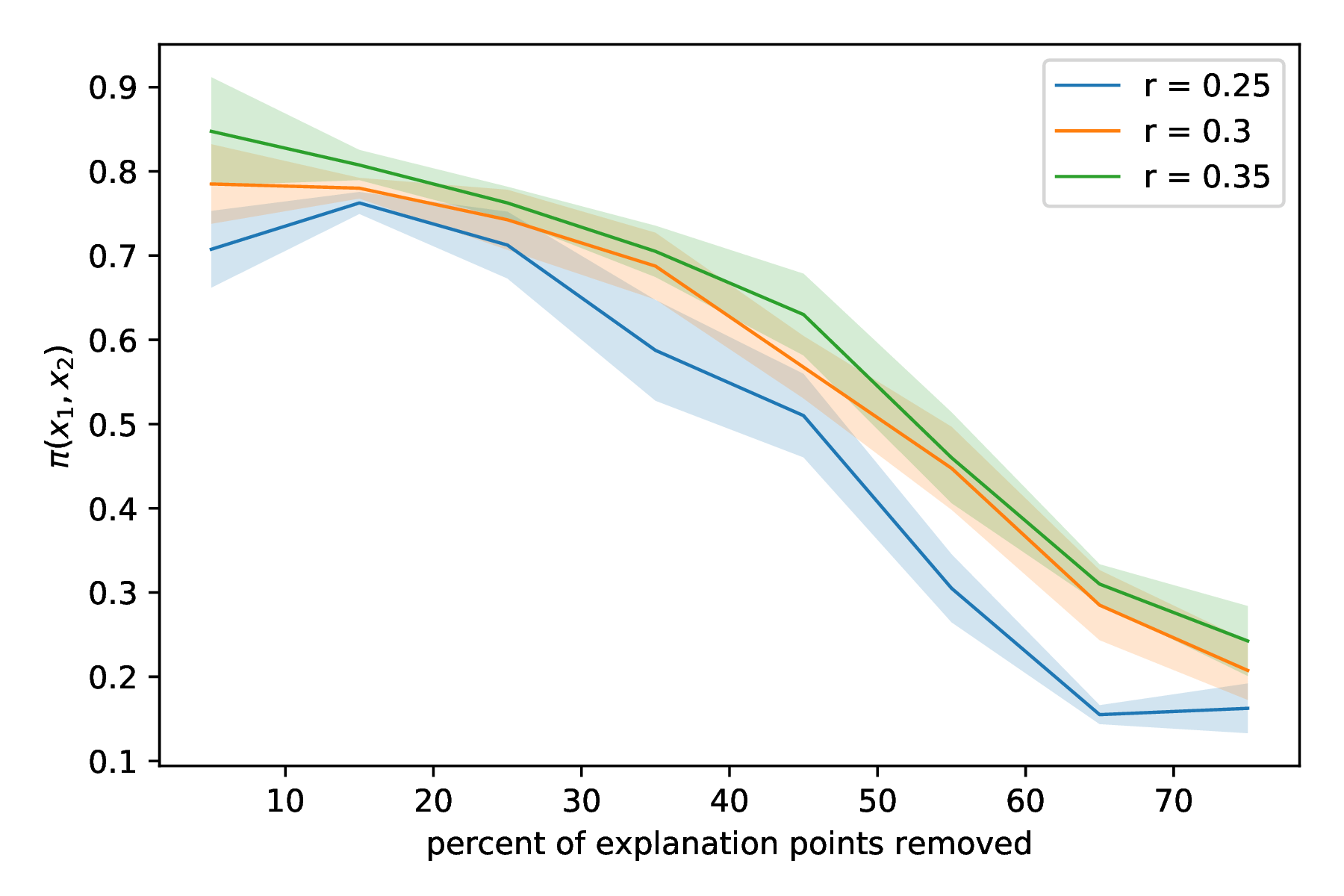}
  \end{subfigure}
  \begin{subfigure}[b]{.33\textwidth}
    \centering
    \includegraphics[width=\linewidth]{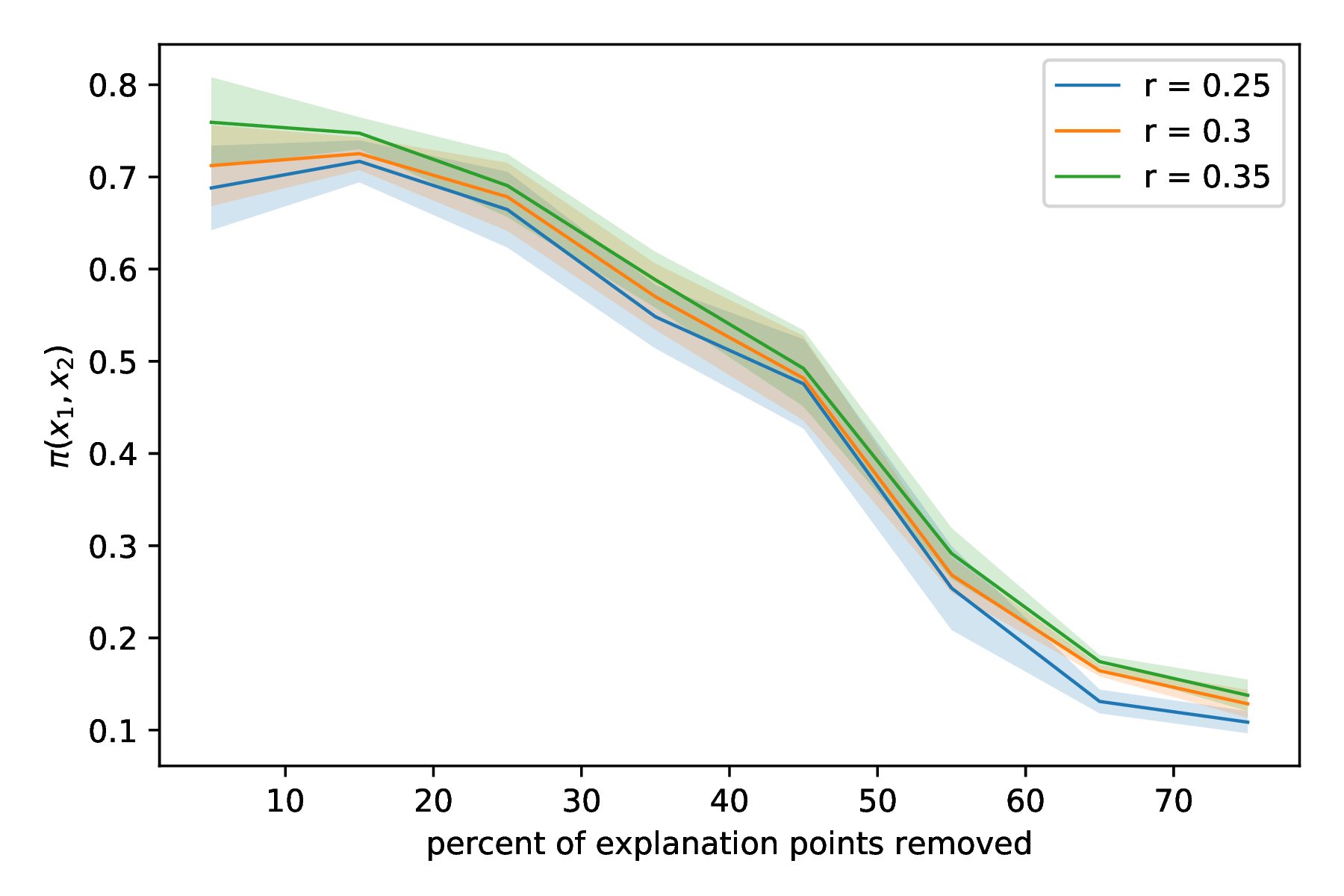}
  \end{subfigure}
  \begin{subfigure}[b]{.33\textwidth}
    \centering
    \includegraphics[width=\linewidth]{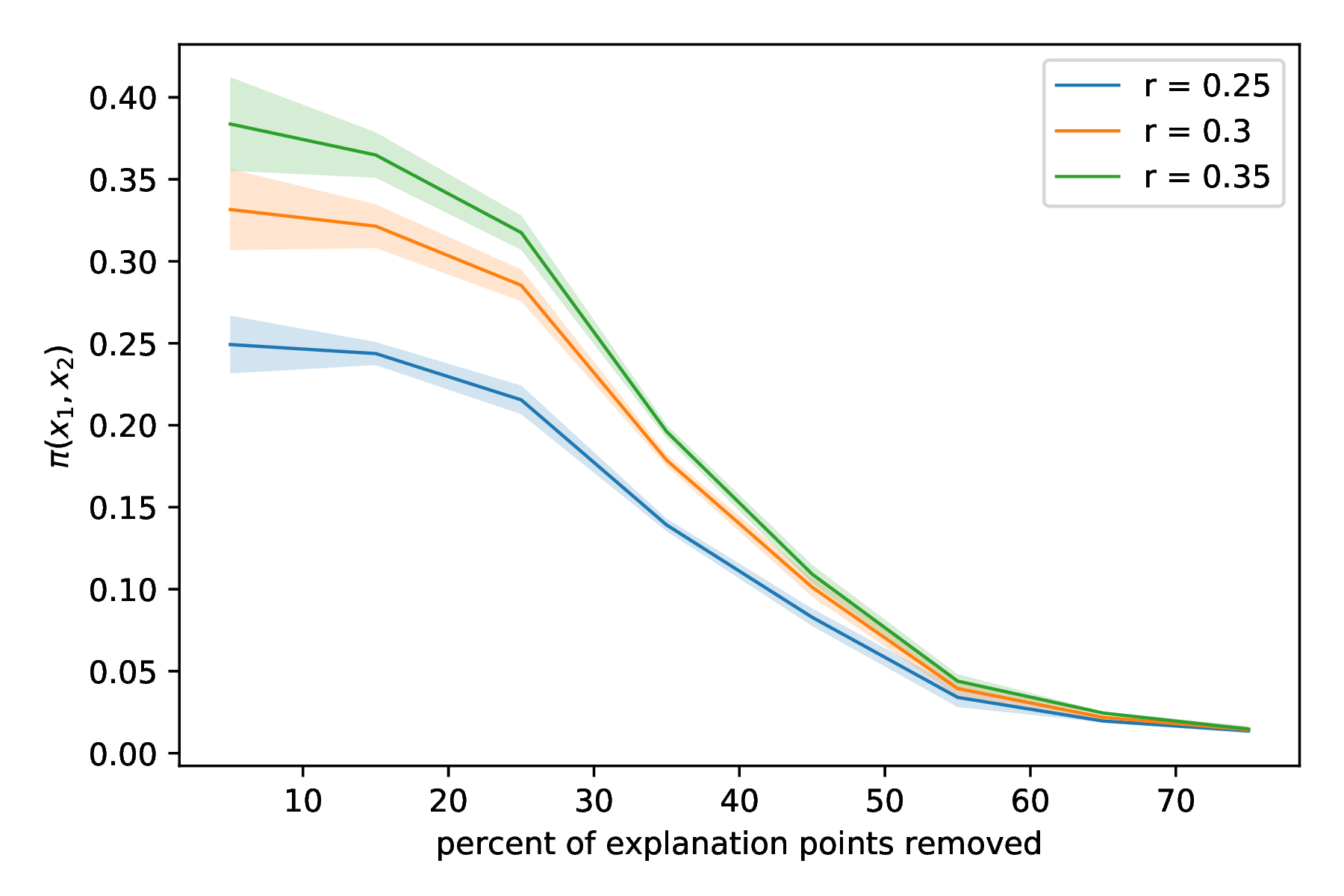}
  \end{subfigure}
  \caption{MLPs results with MMD-Critic explanations: max (left), top $5$ percentile average (middle), average $\pi(x,x')$ (right). We observe similar trends as in the $k$-medoid case with one difference being that the drop off rate is slower in the MMD-Critic case.}
  \label{fig:mmd_mlp}
\end{figure*}

\begin{figure}
  \begin{subfigure}[b]{.33\textwidth}
    \centering
    \includegraphics[width=\linewidth]{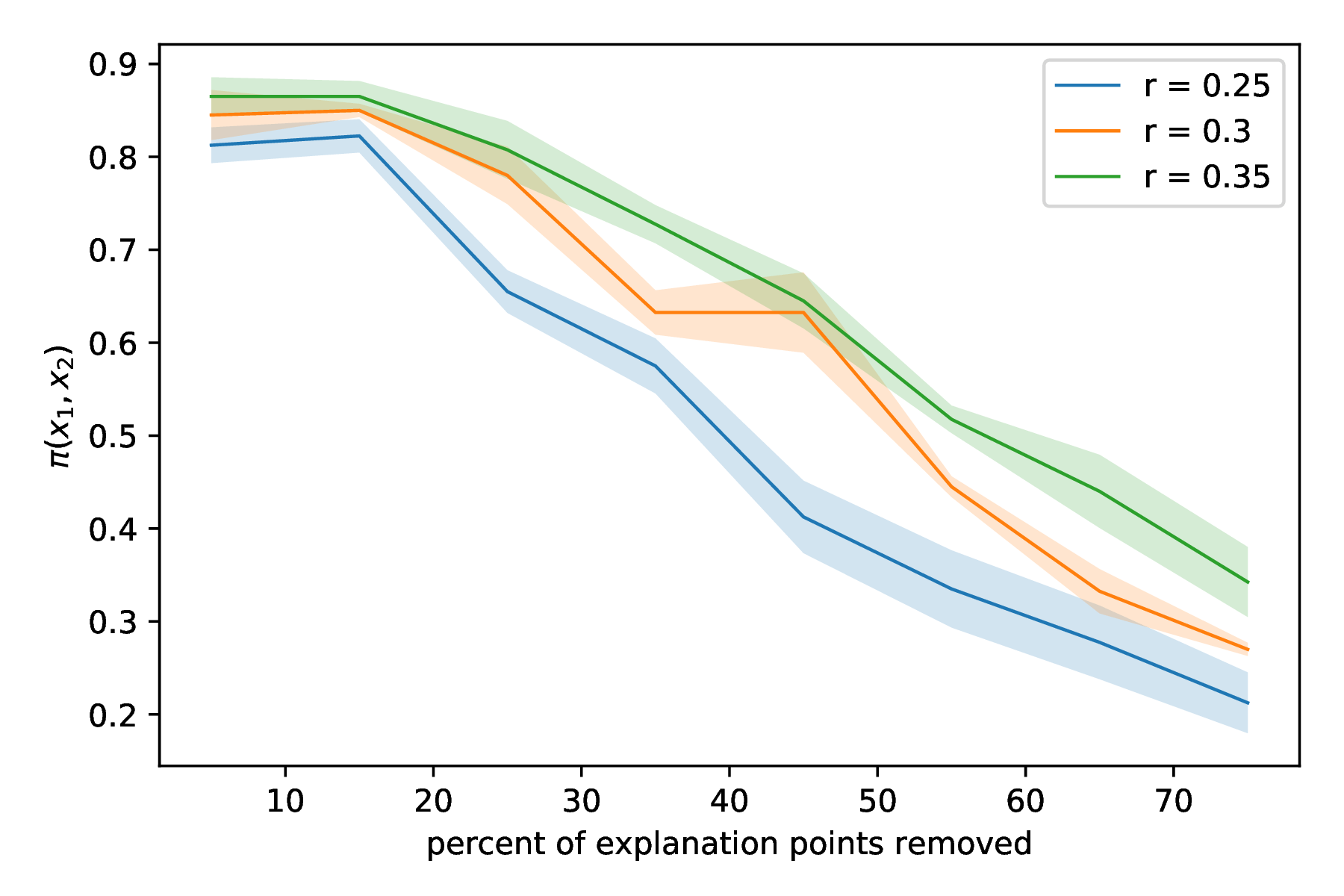}
  \end{subfigure}
  \begin{subfigure}[b]{.33\textwidth}
    \centering
    \includegraphics[width=\linewidth]{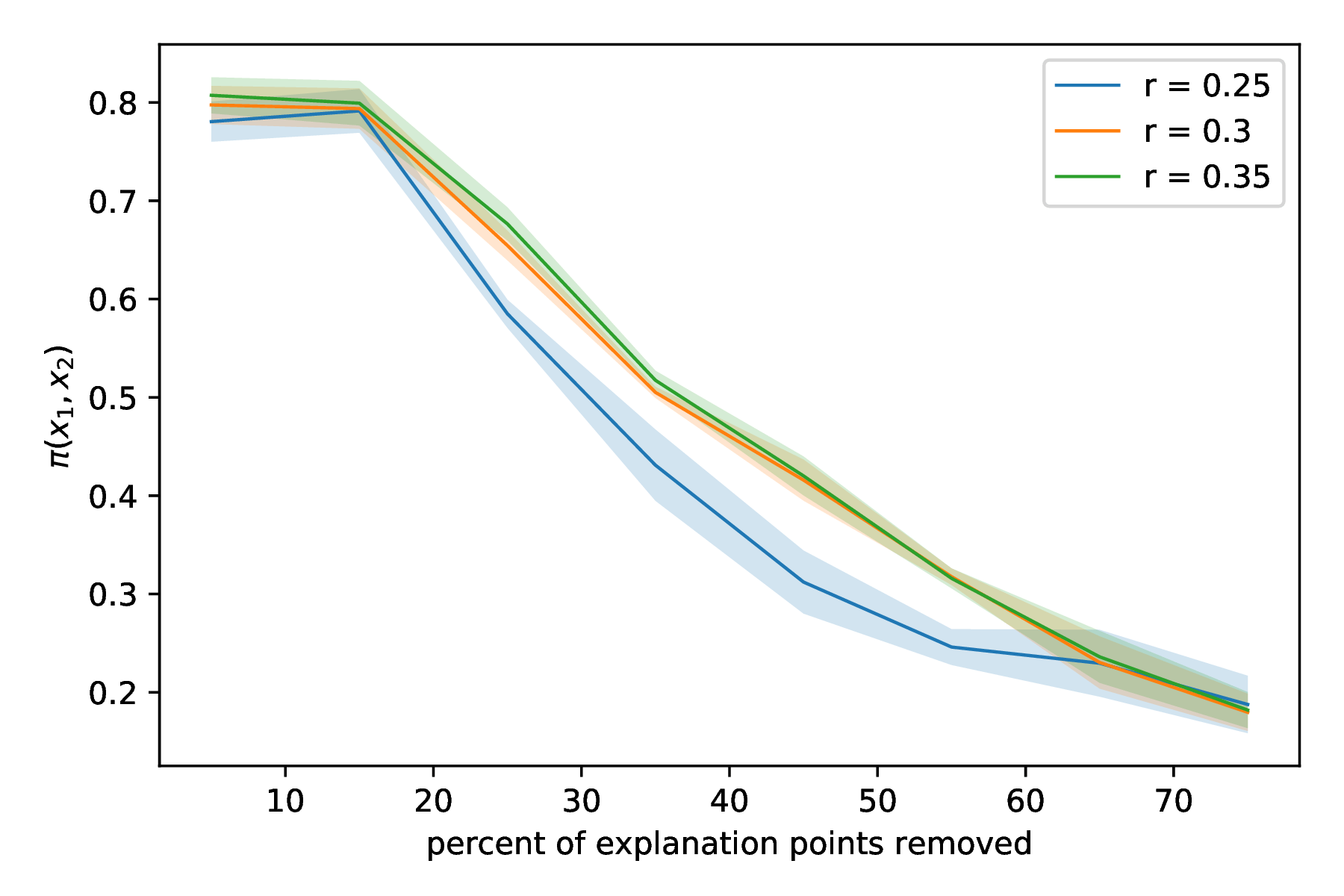}
  \end{subfigure}
  \begin{subfigure}[b]{.33\textwidth}
    \centering
    \includegraphics[width=\linewidth]{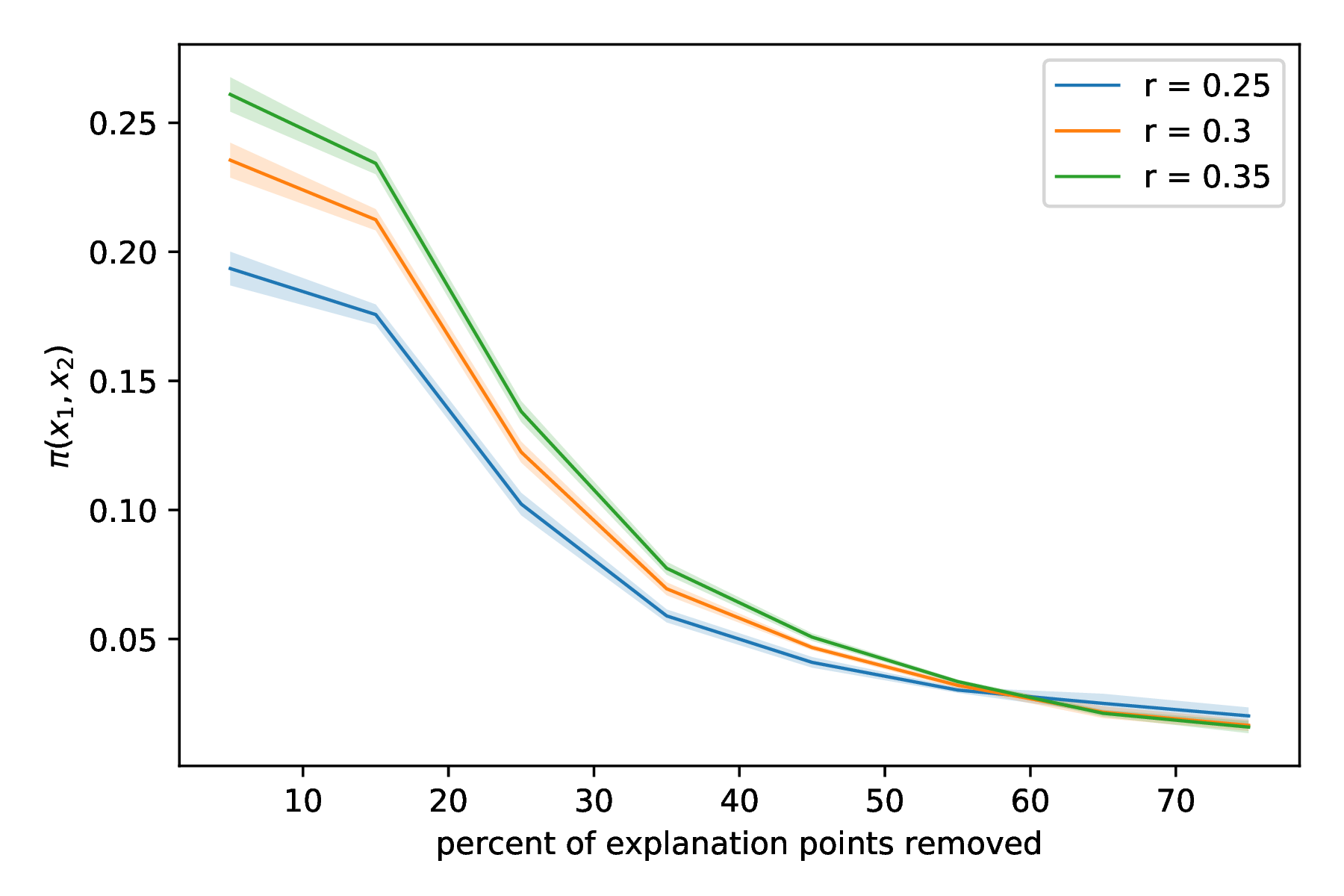}
  \end{subfigure}
  \smallskip
  \begin{subfigure}[b]{.33\textwidth}
    \centering
    \includegraphics[width=\linewidth]{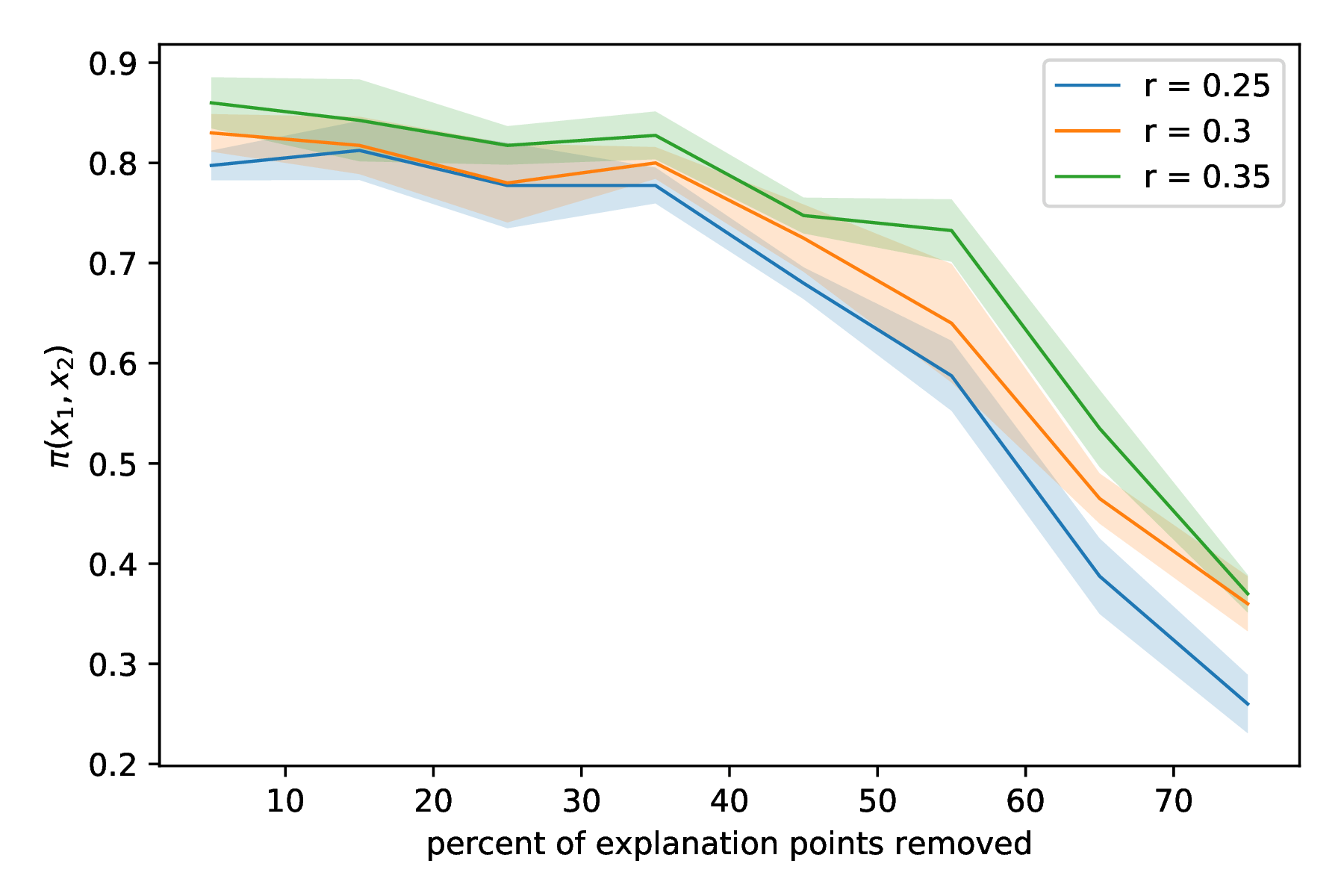}
  \end{subfigure}
  \begin{subfigure}[b]{.33\textwidth}
    \centering
    \includegraphics[width=\linewidth]{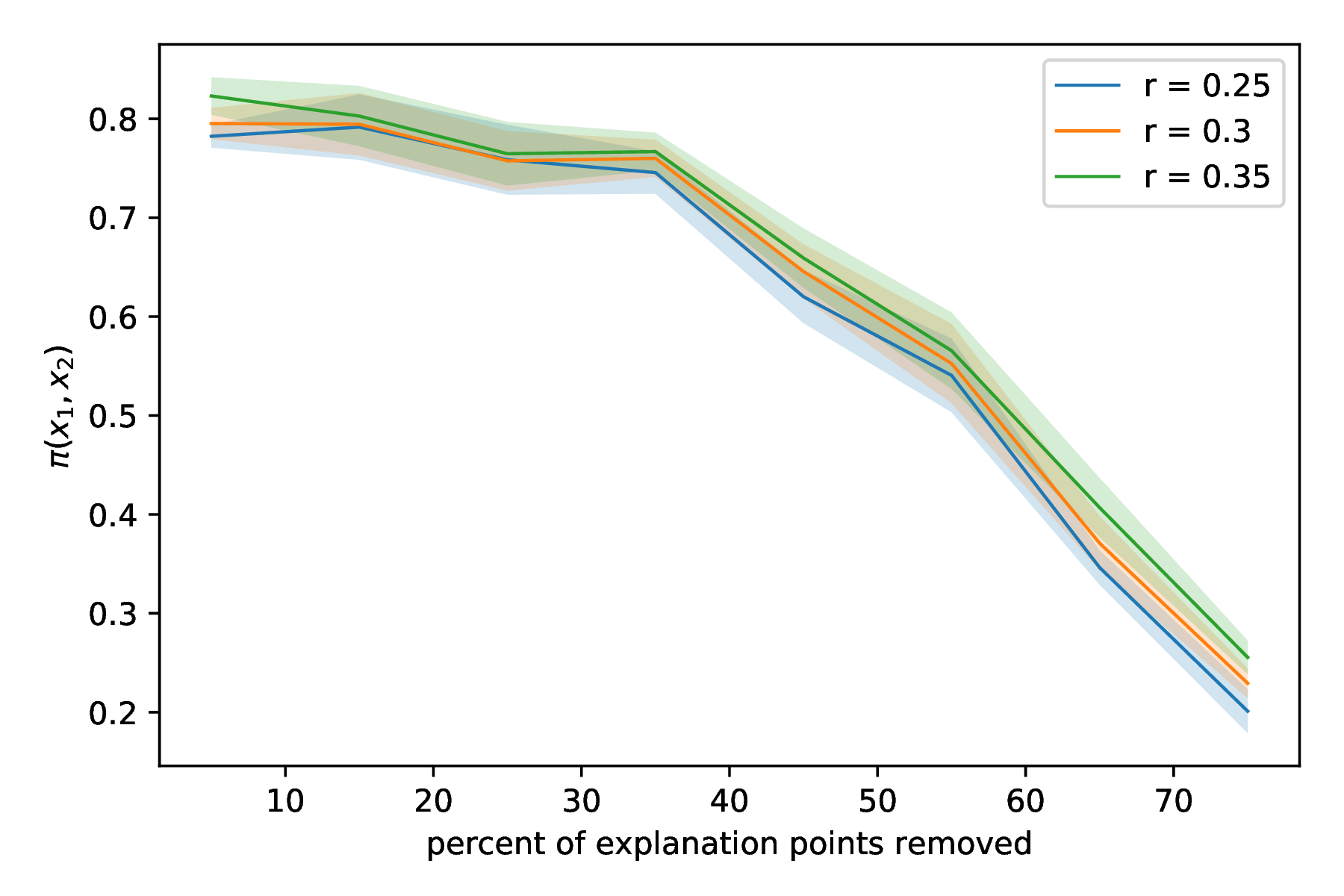}
  \end{subfigure}
  \begin{subfigure}[b]{.33\textwidth}
    \centering
    \includegraphics[width=\linewidth]{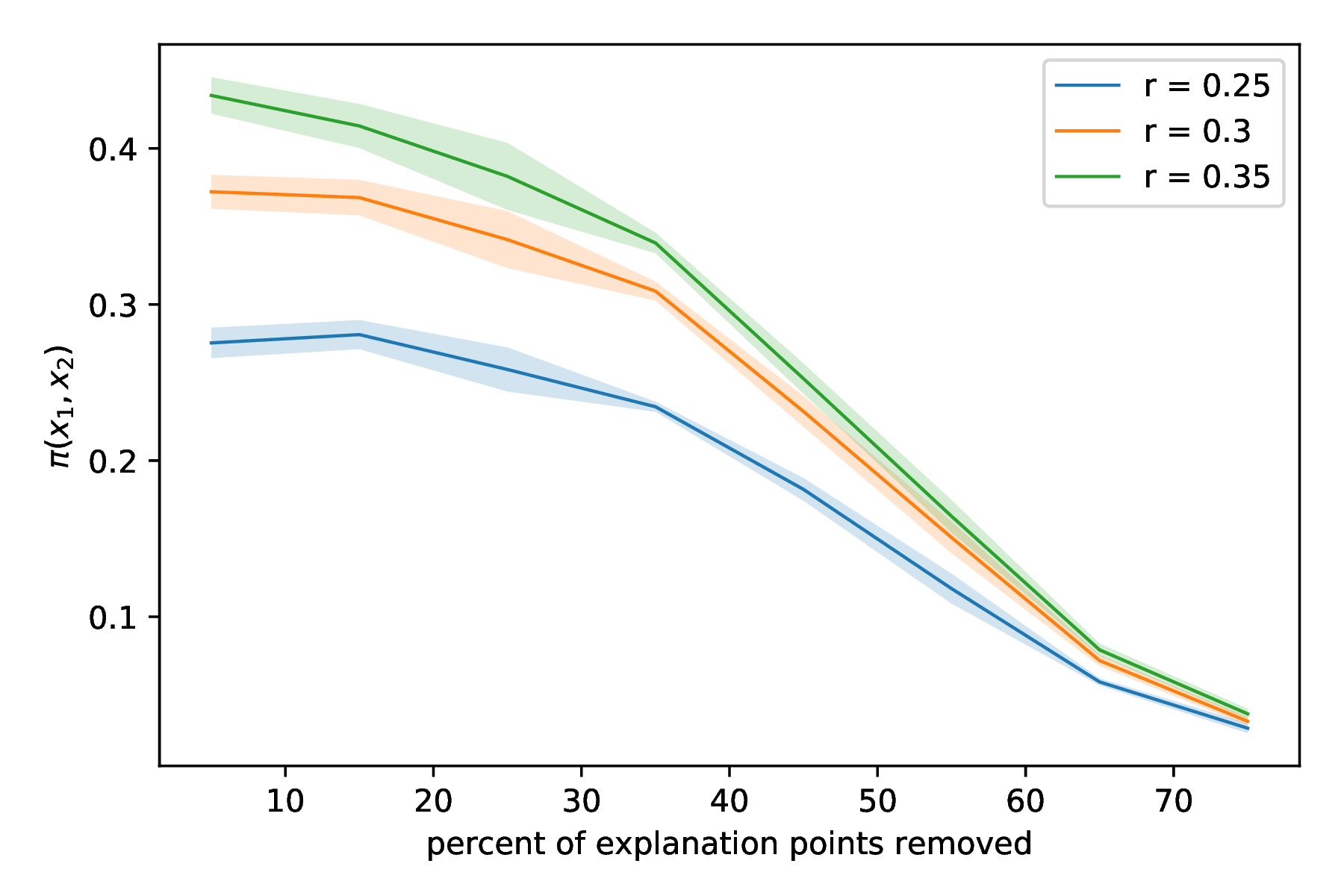}
  \end{subfigure}
  \caption{Two layer MLP results: under $k$-medoid explanations (top), under MMD explanations (bottom). The three metrics are in column: max (left), top $5$ percentile average (middle), average $\pi(x,x')$ (right).}
  \label{fig:mlp_bigger}
\end{figure}

\section{Additional Modeling Discussion}

One objection with our modeling assumption could be that if it is the case that most of the $\features$ is in $\margin$, then margin-distancing could remove most of the representative-based explanations $\explanation(\features)$. We assume this is not the case and that $\margin$ is only a small fraction of $\features$. 

Indeed, this assumes that the feature collection and modeling is done well and that most points are not within $r$ of another point with the opposite label. 

%\chicheng{Is there a need to mention this? I feel that the subsequent theoretical examples and experiments will clear up this concern.}
%\chicheng{Also, I feel that the quantity $r$ should only appear in test examples; it is a bit strange that here we also use $r$ when discussing training examples / explanations.}

% However, $\explanation(\features)$ is used differently by a strategic individual. Through their viewpoint, $\explanation(\features)$'s quality is the fidelity of models in $\consistent$ with respect to $h^*$ on $\margin$, as measured through average agreement $h \in \consistent$ with $h^*$. This \emph{dual-use} of $\explanation(\features)$ causes the need for a tradeoff. \chicheng{By ``dual usxe'', did you mean the use by strategic and non-strategic individuals, or to balance between desiderata 1) and 2)? For the first interpretation, I don't see a tradeoff - more explanation is beneficial for both types of individuals, from their viewpoints..}

\section{Additional Related Works}
\label{sec:add-related-works}

\textbf{Improvement vs Gaming:} A crucial point about feature alteration is whether to think of it as causal (beneficial) or gaming \citep{miller2020strategic}. In our setting, the organization first offers individuals transparency into how the model ``works'' and predicts based on the reported features. We assume individuals are not aware of the underlying causal model. Hence, we view misreporting in the first stage as gaming.

\textbf{Explanation Manipulation:} There has been work focusing on how organizations may manipulate an unfair model's explanation to make it look more fair than it actually is \cite{aivodji2019fairwashing, anders2020fairwashing, slack2020fooling}. By contrast, we study how to provide explanations that are informative and cover as much of $\features$ as possible while protecting boundary points' label information.

\textbf{Security of ML models:} Our work is also related to model extraction literature \cite{tramer2016stealing, milli2019model} that assumes one can query an API for model prediction/gradient-based explanation on any point. We view our work as a study on how to ``limit'' the API so as to prevent a new type of attack -- individual-level gaming, which need not require the full model extraction in order to carry out the attack \cite{jagielski2020high}.

\textbf{Model Multiplicity:} The set of models consistent with labelled data is also referred to as version space~\cite{mitchell1977version}. Our paper thus pertains to a recent line of work highlighting the existence of the ``Rashomon effect'' \cite{semenova2019study, d2020underspecification} or model multiplicity \cite{marx2020predictive}. These papers do not focus on strategic manipulation, but study or raise the importance of developing sampling algorithms that can explore the version space.

\end{document}

% --- supplement: supplement.tex ---

% If your paper is accepted and the title of your paper is very long,
% the style will print as headings an error message. Use the following
% command to supply a shorter title of your paper so that it can be
% used as headings.
%
%\runningtitle{I use this title instead because the last one was very long}

% If your paper is accepted and the number of authors is large, the
% style will print as headings an error message. Use the following
% command to supply a shorter version of the authors names so that
% they can be used as headings (for example, use only the surnames)
%
%\runningauthor{Surname 1, Surname 2, Surname 3, ...., Surname n}

% Supplementary material: To improve readability, you must use a single-column format for the supplementary material.
\onecolumn
\aistatstitle{Instructions for Paper Submissions to AISTATS 2022: \\
Supplementary Materials}

\section{FORMATTING INSTRUCTIONS}

To prepare a supplementary pdf file, we ask the authors to use \texttt{aistats2022.sty} as a style file and to follow the same formatting instructions as in the main paper.
The only difference is that the supplementary material must be in a \emph{single-column} format.
You can use \texttt{supplement.tex} in our starter pack as a starting point, or append the supplementary content to the main paper and split the final PDF into two separate files.

Note that reviewers are under no obligation to examine your supplementary material.

\section{MISSING PROOFS}

The supplementary materials may contain detailed proofs of the results that are missing in the main paper.

\subsection{Proof of Lemma 3}

\textit{In this section, we present the detailed proof of Lemma 3 and then [ ... ]}

\section{ADDITIONAL EXPERIMENTS}

If you have additional experimental results, you may include them in the supplementary materials.

\subsection{The Effect of Regularization Parameter}

\textit{Our algorithm depends on the regularization parameter $\lambda$. Figure 1 below illustrates the effect of this parameter on the performance of our algorithm. As we can see, [ ... ]}

\vfill